\newcommand{\infbound}{R^2}
\renewcommand{\vec}[1]{\mathbf{#1}}
\newcommand{\mat}[1]{\mathbf{#1}}
\newcommand{\R}{\mathbb{R}}
\renewcommand{\l}{l}
\newcommand{\w}{\vec{w}}
\renewcommand{\c}{c}
\newcommand{\xa}{\widehat{a}}
\newcommand{\ya}{\widehat{b}}
\newcommand{\za}{\widehat{c}}
\newcommand{\va}{\widehat{d}}
\newcommand{\alpp}{\alpha_1}
\newcommand{\betp}{\beta_1}
\newcommand{\gamp}{\gamma_1}
\newcommand{\delp}{\delta_1}
\newcommand{\iprod}[2]{\left\langle #1, #2 \right\rangle}
\newcommand{\E}[1]{\mathbb{E}\left[#1\right]}
\newcommand{\D}{\mathcal{D}}
\newcommand{\Cov}{\mat{H}}
\newcommand{\Hhat}{\widehat{\Cov}}
\newcommand{\eqdef}{\stackrel{\textrm{def}}{=}}
\newcommand{\cnS}{\widetilde{\kappa}}
\newcommand{\cnH}{{\kappa}}
\newcommand{\abs}[1]{\left|{#1}\right|}
\renewcommand{\H}{\mat{H}}
\newcommand{\phiv}{\boldsymbol{\Phi}}
\newcommand{\tensor}[1]{\mathcal{#1}}
\newcommand{\BT}{\tensor{B}}
\newcommand{\DT}{\tensor{D}}
\newcommand{\RT}{\tensor{R}}
\newcommand{\eyeT}{\tensor{I}}
\newcommand{\cone}{c_1}
\newcommand{\cthree}{c_3}
\newcommand{\thetav}{\boldsymbol{\theta}}
\newcommand{\Ah}{\widehat{\mat{A}}}
\DeclareMathOperator*{\poly}{poly}
\newcommand{\init}{k}
\renewcommand{\th}{\hat{t}}
\newcommand{\hb}{HB\xspace}
\newcommand{\nag}{NAG\xspace}
\newcommand{\sgd}{SGD\xspace}
\newcommand{\asgd}{ASGD\xspace}
\newcommand{\sfo}{stochastic first order oracle\xspace}
\newcommand{\sfoa}{SFO\xspace}
\newcommand{\cifar}{cifar-10\xspace}
\newcommand{\mnist}{mnist\xspace}
\newcommand{\gd}{GD\xspace}
\title{On the insufficiency of existing momentum schemes for Stochastic Optimization\footnote{Appeared as an oral presentation at International Conference on Learning Representations (ICLR), 2018, Vancouver, Canada.}}
\author[1]{Rahul Kidambi}
\author[2]{Praneeth Netrapalli}
\author[2]{Prateek Jain}
\author[1]{Sham M. Kakade}
\affil[1]{University of Washington, Seattle, WA, USA, \url{rkidambi@uw.edu},\ \url{sham@cs.washington.edu}}
\affil[2]{Microsoft Research, Bangalore, India,    \url{{praneeth,prajain}@microsoft.com}}
\date{}
\newtheorem{theorem}{Theorem}
\newtheorem{lemma}[theorem]{Lemma}
\newtheorem{corollary}[theorem]{Corollary}
\newtheorem{remark}[theorem]{Remark}
\newtheorem{proposition}[theorem]{Proposition}
\theoremstyle{definition}
\begin{document}

\maketitle

\begin{abstract}
Momentum based stochastic gradient methods such as heavy ball (\hb)
and Nesterov's accelerated gradient descent (\nag) method are widely
used in practice for training deep networks and other
supervised learning models, as they often provide significant
improvements over stochastic gradient descent (\sgd).
Rigorously speaking, ``fast gradient'' methods have provable improvements over
gradient descent only for the deterministic case, where the gradients
are exact. In the stochastic case, the popular explanations for their wide
applicability is that when these fast gradient methods are applied in
the stochastic case, they partially mimic their {\em exact} gradient
counterparts, resulting in some practical gain. This work provides a
counterpoint to this belief by {\em proving} that there exist simple
problem instances where these methods {\em cannot} outperform \sgd
despite the best setting of its parameters. These negative problem
instances are, in an informal sense, generic; they do not look like
carefully constructed pathological instances. These results suggest
(along with empirical evidence) that \hb or \nag's practical
performance gains are a by-product of mini-batching.

Furthermore, this work provides a viable (and provable) alternative,
which, on the same set of problem instances, significantly improves
over \hb, \nag, and \sgd's performance. This algorithm, referred to as 
Accelerated Stochastic Gradient Descent (\asgd), is a 
simple to implement stochastic algorithm, based on a relatively less
popular variant of Nesterov's Acceleration. Extensive empirical results in this paper show
that \asgd has performance
gains over \hb, \nag, and \sgd. The code implementing the \asgd Algorithm can be found \href{https://github.com/rahulkidambi/AccSGD}{here}\footnote{link to the \asgd code: \href{https://github.com/rahulkidambi/AccSGD}{https://github.com/rahulkidambi/AccSGD}}.

\end{abstract}

\section{Introduction}\label{sec:intro}
First order optimization methods, which access a function to be optimized through its gradient or an unbiased approximation of its gradient, are the workhorses for modern large scale optimization problems, which include training the current state-of-the-art deep neural networks. Gradient descent~\citep{cauchy1847} is the simplest first order method that is used heavily in practice. However, it is known that for the class of smooth convex functions  as well as some simple non-smooth problems~\citep{Nesterov12b}), gradient descent is suboptimal~\citep{Nesterov04} and there exists a class of algorithms called fast gradient/momentum based methods which achieve optimal convergence guarantees. The heavy ball method~\citep{polyak1964some} and Nesterov's accelerated gradient descent~\citep{nesterov1983method} are two of the most popular methods in this category. 

On the other hand, training deep neural networks on large scale datasets have been possible through the use of Stochastic Gradient Descent (\sgd)~\citep{RobbinsM51}, which samples a random subset of training data to compute gradient estimates that are then used to optimize the objective function. The advantages of \sgd for large scale optimization and the related issues of tradeoffs between computational and statistical efficiency was highlighted in~\citet{BottouB07}.

The above mentioned theoretical advantages of fast gradient methods~\citep{polyak1964some,nesterov1983method} (albeit for smooth convex problems) coupled with cheap to compute stochastic gradient estimates led to the influential work of~\citet{sutskever2013importance}, which demonstrated the empirical advantages possessed by \sgd when augmented with the momentum machinery. This work has led to widespread adoption of momentum methods for training deep neural nets; so much so that, in the context of neural network training, gradient descent often refers to momentum methods.

%Gradient descent is a classical algorithm for optimizing a function in the first order oracle model, where functions gradients can be accessed in a blackbox manner. However, it is well known that for the class of smooth convex functions, gradient descent is suboptimal and there exist another class of algorithms based on {\em momentum} term which provides optimal algorithms. Heavy ball method (\hb)~\cite{polyak1964some} and Nesterov's accelerated gradient descent (\nag)~\cite{nesterov1983method} are two of the most celebrated momentum based methods. Owing to the simplicity and efficiency of first order oracle algorithms, they are widely used in several large scale applications including the training of neural networks. While initial usage was mostly limited to gradient descent, the theoretical advantages mentioned above (albeit only for convex problems) coupled with demonstrable empirical advantages~\cite{sutskever2013importance} led to the wide adoption of momentum methods in training neural networks; so much so that as of today, in the context of neural network training, gradient descent, by default, refers to momentum methods e.g., default parameters in packages such as TensorFlow, PyTorch, CNTK etc.

But, there is a subtle difference between classical momentum methods and their implementation in practice -- classical momentum methods work in the exact first order oracle model~\citep{Nesterov04}, i.e., they employ \emph{exact gradients} (computed on the full training dataset), while in practice~\citep{sutskever2013importance}, they are implemented with \emph{stochastic gradients} (estimated from a randomly sampled mini-batch of training data). This leads to a natural question:

 {\em ``Are momentum methods optimal even in the {\bf stochastic first order oracle} (SFO) model, where we access stochastic gradients computed on a small constant sized minibatches (or a batchsize	 of $1$?)''}

Even disregarding the question of optimality of momentum methods in the \sfoa model, it is not even known if momentum methods (say,~\citet{polyak1964some,nesterov1983method}) provide \emph{any provable} improvement over \sgd in this model. While these are open questions, a recent effort of~\cite{jain2017accelerating} showed that improving upon~\sgd~(in the~\sfo) is rather subtle as there exists problem instances in \sfoa model where it is not possible to improve upon \sgd, even information theoretically. \citet{jain2017accelerating} studied a variant of Nesterov's accelerated gradient updates~\citep{Nesterov12} for stochastic linear regression and show that their method improves upon~\sgd~wherever it is information theoretically admissible. Through out this paper, we refer to the algorithm of~\citet{jain2017accelerating} as Accelerated Stochastic Gradient Method (\asgd) while we refer to a stochastic version of the most widespread form of Nesterov's method~\citep{nesterov1983method} as \nag; \hb denotes  a stochastic version of the heavy ball method~\citep{polyak1964some}. Critically, while \citet{jain2017accelerating} shows that \asgd improves on \sgd in any information-theoretically admissible regime, it is still not known whether \hb and \nag can achieve a similar performance gain. 

A key contribution of this work is to show that \hb\ {\em does not} provide similar performance gains over \sgd even when it is informationally-theoretically admissible. That is, we provide a problem instance where it is indeed possible to improve upon \sgd (and \asgd achieves this improvement), but \hb\ {\em cannot} achieve any improvement over \sgd. We validate this claim empirically as well. In fact, we provide empirical evidence to the claim that \nag\ also do not achieve any improvement over \sgd\ for several problems where \asgd can still achieve better rates of convergence.

This raises a question about why \hb and \nag provide better performance than \sgd in practice \citep{sutskever2013importance}, especially for training deep networks. Our conclusion (that is well supported by our theoretical result) is that \hb and \nag's improved performance is attributed to mini-batching and hence, these methods will often struggle to improve over \sgd with small constant batch sizes. %Note that this result indicates that there is a natural tension between the gains offered by momentum methods with larger batches and the superior generalization properties offered by training with small mini-batches~\citep{keskar2016largebatch}, which is a regime that may not be amenable for \hb/\nag to improve upon \sgd. 
This is in stark contrast to methods like \asgd, which is designed to improve over \sgd across both small or large mini-batch sizes. In fact, based on our experiments, we observe that on the task of training deep residual networks~\citep{kaiminghe2016identity} on the \cifar dataset, we note that \asgd offers noticeable improvements by achieving $5-7\%$ better test error over \hb and \nag even with commonly used batch sizes like $128$ during the initial stages of the optimization. %\vspace*{-2mm}%The code for implementing the \asgd procedure can be found \href{https://github.com/rahulkidambi/AccSGD}{here}.

\subsection{Contributions}
The contributions of this paper are as follows.
\begin{enumerate}%[wide=0pt, leftmargin=\dimexpr\labelwidth + 2\labelsep\relax]
	\item  In Section~\ref{sec:hb}, we prove that~\hb~is \emph{not optimal} in the~\sfoa~model. In particular, there exist linear regression problems for which the performance of~\hb~(with \emph{any} step size and momentum) is either the same or worse than that of~\sgd~while~\asgd~ improves upon both of them.
	\item Experiments on several linear regression problems suggest that the suboptimality of \hb in the~\sfoa model is not restricted to special cases -- it is rather widespread. Empirically, the same holds true for~\nag~as well (Section~\ref{sec:exp}). 
	\item The above observations suggest that the only reason for the superiority of momentum methods in practice is mini-batching, which reduces the variance in stochastic gradients and moves the \sfoa closer to the exact first order oracle. This conclusion is supported by empirical evidence through training deep residual networks on~\cifar, with a batch size of $8$ (see Section~\ref{sec:cifar}).
	\item We present an intuitive and easier to tune version of \asgd (see Section~\ref{sec:algo}) and show that \asgd can provide significantly faster convergence to a reasonable accuracy than ~\sgd,~\hb,~\nag, while still providing favorable or comparable asymptotic accuracy as these methods, particularly on several deep learning problems. %The results demonstrate superiority of~\asgd~over other methods across different batch sizes.
\end{enumerate}
Hence, the take-home message of this paper is: {\em \hb~and~\nag~are not optimal in the~\sfoa model. The only reason for the superiority of momentum methods in practice is mini-batching.~\asgd~provides a distinct advantage in training deep networks over~\sgd,~\hb~and~\nag.}
%\vspace*{-2mm}
%{\bf Paper Organization}:  
%Section~\ref{sec:hb} shows sub-optimality of \hb and Section~\ref{sec:algo} presents an intuitive and easy to implement version of \asgd. Section~\ref{sec:exp} presents empirical results for linear regression as well as for training deep networks for two different tasks. We then present a few related works in Section~\ref{sec:related}. Finally, we present our conclusions in Section~\ref{sec:disc}. 

\section{Notation}\label{sec:notation}
We denote matrices by bold-face capital letters and vectors by lower-case letters. $f(w)=1/n\sum_i f_i(w)$ denotes the function to optimize w.r.t. model parameters $w$. $\nabla f(w)$ denotes exact  gradient of $f$ at $w$ while $\widehat{\nabla} f_t(w)$ denotes a stochastic gradient of $f$. That is, $\widehat{\nabla} f_t(w_t)=\nabla f_{i_t}(w)$ where $i_t$ is sampled uniformly at random from $[1,\dots, n]$. For linear regression, $f_i(w)=0.5\cdot(b_i-\iprod{w}{a_i})^2$ where $b_i\in\Re$ is the target and $a_i\in \Re^d$ is the covariate, and $\widehat{\nabla} f_t(w_t)=-(b_t-\iprod{w_t}{a_t})a_t$. In this case, $\H=\E{aa^{\top}}$ denotes the Hessian of $f$ and $\kappa=\tfrac{\lambda_1(\H)}{\lambda_d(\H)}$ denotes it's condition number. %\hb denotes stochastic version of the Heavy Ball method (see Algorithm~\ref{alg:hb}), \nag denotes stochastic version of Nesterov's accelerated gradient descent method (see Algorithm~\ref{alg:nag}), \sgd denotes stochastic gradient descent, and finally \asgd denotes our version of stochastic accelerated method proposed by \cite{jain2017accelerating}.

Algorithm~\ref{alg:hb} provides a pseudo-code of \hb method~\citep{polyak1964some}. $w_t-w_{t-1}$ is the momentum term and $\alpha$ denotes the momentum parameter. Next iterate $w_{t+1}$ is obtained by a linear combination of the \sgd update and the momentum term. Algorithm~\ref{alg:nag} provides pseudo-code of a stochastic version of the most commonly used form of Nesterov's accelerated gradient descent~\citep{nesterov1983method}.

\begin{figure}%\vspace*{-5pt}
	\begin{minipage}{.5\textwidth}
		\begin{algorithm}[H]
			\caption{\hb: Heavy ball with a~\sfoa}
			\label{alg:hb}
			\begin{algorithmic}[1]
				\REQUIRE Initial  $w_0$, stepsize $\delta$, momentum $\alpha$
				\STATE	${w}_{-1} \leftarrow w_0$; $t\leftarrow 0$\hfill
				/*Set $w_{-1}$ to $w_0$*/
				%\STATE $t \leftarrow 0$
				\WHILE{$w_t$ not converged}
				\STATE	$w_{t+1}\leftarrow w_t - \delta \cdot \widehat{\nabla} f_t(w_t) + \alpha \cdot \left({w}_{t}-w_{t-1}\right)$\hfill /*Sum of stochastic gradient step and momentum*/
				\STATE $t\leftarrow t+1$
				\ENDWHILE
				\ENSURE $w_t$ \hfill /*Return the last iterate*/
			\end{algorithmic}
		\end{algorithm}
	\end{minipage}\hspace*{5pt}
	\begin{minipage}{.5\textwidth}
		\begin{algorithm}[H]
			\caption{\nag: Nesterov's AGD with a~\sfoa}
			\label{alg:nag}
			\begin{algorithmic}[1]
				\REQUIRE Initial  $w_0$, stepsize $\delta$, momentum $\alpha$
				\STATE	${v}_0 \leftarrow w_0$; $t\leftarrow 0$ \hfill
				/*Set $v_0$ to $w_0$*/
				%\STATE $t \leftarrow 0$
				\WHILE{$w_t$ not converged}
				\STATE	$v_{t+1}\leftarrow w_t - \delta \cdot \widehat{\nabla} f_t(w_t) $ /*SGD step*/
				\STATE $w_{t+1}=(1+\alpha)v_{t+1}-\alpha v_t $\hfill /*Sum of SGD step and previous iterate*/
				\STATE $t\leftarrow t+1$
				\ENDWHILE
				\ENSURE $w_t$ \hfill /*Return the last iterate*/
			\end{algorithmic}
		\end{algorithm}
	\end{minipage}
\end{figure}
%\vspace*{-2mm}

\section{Suboptimality of Heavy Ball Method}
\label{sec:hb}
In this section, we show that there exists linear regression problems where the performance of~\hb~ (Algorithm~\ref{alg:hb}) is no better than that of~\sgd, while~\asgd~significantly improves upon \sgd's performance.% Algorithm~\ref{alg:hb} provides pseudocode of~\hb~with~\sfo. 
~Let us now describe the problem instance.

Fix $w^*\in \R^2$ and let $(a,b)\sim\D$ be a sample from the distribution such that: 
\begin{align*}
	a = \begin{cases}
	\sigma_1 \cdot z \cdot e_1 \mbox{ w.p. } 0.5 \\
	\sigma_2 \cdot z \cdot e_2 \mbox{ w.p. } 0.5,
	\end{cases} \qquad \mbox{ and } \qquad b = \iprod{w^*}{a},
\end{align*}
where $e_1,e_2\in \R^2$ are canonical basis vectors, $\sigma_1 > \sigma_2 > 0$. Let $z$ be a random variable such that $\E{z^2}=2$ and $\E{z^4} = 2\c \geq 4$. Hence, we have: $\E{(a^{(i)})^2}=\sigma_i^2, \E{(a^{(i)})^4}=\c\sigma_i^4,$ for $i=1,2$. Now, our goal is to minimize: $$f(w) \eqdef 0.5\cdot\E{\left(\iprod{w^*}{a}-b\right)^2}, \text{  Hessian  } \H \eqdef \E{aa^\top} = \begin{bmatrix}
\sigma_1^2 & 0 \\ 0 & \sigma_2^2
\end{bmatrix}.$$ 
%In $t^{\textrm{th}}$ iteration, given a sample $(x_t,y_t)$, the stochastic gradient is given by $\widehat{\nabla}f_t\left(w_t\right) = \left(\iprod{w_t}{x_t} - y_t\right) x_t =  \iprod{w_t-w^*}{x_t} x_t$. The Hessian of $f(\cdot)$ is given by: $\H \eqdef \E{xx^\top} = \begin{bmatrix}
%\sigma_1^2 & 0 \\ 0 & \sigma_2^2
%\end{bmatrix}$. 
Let $\kappa$ and $\tilde{\kappa}$ denote the {\em computational} and {\em statistical} condition numbers -- see~\cite{jain2017accelerating} for definitions. For the problem above, we have $\kappa = \frac{c \sigma_1^2}{\sigma_2^2}$ and $\tilde{\kappa} = c$. Then we obtain following convergence rates for \sgd and \asgd when applied to the above given problem instance: %As, $\E{\norm{x}^2 xx^\top} \preceq 2c\sigma_1^2 \cdot \H$, the condition number $\kappa= \frac{2c \sigma_1^2}{\sigma_2^2}$. Furthermore, we note that $\E{\left(x^\top \H^{-1} x\right) xx^\top} \preceq 2c \cdot \H$. We recall the following convergence rates of~\sgd~and~\asgd.
\begin{corollary}[of Theorem~$1$ of~\cite{jain2016parallelizing}]\label{cor:sgd}
	Let $w_t^{\sgd}$ be the $t^\textrm{th}$ iterate of~\sgd~on the above problem with starting point $w_0$ and stepsize $\frac{1}{c\sigma_1^2}$. The error of $w_t^{\sgd}$ can be bounded as,
	\begin{align*}
		\E{f\left({w_t^{\sgd}}\right)}-f\left(w_*\right) \leq \exp\left(\frac{-t}{\kappa}\right) 		\big(f\left({w_0}\right)-f\left(w_*\right)\big).
	\end{align*}
\end{corollary}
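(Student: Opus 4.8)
The plan is to prove the rate directly from the coordinate-separable structure of this instance, so that the corollary falls out by specializing the contraction factor; one could equally verify the hypotheses of Theorem~1 of~\cite{jain2016parallelizing} and substitute, but the direct route is transparent here. Set $\theta_t \eqdef w_t^{\sgd} - w_*$. Because $b_t = \iprod{w_*}{a_t}$ in this noiseless instance, the residual equals $-\iprod{\theta_t}{a_t}$ and the stochastic gradient is $\widehat{\nabla} f_t(w_t) = (a_t a_t^\top)\theta_t$, so the update collapses to the multiplicative recursion $\theta_{t+1} = (\Id - \delta\, a_t a_t^\top)\theta_t$. The key structural fact is that $a_t$ is always a scalar multiple of a single basis vector, hence $a_t a_t^\top$ is diagonal and $\Id - \delta a_t a_t^\top$ rescales exactly one coordinate of $\theta_t$ per step while fixing the other.

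I would then track the two diagonal second moments $u_t \eqdef \E{(\theta_t^{(1)})^2}$ and $v_t \eqdef \E{(\theta_t^{(2)})^2}$. Since each step multiplies $\theta_t^{(i)}$ by a fresh random scalar that is independent of $\theta_t$, these two recursions close on themselves without ever involving the cross moment; and because $\H$ is diagonal, $f(w_t)-f(w_*) = \tfrac12\big(\sigma_1^2 (\theta_t^{(1)})^2 + \sigma_2^2 (\theta_t^{(2)})^2\big)$ is determined by $u_t$ and $v_t$ alone. Conditioning on whether $e_1$ or $e_2$ is sampled (each with probability $\tfrac12$) and using $\E{z^2}=2$, $\E{z^4}=2c$, the relevant per-coordinate factor is $\Eover{z}{(1-\delta\sigma_i^2 z^2)^2} = 1 - 4\delta\sigma_i^2 + 2c\,\delta^2\sigma_i^4$. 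Substituting $\delta = 1/(c\sigma_1^2)$ gives the clean linear recursions $u_{t+1} = (1 - 1/c)\,u_t$ and $v_{t+1} = (1 - r(2-r)/c)\,v_t$ with $r \eqdef \sigma_2^2/\sigma_1^2 \in (0,1)$.

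To finish, I would identify the slower mode and convert to the stated exponential. On $(0,1)$ the quantity $r(2-r)$ is increasing and equals $1$ at $r=1$, so $r(2-r)<1$, meaning coordinate $2$ decays strictly more slowly than coordinate $1$; bounding $u_t$'s rate by $v_t$'s then yields $\E{f(w_t)}-f(w_*) \leq (1 - r(2-r)/c)^t(f(w_0)-f(w_*))$. Applying $(1-x)^t\leq\exp(-xt)$ together with $r(2-r)/c \geq r/c = 1/\kappa$ (using $2-r\geq 1$) delivers the claimed $\exp(-t/\kappa)$ prefactor.

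The calculations are elementary, and I expect the only genuine subtlety to be recognizing that the diagonal, single-coordinate-at-a-time structure lets me close the scalar recursions for $u_t$ and $v_t$ without carrying the full $2\times 2$ second-moment matrix (in particular, the cross moment never enters). I would also verify stability of the chosen stepsize: since $\E{z^4}=2c\geq 4$ forces $c\geq 2$, the factor $1 - 2/c$ lies in $[0,1)$, so both contraction factors are genuinely in $[0,1)$ and the second-moment recursions contract as required. The slight slack in replacing $r(2-r)/c$ by $1/\kappa$ is precisely what makes the stated rate hold with this clean constant.
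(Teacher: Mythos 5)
Your argument is correct, but it is not the route the paper takes: the paper offers no computation at all here, simply invoking Theorem~1 of \citet{jain2016parallelizing} as a black box and specializing its rate to the instance at hand (for which $\kappa = c\sigma_1^2/\sigma_2^2$). Your direct derivation is a legitimate and more self-contained alternative. The structural observations are right: in the noiseless model $\theta_{t+1} = (\Id - \delta\, a_t a_t^\top)\theta_t$ with $a_ta_t^\top$ supported on a single coordinate, so the two diagonal second moments close on themselves, and since $\H$ is diagonal the excess risk never sees the cross moment. The arithmetic also checks out: the unconditional per-step multiplier for coordinate $i$ is $\tfrac12 + \tfrac12\,\E{(1-\delta\sigma_i^2 z^2)^2} = 1 - 2\delta\sigma_i^2 + c\delta^2\sigma_i^4$, which at $\delta = 1/(c\sigma_1^2)$ gives exactly your $1-1/c$ and $1 - r(2-r)/c$ (your phrase ``per-coordinate factor'' for the conditional expectation alone is slightly loose, but your final recursions already include the $\tfrac12$ averaging, so this is cosmetic). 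The comparison $r(2-r) \le 1$ with $r(2-r)/c \ge r/c = 1/\kappa$, and the stability check via $c \ge 2$, complete the bound. What your approach buys is transparency and a marginally sharper intermediate statement, namely the exact contraction $\left(1-\tfrac{r(2-r)}{c}\right)^t$ before relaxing to $\exp(-t/\kappa)$; what the paper's citation buys is generality, since Theorem~1 of \citet{jain2016parallelizing} covers arbitrary least-squares distributions rather than this decoupled two-dimensional instance.
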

On the other hand,~\asgd~achieves the following superior rate.
\begin{corollary}[of Theorem~$1$ of~\cite{jain2017accelerating}]\label{cor:asgd}
	Let $w_t^{\asgd}$ be the $t^\textrm{th}$ iterate of~\asgd~on the above problem with starting point $w_0$ and appropriate parameters. The error of $w_t^{\asgd}$ can be bounded as,%\vspace*{-5pt}
	\begin{align*}
	\E{f\left({w_t^{\asgd}}\right)}-f\left(w_*\right) \leq \poly(\kappa) \exp\left(\frac{-t}{\sqrt{\kappa\tilde{\kappa}}}\right) \big(f\left({w_0}\right)-f\left(w_*\right)\big).
	\end{align*}
\end{corollary}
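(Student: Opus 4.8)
The plan is to obtain the bound as a direct instantiation of the general \asgd guarantee (Theorem~1 of~\cite{jain2017accelerating}) to the two-dimensional instance defined above, so that the only real work is to (i) check that the instance satisfies the hypotheses of that theorem and (ii) evaluate the two condition numbers $\kappa$ and $\tilde\kappa$ that enter the generic rate. Recall that the cited theorem applies to stochastic linear regression whenever the covariate $a$ has a finite fourth moment and the Hessian $\H=\E{a a\T}$ is positive definite, and it asserts that \asgd, run with step sizes and momentum parameters tuned as functions of $(\kappa,\tilde\kappa)$, contracts the function error geometrically at rate $\exp(-t/\sqrt{\kappa\tilde\kappa})$ up to a $\poly(\kappa)$ prefactor, plus an additive term governed by the gradient noise at the optimum. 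First I would verify the hypotheses: finiteness of the fourth moment is immediate from $\E{z^4}=2\c<\infty$, and $\H=\mathrm{diag}(\sigma_1^2,\sigma_2^2)$ is positive definite since $\sigma_1>\sigma_2>0$.

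A crucial simplification is that the instance is \emph{noiseless}: since $b=\iprod{w^*}{a}$, the stochastic gradient $\widehat\nabla f_t(w)=\iprod{w-w^*}{a_t}\,a_t$ vanishes at $w=w^*$. Hence the gradient-noise (variance) term in the general bound is identically zero and only the geometric bias term survives; this is exactly why the corollary has no additive floor. I would state this reduction explicitly, so that the generic two-term bound collapses to the single exponential factor claimed.

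The main computational content is evaluating the two condition numbers for this instance. For the \emph{statistical} condition number I would compute the whitened fourth-moment operator $\H^{-1/2}\,\E{(a\T\Hinv a)\,a a\T}\,\H^{-1/2}$. Using that $a$ is supported on the two coordinate axes and $\E{z^4}=2\c$, a direct calculation gives $\E{(a\T\Hinv a)\,a a\T}=\c\,\H$, so the whitened operator equals $\c\,\Id$ and therefore $\tilde\kappa=\c$. For the \emph{computational} condition number I would compute $\E{\norm{a}^2\,a a\T}=\c\,(\sigma_1^4\,e_1 e_1\T+\sigma_2^4\,e_2 e_2\T)$ and bound it by $R^2\H$ with the smallest admissible $R^2=\c\,\sigma_1^2$ (using $\sigma_1>\sigma_2$); dividing by $\lambda_{\min}(\H)=\sigma_2^2$ yields $\kappa=\c\,\sigma_1^2/\sigma_2^2$. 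Substituting $\tilde\kappa=\c$ and $\kappa=\c\,\sigma_1^2/\sigma_2^2$ into the generic rate produces the stated bound.

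There is no deep obstacle, since the heavy lifting is done by the cited theorem; the one place to be careful is to match precisely the definitions of the computational and statistical condition numbers used in~\cite{jain2017accelerating} (in particular, whether $\kappa$ is defined as $R^2/\lambda_{\min}(\H)$ rather than as an eigenvalue of the whitened operator, and whether $\tilde\kappa$ is defined through the whitened fourth-moment operator or an equivalent covariance-of-gradients quantity), since the clean identities $\E{(a\T\Hinv a)\,a a\T}=\c\,\H$ and the value $R^2=\c\,\sigma_1^2$ must be plugged into exactly the expressions the theorem expects. Confirming the vanishing of the variance term under realizability is the only conceptual step beyond routine moment bookkeeping.
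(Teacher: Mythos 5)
Your proposal is correct and follows essentially the same route as the paper, which offers no separate proof beyond citing Theorem~1 of~\cite{jain2017accelerating} and recording $\kappa = c\sigma_1^2/\sigma_2^2$ and $\tilde{\kappa}=c$ for this instance; your moment computations ($\E{\norm{a}^2 aa\T}=c\,\mathrm{diag}(\sigma_1^4,\sigma_2^4)$ giving $R^2=c\sigma_1^2$, and the whitened fourth-moment operator equal to $c\,\Id$) reproduce exactly those values, and your observation that realizability kills the variance floor is the same implicit reduction the paper relies on. Nothing further is needed.
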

Note that for a given problem/input distribution $\tilde{\kappa}=c$ is a constant while $\kappa=\frac{c \sigma_1^2}{\sigma_2^2}$ can be arbitrarily large. Note that $\kappa>\tilde{\kappa}=c$. Hence, \asgd improves upon rate of \sgd by a factor of $\sqrt{\kappa}$. The following proposition, which is the main result of this section, establishes that ~\hb~(Algorithm~\ref{alg:hb}) cannot provide a similar improvement over \sgd as what \asgd offers. In fact, we show no matter the choice of parameters of \hb, its performance does not improve over \sgd by more than a constant.
%In fact, we show that no matter the choice of parameters of \hb, its convergence rate is exactly same as that ~\sgd~(Corollary~\ref{cor:sgd}), up to constants.
\begin{proposition}\label{prop:hb}
	Let $w_t^{\hb}$ be the $t^\textrm{th}$ iterate of~\hb~(Algorithm~\ref{alg:hb}) on the above problem with starting point $w_0$. For \emph{any} choice of stepsize $\delta$ and momentum $\alpha\in[0,1]$, $\exists T$ large enough such that $\forall t \geq T$, we have,%\vspace*{-5pt}
	\begin{align*}
	\E{f\left({w_t^{\hb}}\right)}-f\left(w_*\right) \geq C(\kappa,\delta,\alpha) \cdot \exp\left(\frac{-500t}{\kappa}\right) 		\big(f\left({w_0}\right)-f\left(w_*\right)\big),
	\end{align*}
	where $C(\kappa,\delta,\alpha)$ depends on $\kappa, \delta$ and $\alpha$ (but not on $t$).
\end{proposition}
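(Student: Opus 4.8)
The plan is to exploit the diagonal structure of the instance: since $\H$ is diagonal and every sample $a_t$ is supported on a single axis, the two coordinates of the error $\theta_t := w_t - w^*$ evolve independently under \hb. Writing the update in error coordinates gives $\theta_{t+1} = \big((1+\alpha)\Id - \delta\,a_t a_t^\top\big)\theta_t - \alpha\,\theta_{t-1}$, and for the second (small-curvature) coordinate this is a scalar random recursion driven by $g_t := \sigma_2^2 z^2$ with probability $1/2$ and $g_t := 0$ otherwise, with $\mathbb{E}[g_t]=\sigma_2^2$ and $\mathbb{E}[g_t^2]=c\,\sigma_2^4$. First I would introduce the state $u_t = (\theta_t^{(2)},\theta_{t-1}^{(2)})^\top$, so that $u_{t+1}=B_t u_t$ for a $2\times2$ random companion-type matrix $B_t$, and then lift to second moments: using independence of $B_t$ from $u_t$, the vector $\big(\mathbb{E}[(\theta_t^{(2)})^2],\,\mathbb{E}[\theta_t^{(2)}\theta_{t-1}^{(2)}],\,\mathbb{E}[(\theta_{t-1}^{(2)})^2]\big)$ evolves under a fixed $3\times3$ matrix $\mathcal{A}_2$ (the symmetric restriction of $\mathbb{E}[B_t\otimes B_t]$), whose entries are explicit in $\alpha$, $s_2:=\delta\sigma_2^2$ and $c$. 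Since $\mathbb{E}[f(w_t)]-f(w^*)\ge \tfrac{\sigma_2^2}{2}\,\mathbb{E}[(\theta_t^{(2)})^2]$, the whole proposition reduces to lower bounding the spectral radius $\rho(\mathcal{A}_2)$ and to checking that the initial condition excites the leading mode.

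Two exact algebraic facts, read off the characteristic polynomial $p_i(\lambda)$ of the analogous operator $\mathcal{A}_i$ with $s_i:=\delta\sigma_i^2$, drive the argument: the determinant is $\det\mathcal{A}_i=\alpha^3$, so $\rho(\mathcal{A}_i)\ge\alpha$; and a direct computation gives $p_i(1)=s_i\big[2(1-\alpha^2)-s_i\big((1-\alpha)+(c-1)(1+\alpha)\big)\big]$. I would then split on whether \hb\ converges on the large-curvature coordinate. If $\rho(\mathcal{A}_1)\ge1$, the mean-square error on coordinate $1$ does not vanish, so $\mathbb{E}[f(w_t)]-f(w^*)$ stays bounded below and the claimed decaying bound holds trivially for all large $t$. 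Otherwise $\rho(\mathcal{A}_1)<1$ forces the necessary Schur-stability condition $p_1(1)\ge0$; since $s_1\ge0$ and $c\ge2$ this yields $s_1\le\tfrac{2(1-\alpha)}{c-1}$, and translating through $s_2 = s_1\,\sigma_2^2/\sigma_1^2 = s_1 c/\kappa$ gives the crucial bound $s_2\le\tfrac{4(1-\alpha)}{\kappa}$.

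With $s_2$ pinned down I would lower bound $\rho(\mathcal{A}_2)$ by exhibiting a real eigenvalue near $1$. When $\alpha$ lies within $O(1/\kappa)$ of $1$ the determinant bound already suffices, $\rho(\mathcal{A}_2)\ge\alpha\ge\exp(-500/\kappa)$. When $\alpha$ is bounded away from $1$, I would evaluate $p_2$ at $\lambda_0=\exp(-500/\kappa)$: using $p_2(1)=2s_2(1-\alpha^2)+O(s_2^2)>0$ and $p_2'(1)=(1-\alpha)^2(1+\alpha)+O(s_2)$, a first-order expansion gives $p_2(\lambda_0)\approx (1-\alpha)(1+\alpha)\big[2s_2-(1-\lambda_0)(1-\alpha)\big]$, which is $\le0$ precisely because $\tfrac{2s_2}{1-\alpha}\le\tfrac{8}{\kappa}\le 1-\lambda_0\approx\tfrac{500}{\kappa}$; since $p_2(\infty)=+\infty$, this produces a real root of $p_2$ in $[\lambda_0,1)$, hence $\rho(\mathcal{A}_2)\ge\exp(-500/\kappa)$, with the factor $500$ deliberately generous to absorb the $O(1/\kappa^2)$ corrections. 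Finally, because $w_{-1}=w_0$ the initial second-moment vector is a positive multiple of $(1,1,1)$, which has nonzero overlap with the positive leading left-eigenvector of the (cone-preserving) operator $\mathcal{A}_2$; this guarantees $\mathbb{E}[(\theta_t^{(2)})^2]\ge C(\kappa,\delta,\alpha)\,\rho(\mathcal{A}_2)^t$ for all $t\ge T$, completing the proof.

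I expect the main obstacle to be the eigenvalue estimate as $\alpha\uparrow1$: there the three unperturbed eigenvalues $\{1,\alpha,\alpha^2\}$ of $\mathcal{A}_2$ (at $s_2=0$) coalesce, so naive perturbation theory for the mode near $1$ degrades exactly where the bound on $s_2$ also shrinks. Reconciling these two effects is what forces the case split and the slack-heavy polynomial estimates, and it is also why establishing the nonvanishing overlap with the leading mode (the constant $C(\kappa,\delta,\alpha)$) must rely on the positivity/cone structure of the second-moment dynamics rather than on a generic diagonalization argument.
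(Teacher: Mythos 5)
Your plan reproduces the paper's proof almost step for step: the same per-coordinate second-moment lift (your $3\times3$ operator $\mathcal{A}_j$ is the paper's $4\times4$ operator $\BT^{(j)}$ with the antisymmetric direction --- an eigenvector with eigenvalue $\alpha$ --- quotiented out, which is why your determinant is $\alpha^3$ against the paper's $\alpha^4$); the same dichotomy driven by the sign of the characteristic polynomial at $1$ (your $p_1(1)\ge 0$ condition and the resulting bound $s_1\le 2(1-\alpha^2)/(c+(c-2)\alpha)$ are exactly Lemma~\ref{lem:top-eig} in contrapositive); the same determinant argument when $\alpha$ is within $O(1/\kappa)$ of $1$ (Lemma~\ref{lem:large-mom}); and the same ``evaluate the characteristic polynomial at $1-O(1/\kappa)$ and find a sign change'' step otherwise (Lemma~\ref{lem:small-mom}, which evaluates at $1-\l/\kappa$ with $\l\le 9$). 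Your algebraic identities for $p_i(1)$, $\det\mathcal{A}_i$, and the translation $s_2 = c\,s_1/\kappa$ all check out against the paper.

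The genuine gap is the final step. You justify the nonvanishing of $C(\kappa,\delta,\alpha)$ by saying the initial second-moment vector, a positive multiple of $(1,1,1)$, ``has nonzero overlap with the positive leading left-eigenvector of the (cone-preserving) operator.'' Cone preservation alone does not deliver this: a completely positive map can be reducible, in which case its leading left-eigenvector lies on the \emph{boundary} of the dual cone and can annihilate a rank-one initial condition (already $\mathrm{diag}(2,1)$ on the nonnegative orthant started from $(0,1)$ decays like $1^t$, not $2^t$). So positivity gives you a PSD leading eigenvector but not that \emph{your} trajectory decays no slower than $\rho(\mathcal{A}_2)^t$; you need an irreducibility or reachability statement, and that is where the real work sits. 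The paper devotes Lemma~\ref{lem:divergentLemma} to exactly this: it computes the covariances of the first few iterates, assembles them into the matrix $\RT$, and shows $\det(\RT)=\alpha x^3\left((c-2)(cx-1)+c\alpha\right)\ne 0$ for all admissible parameters (with separate arguments in the convergent and divergent stepsize regimes), which makes the initial condition a cyclic vector and hence forces a nonzero component along the top eigendirection. The same issue infects your first case: ``if $\rho(\mathcal{A}_1)\ge 1$ the mean-square error on coordinate $1$ does not vanish'' is not automatic without this excitation argument, which is why the paper proves its lemma for the divergent regime as well. Your instinct that the constant $C(\kappa,\delta,\alpha)$ is the delicate part is correct, but the mechanism you propose for it does not close as stated; you need either the explicit $\det(\RT)$ computation or a proof of irreducibility of the second-moment map, which amounts to the same calculation.
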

Thus, to obtain $\widehat{w}$ s.t. $\|\widehat{w}-w^*\|\leq \epsilon$, \hb\ requires $\Omega(\kappa \log \frac{1}{\epsilon})$ samples and iterations. On the other hand, \asgd can obtain $\epsilon$-approximation to $w^*$ in $\mathcal{O}(\sqrt{\kappa}\log{\kappa} \log \frac{1}{\epsilon})$ iterations. We note that the gains offered by \asgd are meaningful when $\kappa>\mathcal{O}(c)$~\citep{jain2017accelerating}; otherwise, all the algorithms including \sgd achieve nearly the same rates (upto constant factors). While we do not prove it theoretically, we observe empirically that for the same problem instance, \nag also obtains nearly same rate as \hb and \sgd. We conjecture that a lower bound for \nag can be established using a similar proof technique as that of \hb (i.e. Proposition~\ref{prop:hb}). We also believe that the constant in the lower bound described in proposition~\ref{prop:hb} can be improved to some small number ($\leq 5$).\vspace*{-2mm}
%\vspace*{-5pt}

\section{Algorithm}\label{sec:algo}
\begin{algorithm}[t!]
	\caption{Accelerated stochastic gradient descent --~\asgd} 
	\label{alg:asgd}
	\begin{algorithmic}[1]
	\renewcommand{\algorithmicrequire}{\textbf{Input: }}
	\renewcommand{\algorithmicensure}{\textbf{Output: }}
	\REQUIRE Initial $w_0$, short step $\delta$, long step parameter $\kappa \geq 1$, statistical advantage parameter $\xi \leq \sqrt{\kappa}$
	\STATE	$\bar{w}_0 \leftarrow w_0$;  $t \leftarrow 0$\hfill
		/*Set running average to $w_0$*/
%	\STATE $t \leftarrow 0$
	\STATE $\alpha \leftarrow 1 - \frac{0.7^2\cdot\xi}{\kappa}$ \hfill /*Set momentum value*/
	\WHILE{$w_t$ not converged}
	\STATE $\bar{w}_{t+1} \leftarrow \alpha\cdot \bar{w}_{t} + (1-\alpha) \cdot \left(w_{t}-\frac{\kappa\cdot\delta}{0.7} \cdot \widehat{\nabla} f_t(w_t)\right)$ \hfill /*Update the running average as a weighted average of previous running average and a long step gradient */
	\STATE	$w_{t+1}\leftarrow \frac{0.7}{0.7+(1-\alpha)} \cdot \left(w_t - \delta \cdot \widehat{\nabla} f_t(w_t)\right) + \frac{1-\alpha}{0.7+(1-\alpha)} \cdot \bar{w}_{t+1}$\hfill /*Update the iterate as weighted average of current running average and short step gradient*/
	\STATE $t\leftarrow t+1$
	\ENDWHILE
	\ENSURE $w_t$ \hfill /*Return the last iterate*/
	\end{algorithmic}
\end{algorithm}
We will now present and explain an intuitive version of~\asgd (pseudo code in Algorithm~\ref{alg:asgd}). The algorithm takes three inputs: short step $\delta$, long step parameter $\kappa$ and statistical advantage parameter $\xi$. The short step $\delta$ is precisely the same as the step size in~\sgd,~\hb~or~\nag. For convex problems, this scales inversely with the smoothness of the function. The long step parameter $\kappa$ is intended to give an estimate of the ratio of the largest and smallest curvatures of the function; for convex functions, this is just the condition number. The statistical advantage parameter $\xi$ captures trade off between statistical and computational condition numbers -- in the deterministic case, $\xi = \sqrt{\kappa}$ and~\asgd~is equivalent to~\nag, while in the high stochasticity regime, $\xi$ is much smaller. The algorithm maintains two iterates: descent iterate $w_t$ and a running average $\bar{w}_t$. The running average is a weighted average of the previous average and a long gradient step from the descent iterate, while the descent iterate is updated as a convex combination of short gradient step from the descent iterate and the running average. The idea is that since the algorithm takes a long step as well as short step and an appropriate average of both of them, it can make progress on different directions at a similar pace. Appendix~\ref{app:equiv} shows the equivalence between Algorithm~\ref{alg:asgd} and~\asgd~as proposed in~\cite{jain2017accelerating}. Note that the constant $0.7$ appearing in Algorithm~\ref{alg:asgd} has no special significance.~\cite{jain2017accelerating} require it to be smaller than $\sqrt{1/6}$ but any constant smaller than $1$ seems to work in practice.\vspace*{-2mm}

\section{Experiments}\label{sec:exp}
We now present our experimental results exploring performance of~\sgd,~\hb,~\nag~and~\asgd. Our experiments are geared towards answering the following questions:
%\begin{itemize}[noitemsep,wide=0pt, leftmargin=\dimexpr\labelwidth + 2\labelsep\relax]
\begin{itemize}%[noitemsep,wide=0pt, leftmargin=\dimexpr\labelwidth + 2\labelsep\relax]
	\item Even for linear regression, is the suboptimality of~\hb~restricted to specific distributions in Section~\ref{sec:hb} or does it hold for more general distributions as well? Is the same true of~\nag?
	\item What is the reason for the superiority of~\hb~and~\nag~in practice? Is it because momentum methods have better performance that \sgd for stochastic gradients or due to mini-batching? Does this superiority hold even for small minibatches?
	\item How does the performance of~\asgd~compare to that of~\sgd,~\hb~and~\nag, when training deep networks?
\end{itemize}
Section~\ref{sec:synthetic} and parts of Section~\ref{sec:mnist} address the first two questions. Section~\ref{sec:mnist} and \ref{sec:cifar} address Question 2 partially and the last question. We use Matlab to conduct experiments presented in Section~\ref{sec:synthetic} and use PyTorch \citep{pytorch} for our deep networks related experiments. Pytorch code implementing the \asgd algorithm can be found at \href{https://github.com/rahulkidambi/AccSGD}{https://github.com/rahulkidambi/AccSGD}.

\subsection{Linear Regression}\label{sec:synthetic}
In this section, we will present results on performance of the four optimization methods (\sgd, \hb, \nag, and \asgd) for linear regression problems. We consider two different class of linear regression problems, both in two dimensions. For a given condition number $\kappa$, we consider the following two distributions: 

\textbf{Discrete}: $a=e_1$ w.p. $0.5$ and $a=\frac{2}{\kappa} \cdot e_2$ with $0.5$; $e_i$ is the $i^{th}$ standard basis vector. %and is distributed as
%\iffalse
%$x = \begin{array}{rl}
%e_1 &\mbox{ w.p. } \frac{1}{2} \\
%$\frac{2}{\kappa} \cdot e_2 &\mbox{ w.p. } \frac{1}{2}
%\end{array}$.
%\fi
%$x = \begin{cases}
%e_1 &\mbox{ w.p. } \frac{1}{2} \\
%\frac{2}{\kappa} \cdot e_2 &\mbox{ w.p. } \frac{1}{2}
%\end{cases}$, where $e_i$ is the $i^{th}$ standard basis vector.

\textbf{Gaussian	}: $a\in \R^2$ is distributed as a Gaussian random vector with covariance matrix $\begin{bmatrix}
1 & 0 \\ 0 & \frac{1}{\kappa}
\end{bmatrix}$.

We fix a randomly generated $w^* \in \R^2$ and for both the distributions above, we let $b = \iprod{w^*}{a}$. We vary $\kappa$ from $\{2^4, 2^5,...,2^{12}\}$ and for each $\kappa$ in this set, we run 100 independent runs of all four methods, each for a total of $t=5\kappa$ iterations. We define that the algorithm converges if there is no error in the second half (i.e. after $2.5\kappa$ updates) that exceeds the starting error - this is reasonable since we expect {\em geometric convergence} of the initial error. 

Unlike \asgd and \sgd, we do not know optimal learning rate and momentum parameters for \nag and \hb in the stochastic gradient model. So, we perform a {\em grid search} over the values of the learning rate and momentum parameters. In particular, we lay a $10\times10$ grid in $[0,1]\times[0,1]$ for learning rate and momentum and run \nag and \hb. Then, for each grid point, we consider the subset of $100$ trials that converged and computed the final error using these. Finally, the parameters that yield the minimal error are chosen for \nag and \hb, and these numbers are reported. We measure convergence performance of a method using:
\begin{align}\label{eq:rate}
\text{rate} = \frac{\log(f(w_0))-\log(f(w_t))}{t},
\end{align}

\begin{figure}[ht!]
	\centering
	\subfigure{\includegraphics[width=0.45\columnwidth]{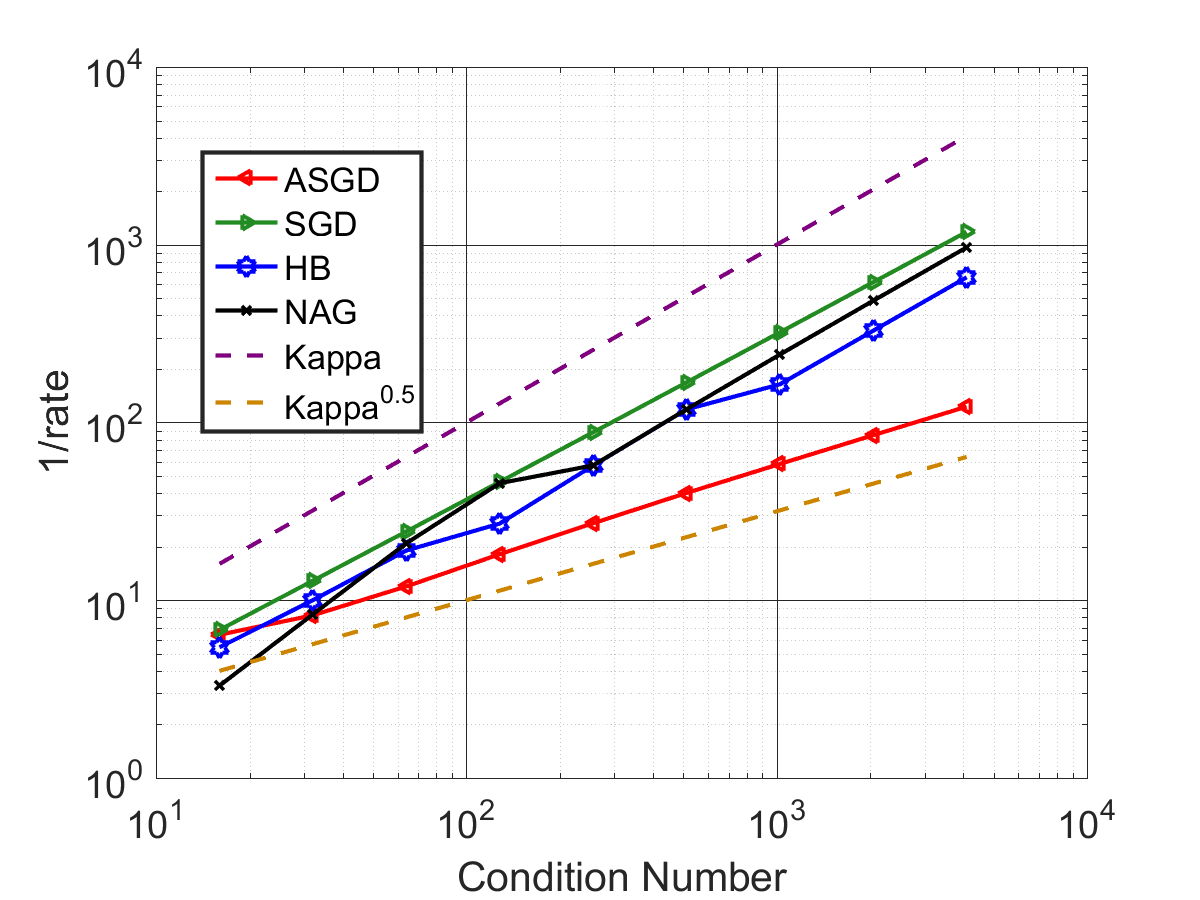}}
	\subfigure{\includegraphics[width=0.45\columnwidth]{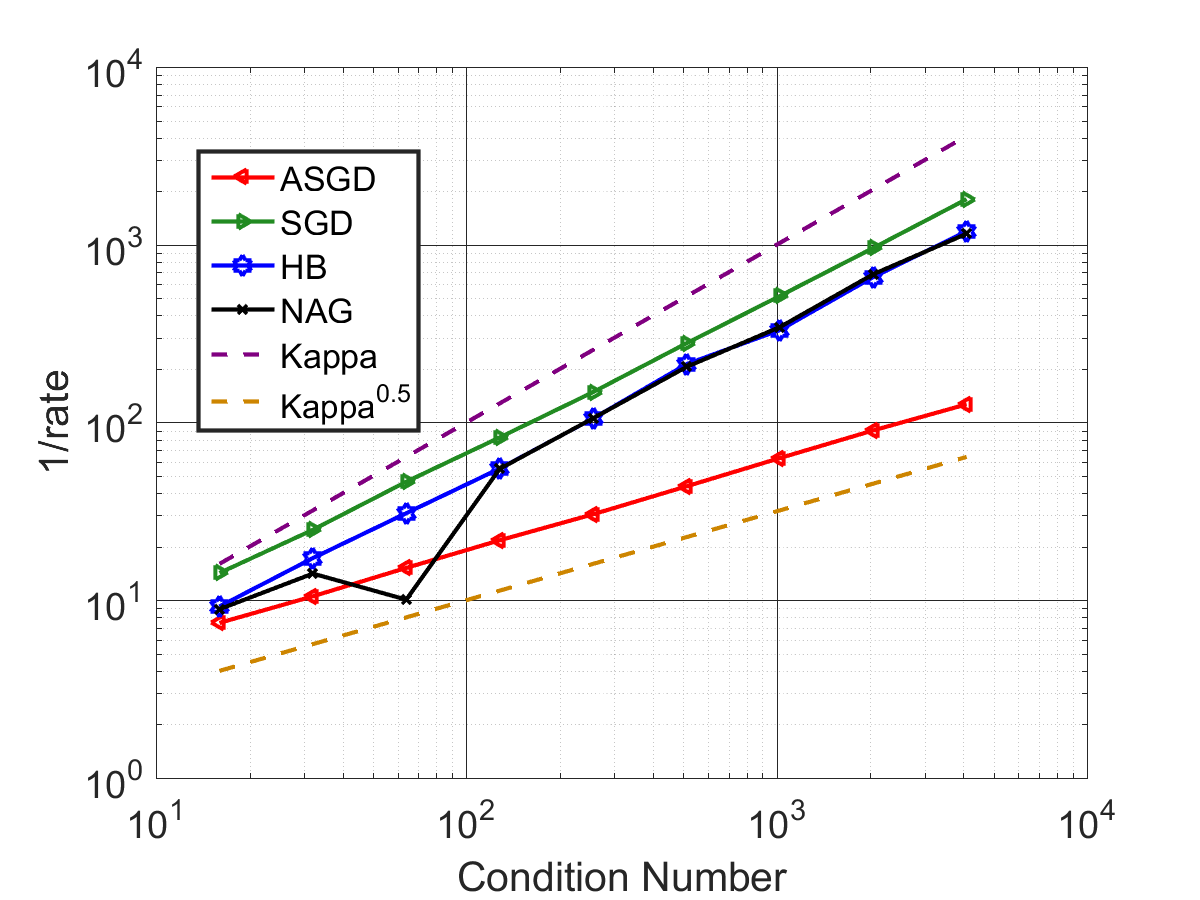}}%\vspace*{-10pt}
	\caption{Plot of 1/rate (refer equation~\eqref{eq:rate}) vs condition number ($\kappa$) for various methods for the linear regression problem. Discrete distribution in the left, Gaussian to the right.}
	\label{fig:synthetic}%\vspace*{-10pt}
\end{figure}

\begin{table}[t]
	\centering
	\begin{tabular}{|c | c | c|}
		\hline
		Algorithm & Slope -- discrete & Slope -- Gaussian \\\hhline{|=|=|=|}
		\sgd & 0.9302 & 0.8745 \\ 
%		\hline
		\hb & 0.8522 & 0.8769 \\
%		\hline 
		\nag & 0.98 & 0.9494 \\
%		\hline 
		\asgd & 0.5480 & 0.5127 \\ [1ex] 
		\hline
	\end{tabular}
	\caption{Slopes (i.e. $\gamma$) obtained by fitting a line to the curves in Figure~\ref{fig:synthetic}. A value of $\gamma$ indicates that the error decays at a rate of $\exp\left(\frac{-t}{\kappa^\gamma}\right)$. A smaller value of $\gamma$ indicates a faster rate of error decay.}
	\label{table:synthetic}\vspace*{-2mm}
\end{table}

\iffalse
\begin{figure}%\vspace*{-20pt}
\begin{minipage}{.48\textwidth}
\begin{figure}[H]
	\centering
	\subfigure{\includegraphics[width=0.48\columnwidth]{files/figures/synthetic_regression/Discrete-Actual.png}}
	\subfigure{\includegraphics[width=0.48\columnwidth]{files/figures/synthetic_regression/Gaussian-Actual.png}}%\vspace*{-10pt}
	\caption{Plot of 1/rate (refer equation~\eqref{eq:rate}) vs condition number ($\kappa$) for various methods for the linear regression problem. Left plot is for discrete distribution and the right one is for Gaussian distribution. }
	\label{fig:synthetic}
\end{figure}
\end{minipage}\hspace*{5pt}
\begin{minipage}{.48\textwidth}
\begin{table}[H]
\centering{\small %\vspace*{13pt}
\begin{tabular}{|c | c | c|}
\hline
Algorithm & Slope -- discrete & Slope -- Gaussian \\\hhline{|=|=|=|}
\sgd & 0.9302 & 0.8745 \\ 
%		\hline
\hb & 0.8522 & 0.8769 \\
%		\hline 
\nag & 0.98 & 0.9494 \\
%		\hline 
\asgd & 0.5480 & 0.5127 \\ [1ex] 
\hline
\end{tabular}}%\vspace*{-5pt}
\caption{Slopes (i.e. $\gamma$) computing by fitting a line to the curves in Figure~\ref{fig:synthetic}. A value of $\gamma$ indicates error decay rate of  $\exp\left(\frac{-t}{\kappa^\gamma}\right)$, i.e, smaller value of $\gamma$ indicates faster error decay.}
\label{table:synthetic}
\end{table}
\end{minipage}
\end{figure}
\fi
We compute the rate ~\eqref{eq:rate} for all the algorithms with varying condition number $\kappa$. Given a rate vs $\kappa$ plot for a method, we compute it's {\em slope} (denoted as $\gamma$) using linear regression. %Note that given $\gamma$, the error of the method after $t$ iterations is $\approx \exp\left(\frac{-t}{\kappa^\gamma}\right)$. %For starters, let us fix an algorithm, say, \asgd, and a specific distribution, say, the discrete distribution. Now, let us vary the condition number from $2^4$ to $2^{12}$ in powers of $2$ and compute the rate as described in equation~\ref{eq:rate}. 
%These points can be connected to each other to form a line segment (when plotted in a $\log$ scale). It is precisely these line segments that is plotted for each of the four optimization methods and for each distribution in figure~\ref{fig:synthetic} in the log scale. 
%Note that the slope $\gamma$ of the error rate vs $\kappa$ line segment can be computed using linear regression. If the slope of this line segment is $\gamma$, then the rate of error decay is approximately . Smaller the value of $\gamma$, faster the rate of convergence. %In order to estimate $\gamma$ for each of the four methods, we compute the slope of the lines (in Figure~\ref{fig:synthetic}) using linear regression. 
Table~\ref{table:synthetic} presents the estimated slopes (i.e. $\gamma$) for various methods for both the discrete and the Gaussian case. The slope values clearly show that the rate of~\sgd,~\hb~and~\nag~have a nearly linear dependence on $\kappa$ while that of~\asgd~seems to scale linearly with $\sqrt{\kappa}$.
%\begin{table}[h!]
%	\centering
%	\begin{tabular}{|c | c | c|}
%		\hline
%		Algorithm & Slope -- discrete & Slope -- Gaussian \\\hhline{|=|=|=|}
%		\sgd & 1.0000 & 0.8736 \\ 
%%		\hline
%		\hb & 0.8647 & 0.8766 \\
%%		\hline 
%		\nag & 1.0242 & 0.8775 \\
%%		\hline 
%		\asgd & 0.5330 & 0.5107 \\ [1ex] 
%		\hline
%	\end{tabular}
%	\caption{Slopes (i.e. $\gamma$) computing by fitting a line to the curves in Figure~\ref{fig:synthetic}. A value of $\gamma$ indicates that the error decays at a rate of $\exp\left(\frac{-t}{\kappa^\gamma}\right)$, i.e, smaller value of $\gamma$ indicates faster error decay.}
%	\label{table:synthetic}
%\end{table}
\vspace*{-2mm}

%\section{Intuition for the proposed algorithm}
%\input{files/tex/4.intuition.tex}

\subsection{Deep Autoencoders for MNIST}\label{sec:mnist}
In this section, we present experimental results on training deep autoencoders for the~\mnist dataset, and we follow the setup of~\citet{hinton2006reducing}. This problem is a standard benchmark for evaluating optimization algorithms e.g.,~\citet{martens2010,sutskever2013importance,martens2015kronecker,Reddi2017escaping}. The network architecture follows previous work~\citep{hinton2006reducing} and is represented as $784-1000-500-250-30-250-500-1000-784$ with the first and last $784$ nodes representing the input and output respectively. All hidden/output nodes employ sigmoid activations except for the layer with $30$ nodes which employs linear activations and we use MSE loss. We use the initialization scheme of~\citet{martens2010}, also employed in~\citet{sutskever2013importance,martens2015kronecker}. We perform training with two minibatch sizes $-1$ and $8$. The runs with minibatch size of $1$ were run for $30$ epochs while the runs with minibatch size of $8$ were run for $50$ epochs. For each of~\sgd,~\hb,~\nag~and~\asgd, a grid search over learning rate, momentum and long step parameter (whichever is applicable) was done and best parameters were chosen based on achieving the smallest training error in the same protocol followed by~\citet{sutskever2013importance}. The grid was extended whenever the best parameter fell at the edge of a grid. For the parameters chosen by grid search, we perform $10$ runs with different seeds and averaged the results. The results are presented in Figures~\ref{fig:mnist-train-test-bs8} and~\ref{fig:mnist-train-test-bs1}. Note that the final loss values reported are suboptimal compared to the published literature e.g.,~\citet{sutskever2013importance}; while~\citet{sutskever2013importance} report results after $750000$ updates with a large batch size of $200$ (which implies a total of $750000\times200=150$M gradient evaluations), whereas, ours are after $1.8$M updates of ~\sgd with a batch size $1$ (which is just $1.8$M gradient evaluations). 
\begin{figure}[t]
	\centering
	\subfigure{\includegraphics[width=0.45\columnwidth]{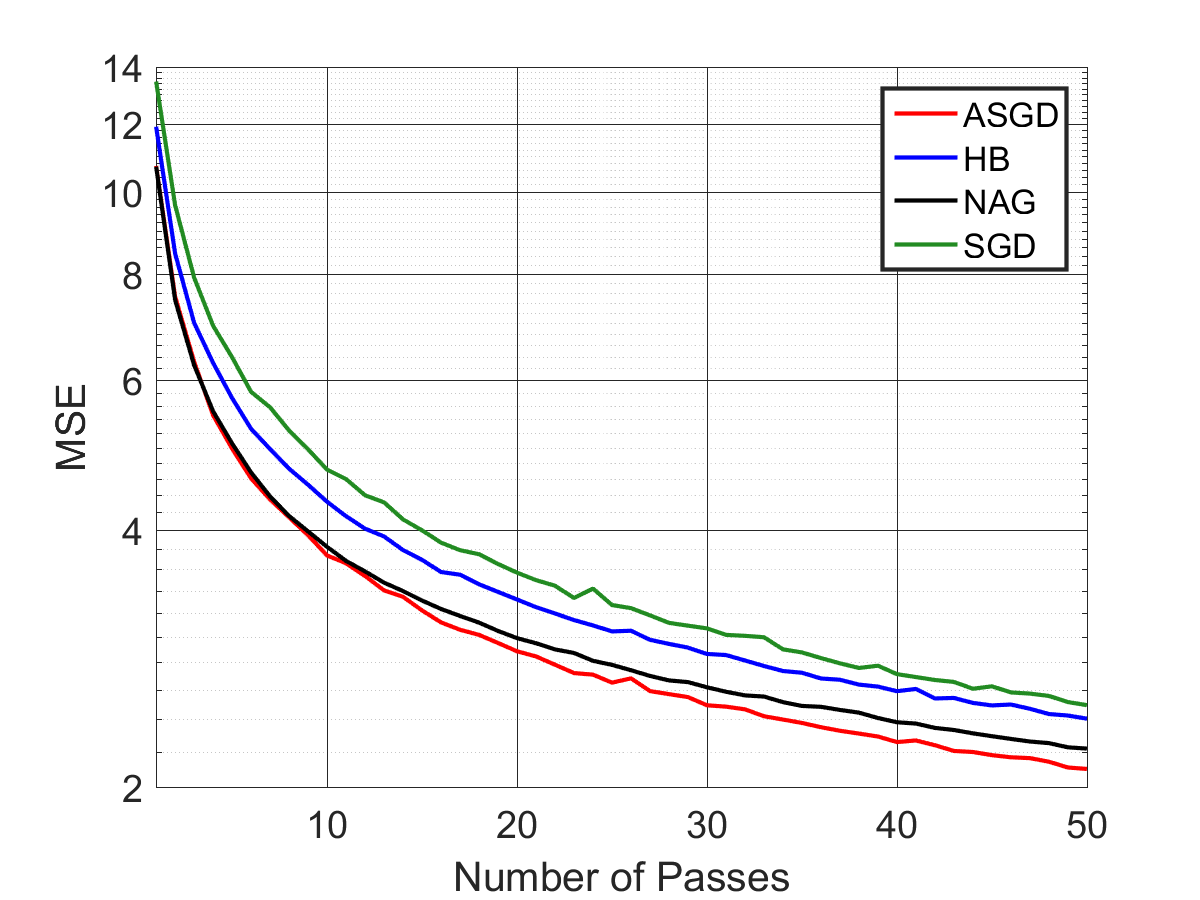}}
	\subfigure{\includegraphics[width=0.45\columnwidth]{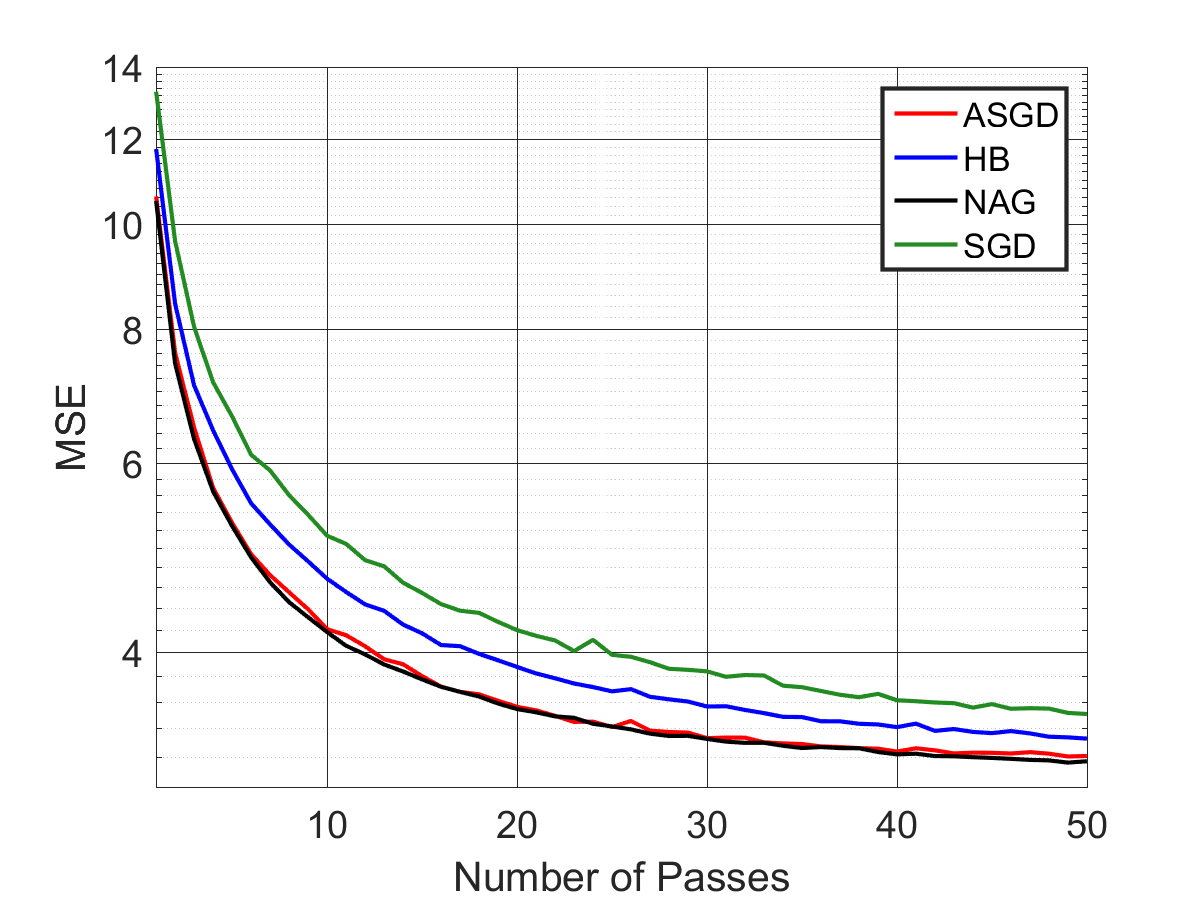}}%\vspace*{-10pt}
	\caption{Training loss (left) and test loss (right) while training deep autoencoder for~\mnist~with minibatch size $8$. Clearly, \asgd matches performance of \nag and outperforms \sgd on the test data. \hb also outperforms \sgd.}
	\label{fig:mnist-train-test-bs8}%\vspace*{-4mm}
\end{figure}
\begin{figure}[h!]
	\centering
	\subfigure{\includegraphics[width=0.45\columnwidth]{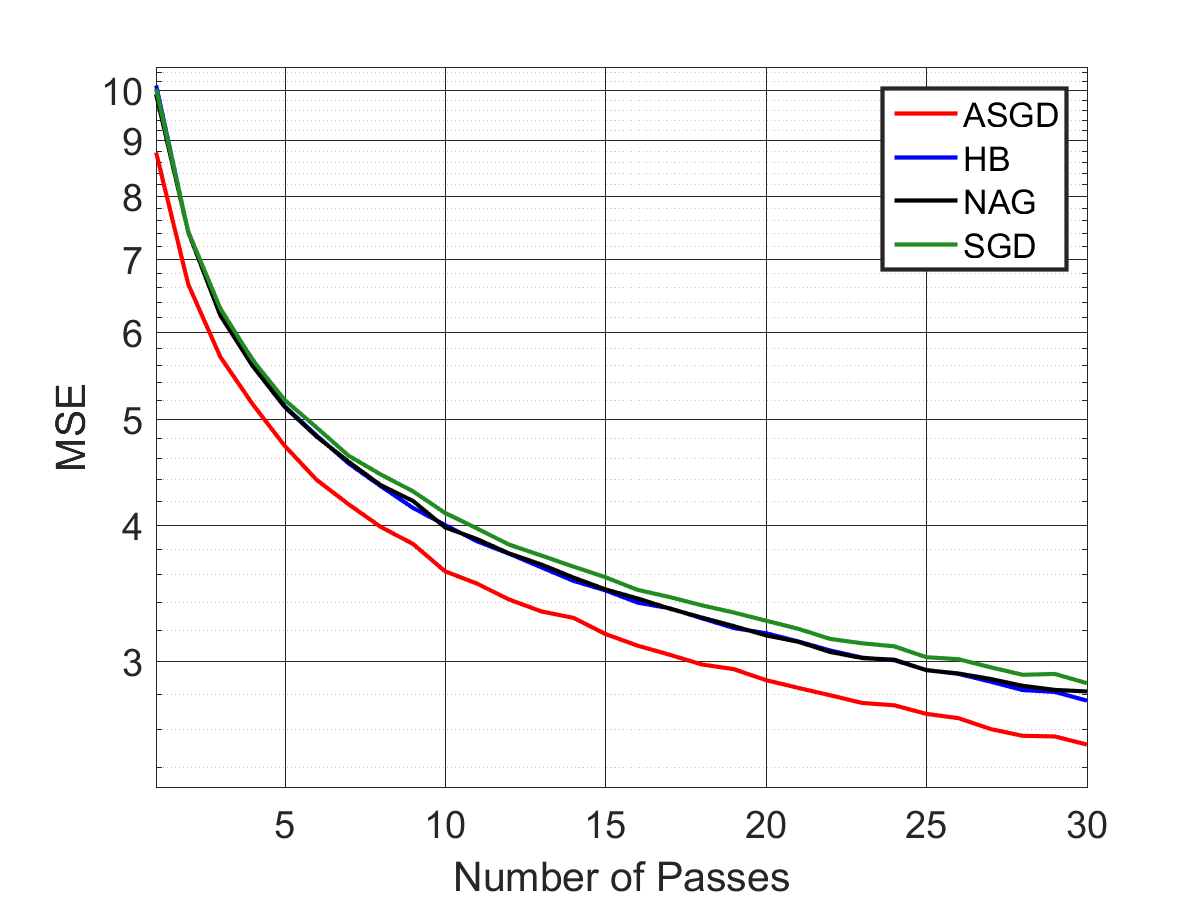}}
	\subfigure{\includegraphics[width=0.45\columnwidth]{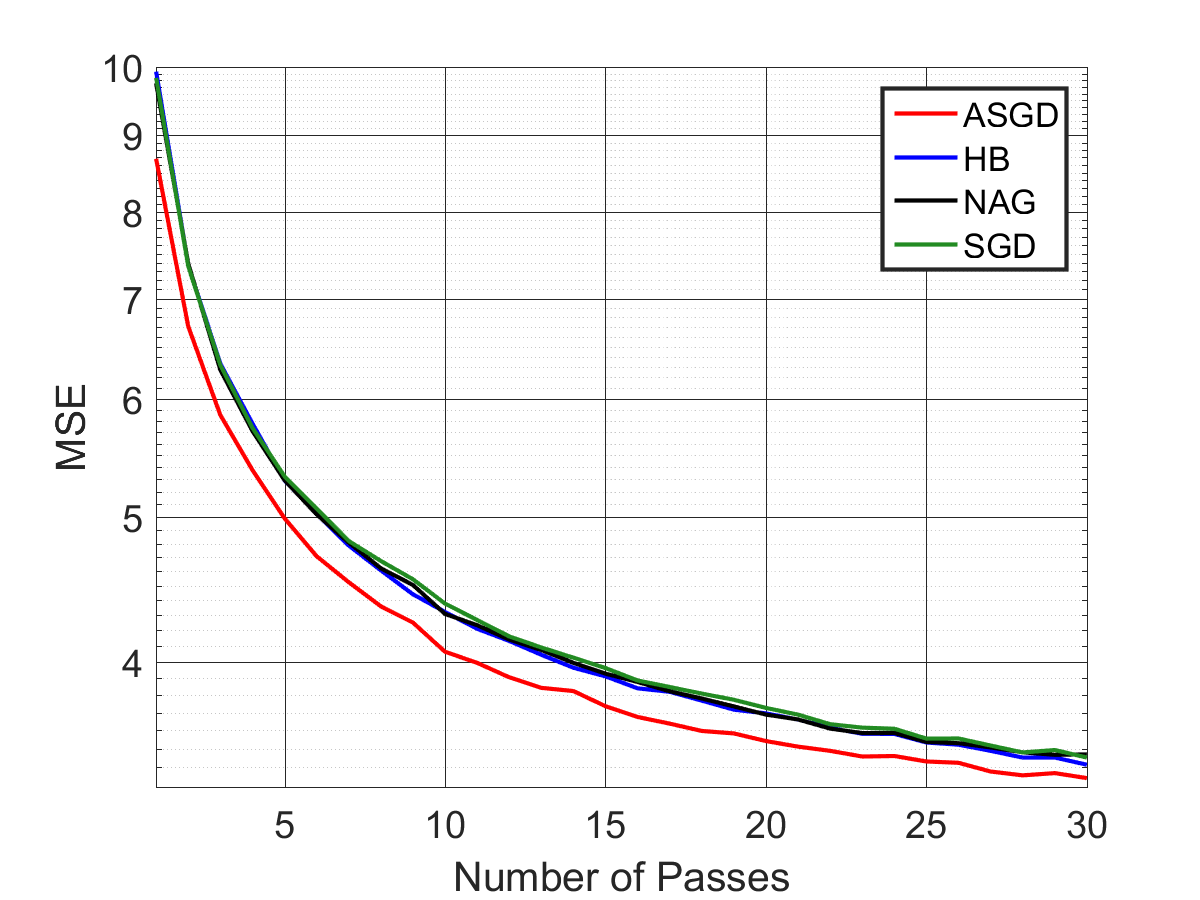}}%\vspace*{-10pt}
	\caption{Training loss (left) and test loss (right) while training deep autoencoder for~\mnist~with minibatch size $1$. Interestingly, ~\sgd,~\hb~and~\nag, all decrease the loss at a similar rate, while~\asgd~decays at a faster rate.}
	\label{fig:mnist-train-test-bs1}%\vspace*{-4mm}
\end{figure}

\iffalse
\begin{figure}
	\begin{minipage}{.48\textwidth}
\begin{figure}[H]
	\centering
	\subfigure{\includegraphics[width=0.48\columnwidth]{files/figures/autoencoder-bs8/trainFunc-allOptimized-bs-8.png}}
	\subfigure{\includegraphics[width=0.48\columnwidth]{files/figures/autoencoder-bs8/testFunc-allOptimized-bs-8.png}}
	\caption{Training loss (left) and test loss (right) while training deep autoencoder for~\mnist~with minibatch size $8$. Clearly, \asgd matches performance of \nag and outperforms \sgd on the test data. \hb also outperforms \sgd.}
	\label{fig:mnist-train-test-bs8}
\end{figure}
\end{minipage}\hspace*{10pt}
	\begin{minipage}{.48\textwidth}
\begin{figure}[H]
	\centering
	\subfigure{\includegraphics[width=0.48\columnwidth]{files/figures/autoencoder-bs1/trainFunc-allOptimized-bs-1.png}}
	\subfigure{\includegraphics[width=0.48\columnwidth]{files/figures/autoencoder-bs1/testFunc-allOptimized-bs-1.png}}
	\caption{Training loss (left) and test loss (right) while training deep autoencoder for~\mnist~with minibatch size $1$. Interestingly, ~\sgd,~\hb~and~\nag, all decrease the loss at a similar rate, while~\asgd~decays at a faster rate.}
	\label{fig:mnist-train-test-bs1}
\end{figure}
\end{minipage}
\end{figure}
\fi
\textbf{Effect of minibatch sizes}: While~\hb~and~\nag~decay the loss faster compared to~\sgd~for a minibatch size of $8$ (Figure~\ref{fig:mnist-train-test-bs8}), this superior decay rate does not hold for a minibatch size of $1$ (Figure~\ref{fig:mnist-train-test-bs1}). This supports our intuitions from the stochastic linear regression setting, where we demonstrate that~\hb~and~\nag~are suboptimal in the~\sfo~model.

\textbf{Comparison of~\asgd~with momentum methods}: While~\asgd~performs slightly better than \nag for batch size $8$ in the training error (Figure~\ref{fig:mnist-train-test-bs8}),~\asgd~decays the error at a faster rate compared to all the three other methods for a batch size of $1$ (Figure~\ref{fig:mnist-train-test-bs1}).
%\vspace*{-3mm}

\subsection{Deep Residual Networks for CIFAR-10}\label{sec:cifar}
We will now present experimental results on training deep residual networks~\citep{kaiminghe2016resnet} with pre-activation blocks~\citet{kaiminghe2016identity} for classifying images in~\cifar~\citep{krizhevsky2009learning}; the network we use has $44$ layers (dubbed preresnet-44). The code for this section was downloaded from~\citet{preresenet44}. One of the most distinct characteristics of this experiment compared to our previous experiments is learning rate decay. We use a validation set based decay scheme, wherein, after every $3$ epochs, we decay the learning rate by a certain factor (which we grid search on) if the validation zero one error does not decrease by at least a certain amount (precise numbers are provided in the appendix since they vary across batch sizes). Due to space constraints, we present only a subset of training error plots.
%only on the test zero-one error since both learning rate decay as well as grid search were done based on these values. In most cases, but not all, training function value for the same hyperparameters also shows similar trends. If we select hyperparameters based on the final training error, these trends are observed for training error as well in all of our experiments.
Please see Appendix~\ref{app:cifar-det} for some more plots on training errors.

\textbf{Effect of minibatch sizes}:
Our first experiment tries to understand how the performance of~\hb~and~\nag compare with that of~\sgd~and how it varies with minibatch sizes. Figure~\ref{fig:cifar-test-bs} presents the test zero one error for minibatch sizes of $8$ and $128$. While training with batch size $8$ was done for $40$ epochs, with batch size $128$, it was done for $120$ epochs. We perform a grid search over all parameters for each of these algorithms. See Appendix~\ref{app:cifar-det} for details on the grid search parameters. We observe that final error achieved by~\sgd,~\hb~and~\nag~are all very close for both batch sizes. While~\nag~exhibits a superior rate of convergence compared to~\sgd~and~\hb~for batch size $128$, this superior rate of convergence disappears for a batch size of $8$.
%\pn{Present perhaps only 20 passes for batch size 8? The test error does not go down after that.}

\begin{figure}[t!]
	\centering
	\subfigure{\includegraphics[width=0.32\columnwidth]{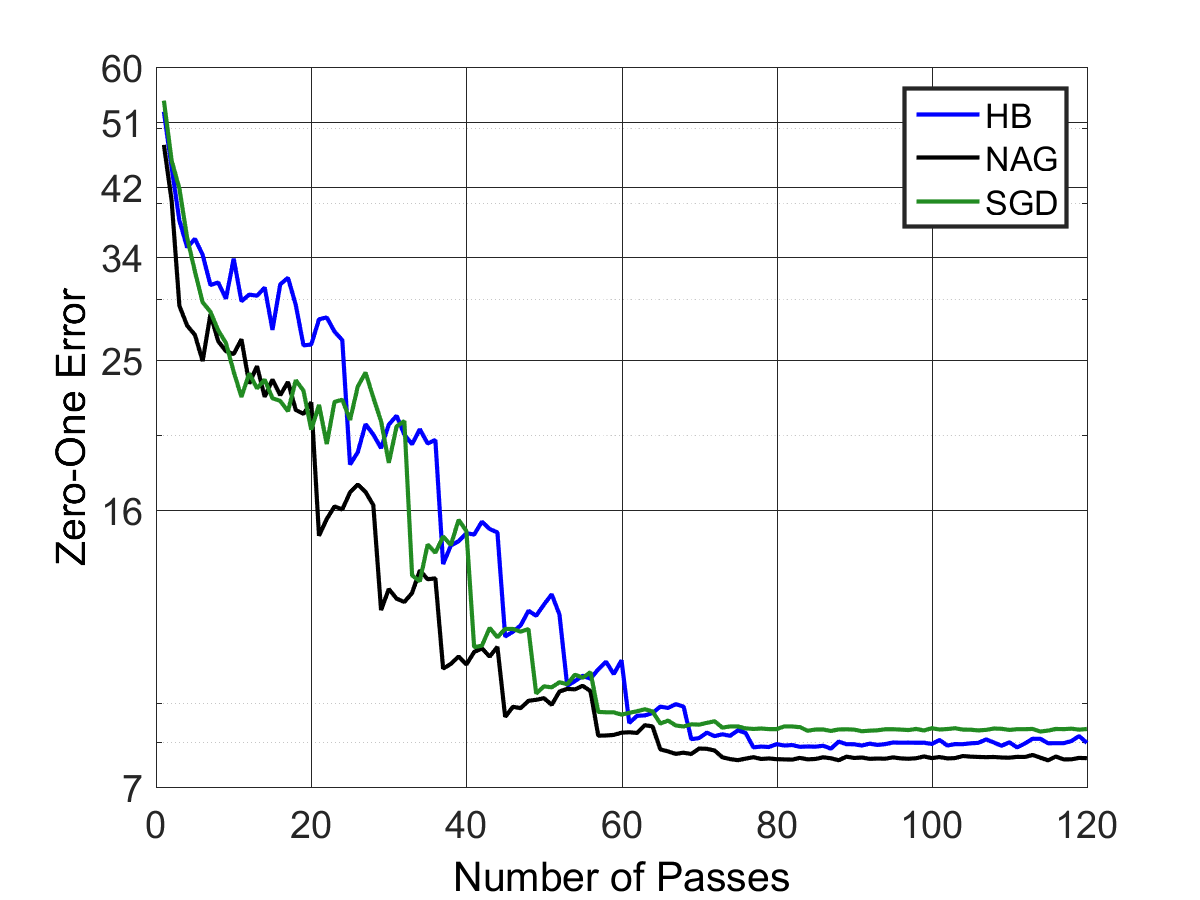}}
	\subfigure{\includegraphics[width=0.32\columnwidth]{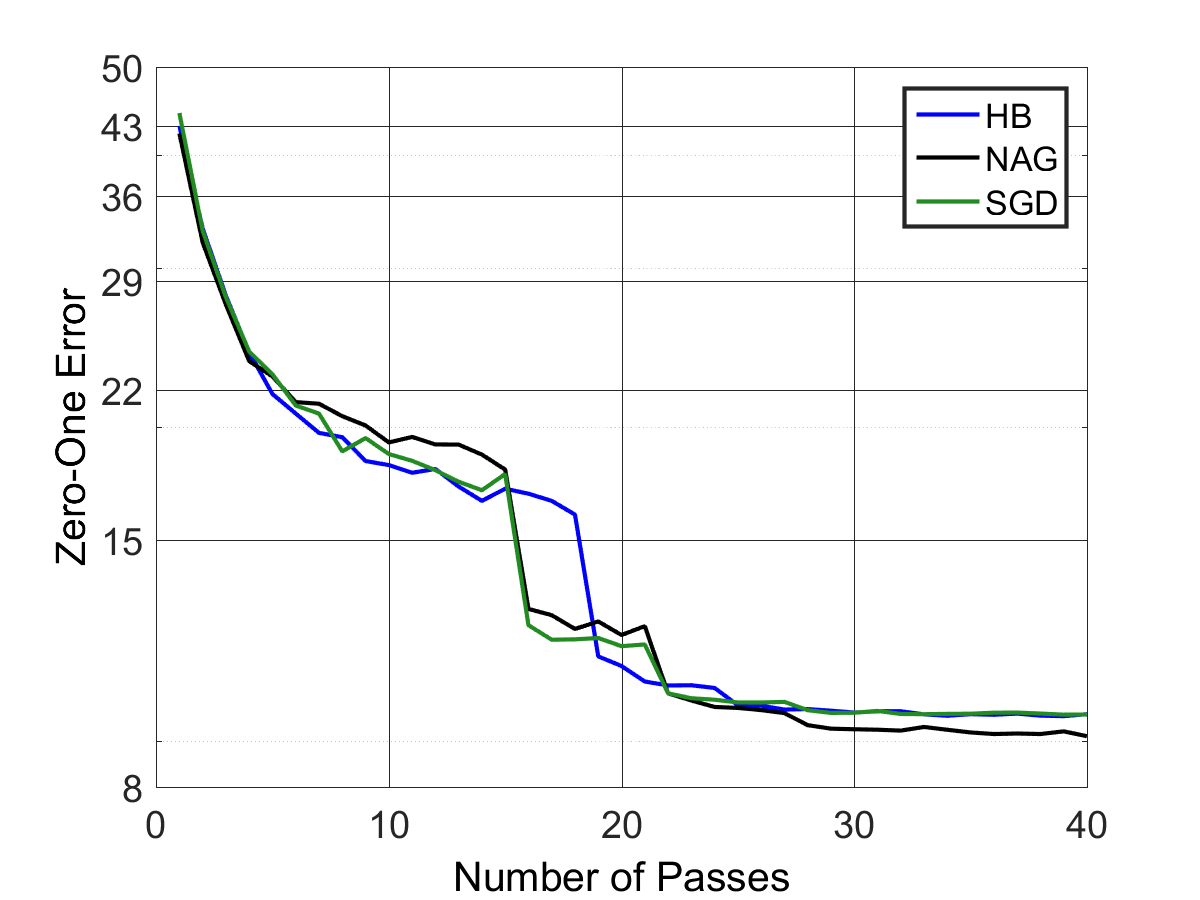}}
	\subfigure{\includegraphics[width=0.32\columnwidth]{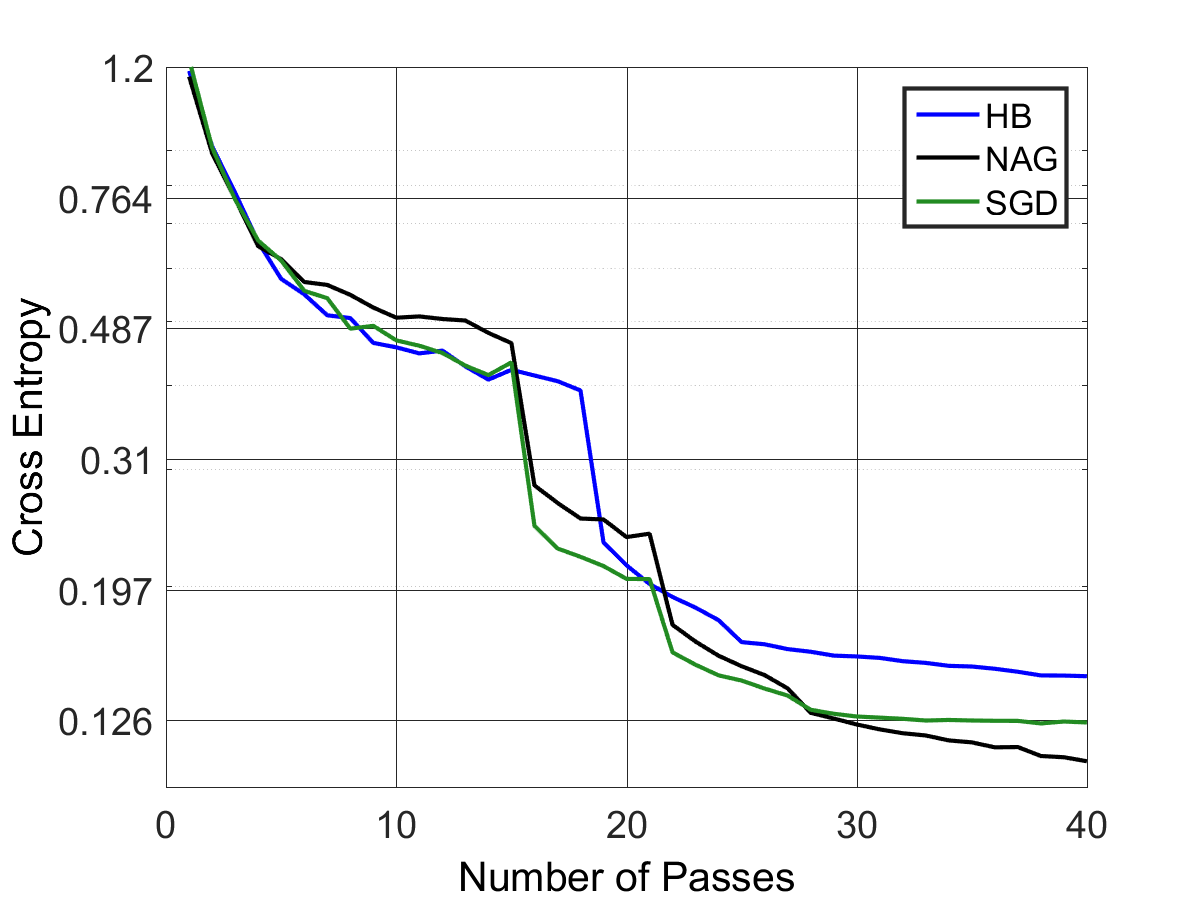}}%\vspace*{-10pt}
	\caption{Test zero one loss for batch size $128$ (left), batch size $8$ (center) and training function value for batch size $8$ (right) for~\sgd,~\hb~and~\nag.}
	\label{fig:cifar-test-bs}%\vspace*{-3mm}
\end{figure}

\textbf{Comparison of~\asgd~with momentum methods}: The next experiment tries to understand how~\asgd compares with~\hb~and~\nag. The errors achieved by various methods when we do grid search over all parameters are presented in Table~\ref{table:cifar}. Note that the final test errors for batch size $128$ are better than those for batch size $8$ since the former was run for $120$ epochs while the latter was run only for $40$ epochs (due to time constraints).

\begin{table}[h!]
	\centering
	\begin{tabular}{|c | c | c|}
		\hline
		Algorithm & Final test error -- batch size $128$ & Final test error -- batch size $8$ \\\hhline{|=|=|=|} 
		\sgd & $8.32\pm0.21$ & $9.57\pm0.18$ \\ 
		\hb & $7.98\pm0.19$ & $9.28\pm0.25$ \\
		\nag & $7.63\pm0.18$ & $9.07\pm0.18$ \\
		\asgd & $\bf{7.23\pm0.22}$  & $\bf{8.52\pm0.16}$ \\ [1ex] 
		\hline
	\end{tabular}
	\caption{Final test errors achieved by various methods for batch sizes of $128$ and $8$. The hyperparameters have been chosen by grid search.}
	\label{table:cifar}
\end{table}

While the final error achieved by~\asgd~is similar/favorable compared to all other methods, we are also interested in understanding whether~\asgd~has a superior convergence speed. For this experiment, we need to address the issue of differing learning rates used by various algorithms and different iterations where they decay learning rates. So, for each of~\hb~and~\nag, we choose the learning rate and decay factors by grid search, use these values for~\asgd and do grid search only over long step parameter $\kappa$ and momentum $\alpha$ for \asgd. The results are presented in Figures~\ref{fig:cifar-test-hb-asgd} and~\ref{fig:cifar-test-nag-asgd}. For batch size $128$,~\asgd decays error at a faster rate compared to both~\hb~and~\nag. For batch size $8$, while we see a superior convergence of~\asgd~compared to~\nag, we do not see this superiority over~\hb. The reason for this turns out to be that the learning rate for~\hb, which we also use for~\asgd, turns out to be quite suboptimal for~\asgd. So, for batch size $8$, we also compare fully optimized (i.e., grid search over learning rate as well)~\asgd~with~\hb. The superiority of~\asgd~over~\hb~is clear from this comparison. These results suggest that~\asgd~decays error at a faster rate compared to~\hb~and~\nag~across different batch sizes.
\begin{figure}[t!]
	\centering
	\subfigure{\includegraphics[width=0.32\columnwidth]{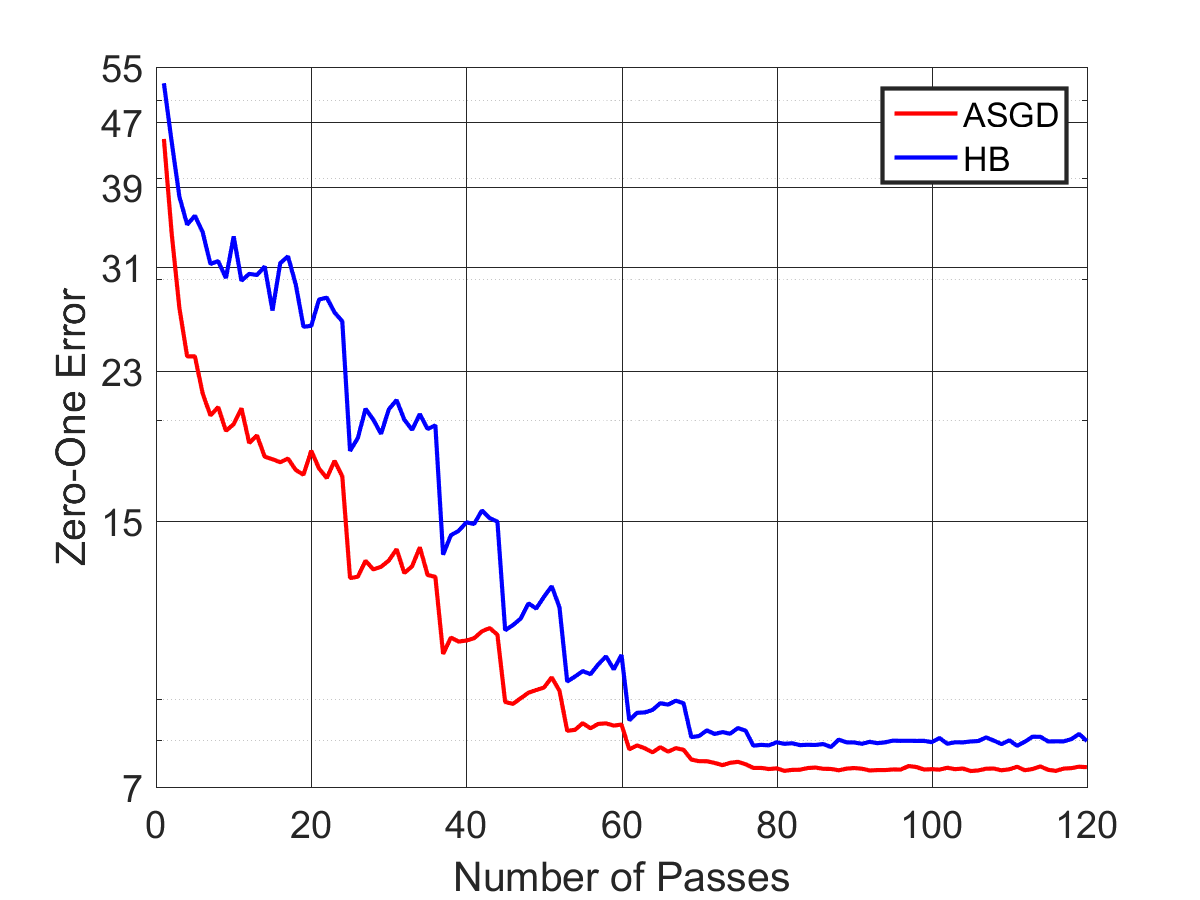}}
	\subfigure{\includegraphics[width=0.32\columnwidth]{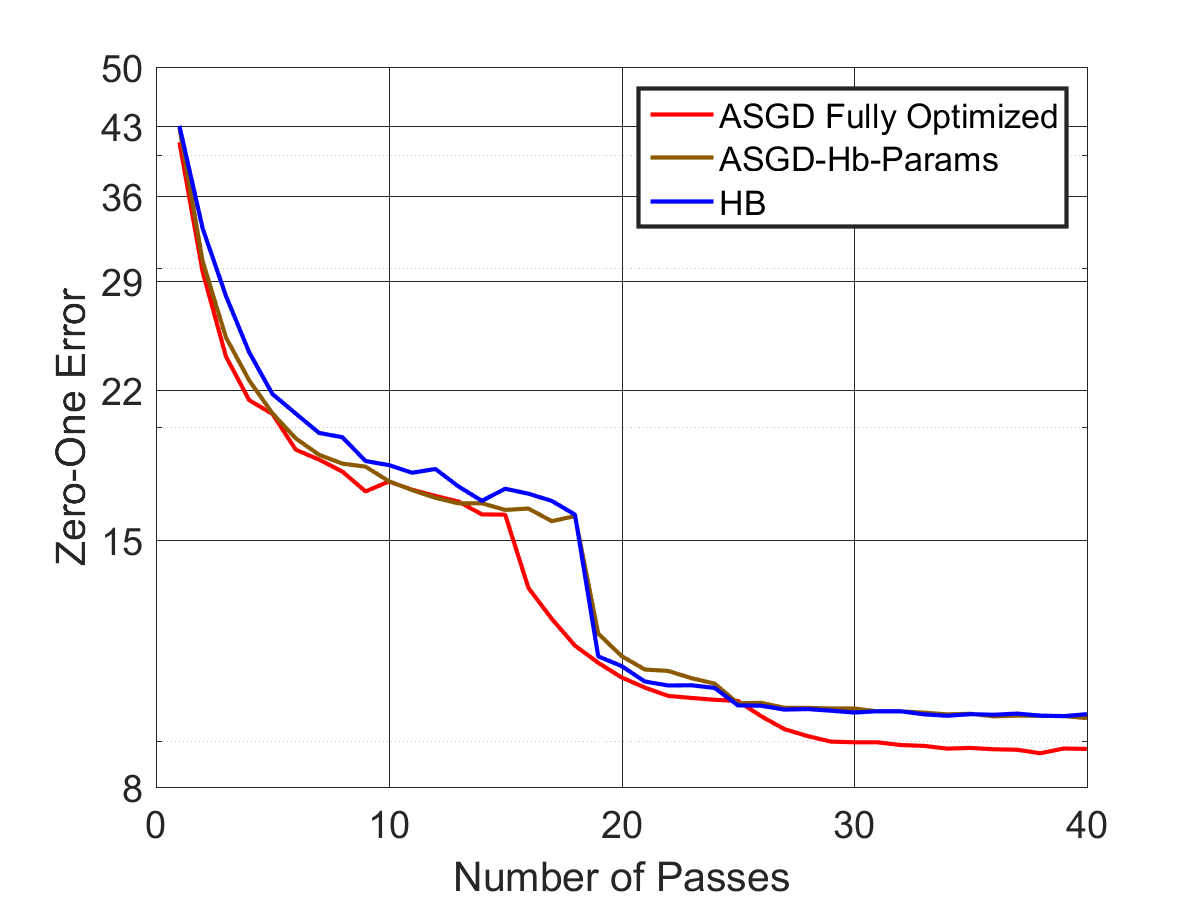}}
	\subfigure{\includegraphics[width=0.32\columnwidth]{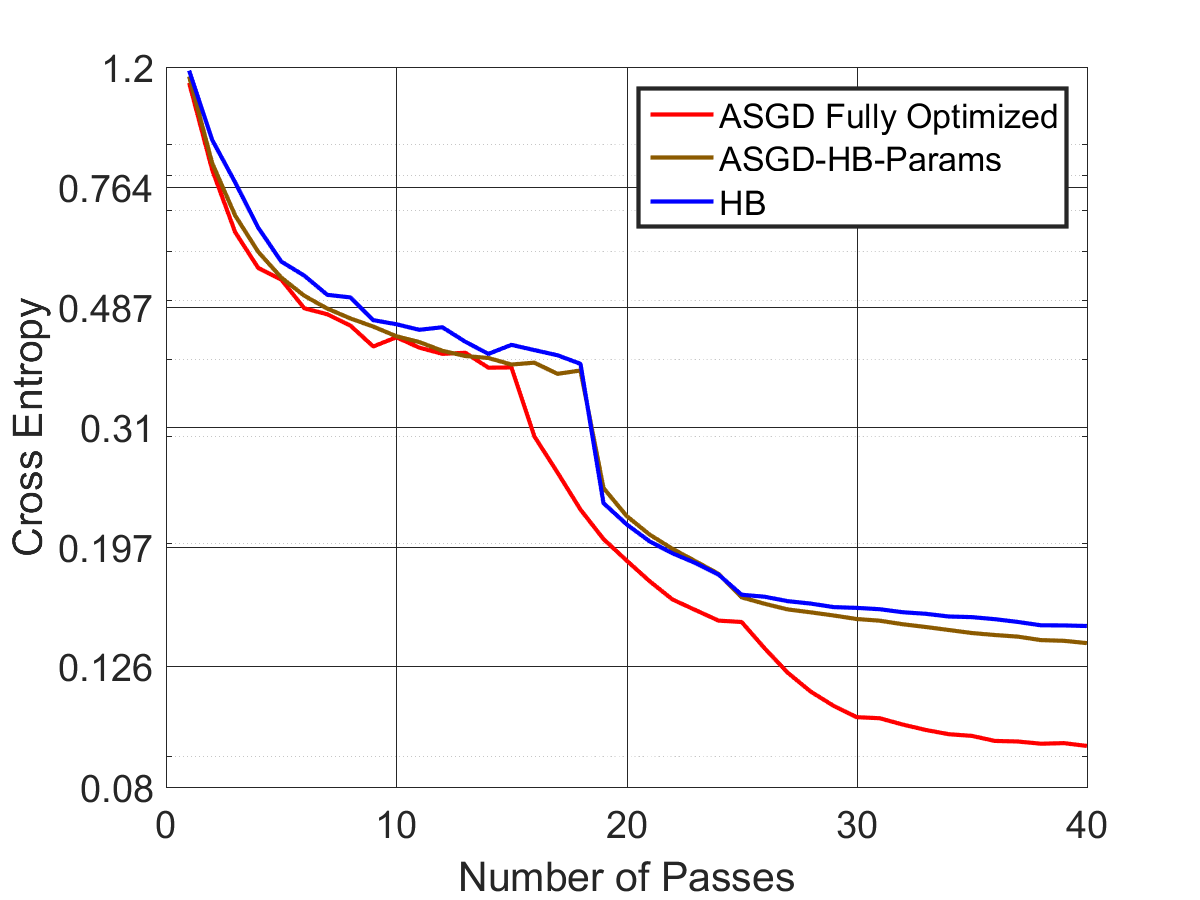}}%\vspace*{-10pt}
	\caption{Test zero one loss for batch size $128$ (left), batch size $8$ (center) and training function value for batch size $8$ (right) for~\asgd~compared to~\hb. In the above plots, both~\asgd~and~\asgd-Hb-Params refer to~\asgd~run with the learning rate and decay schedule of~\hb.~\asgd-Fully-Optimized refers to~\asgd~where learning rate and decay schedule were also selected by grid search.}
	\label{fig:cifar-test-hb-asgd}%\vspace*{-5mm}
\end{figure}
\begin{figure}[t!]
	\centering
	\subfigure{\includegraphics[width=0.32\columnwidth]{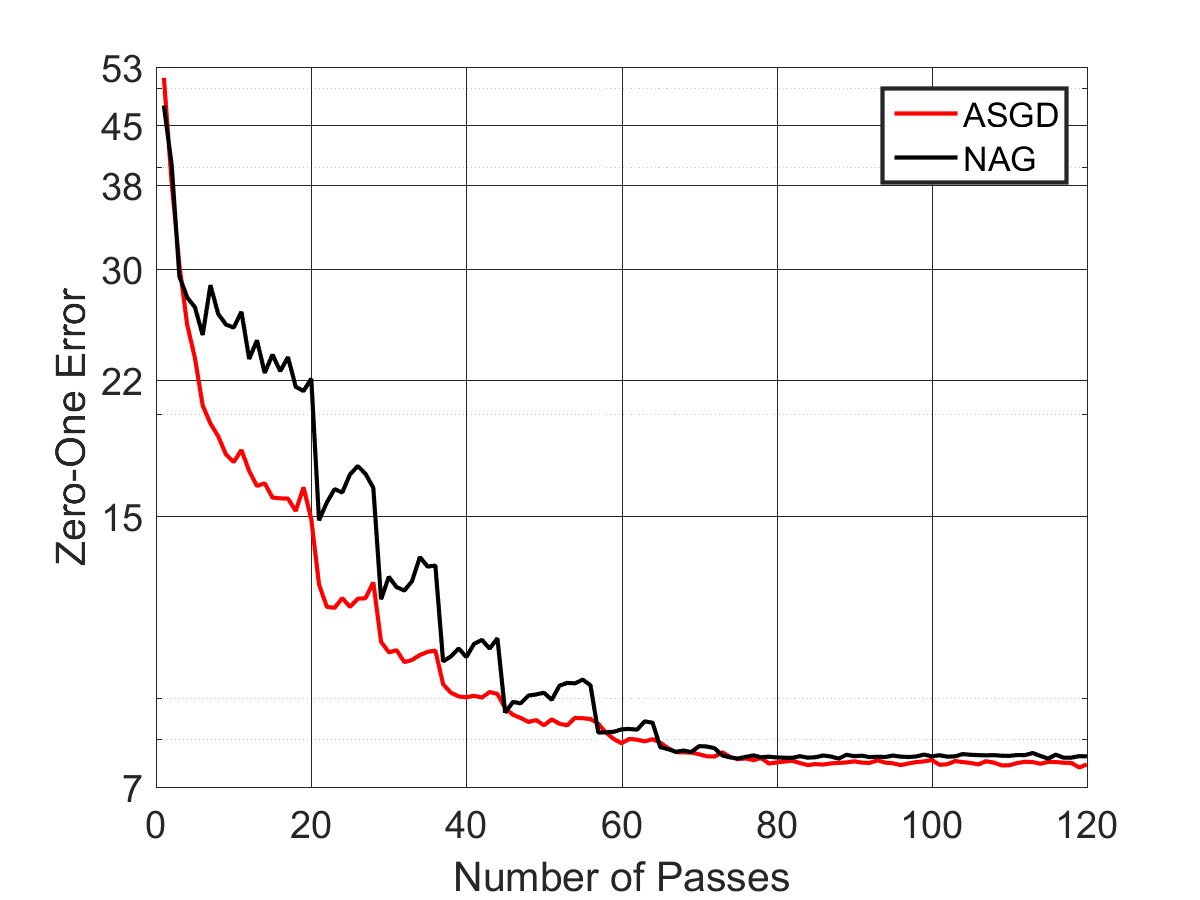}}
	\subfigure{\includegraphics[width=0.32\columnwidth]{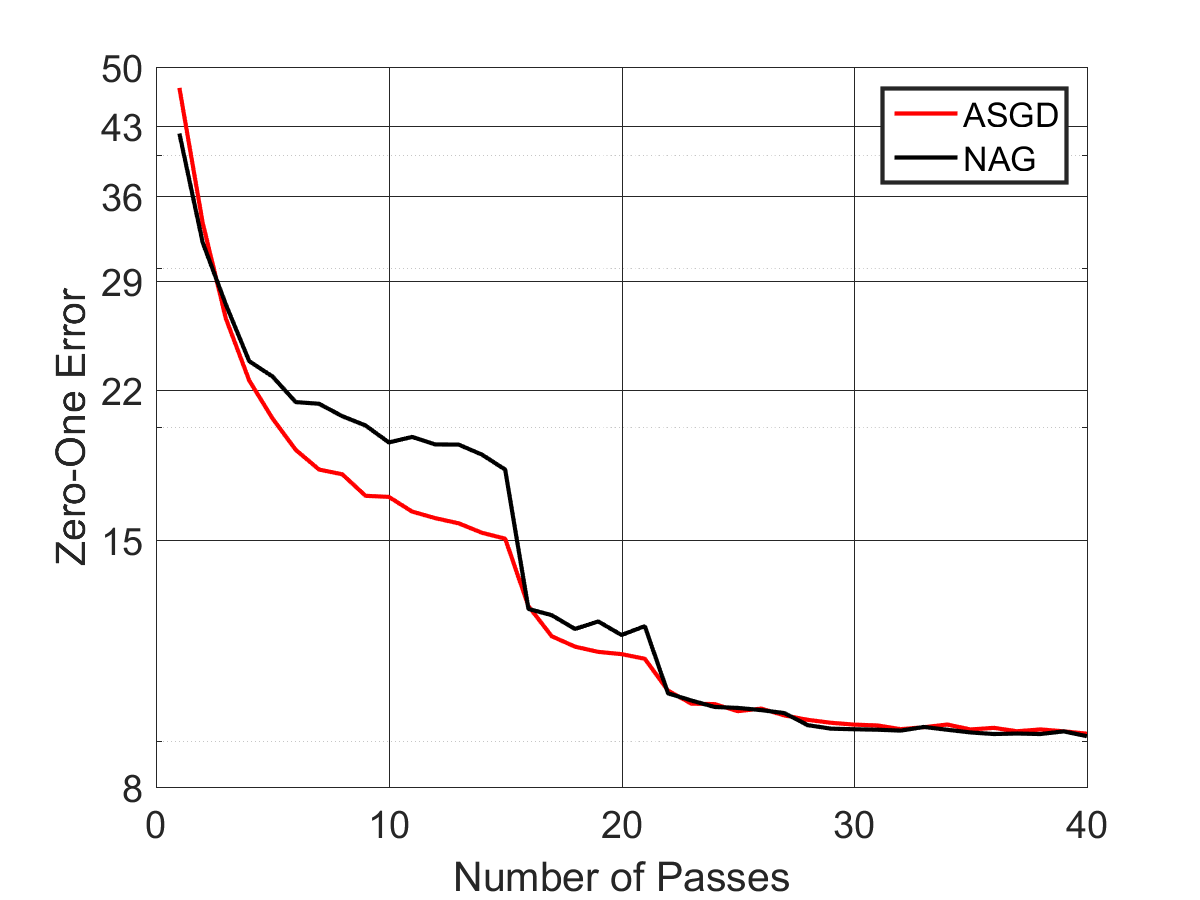}}
	\subfigure{\includegraphics[width=0.32\columnwidth]{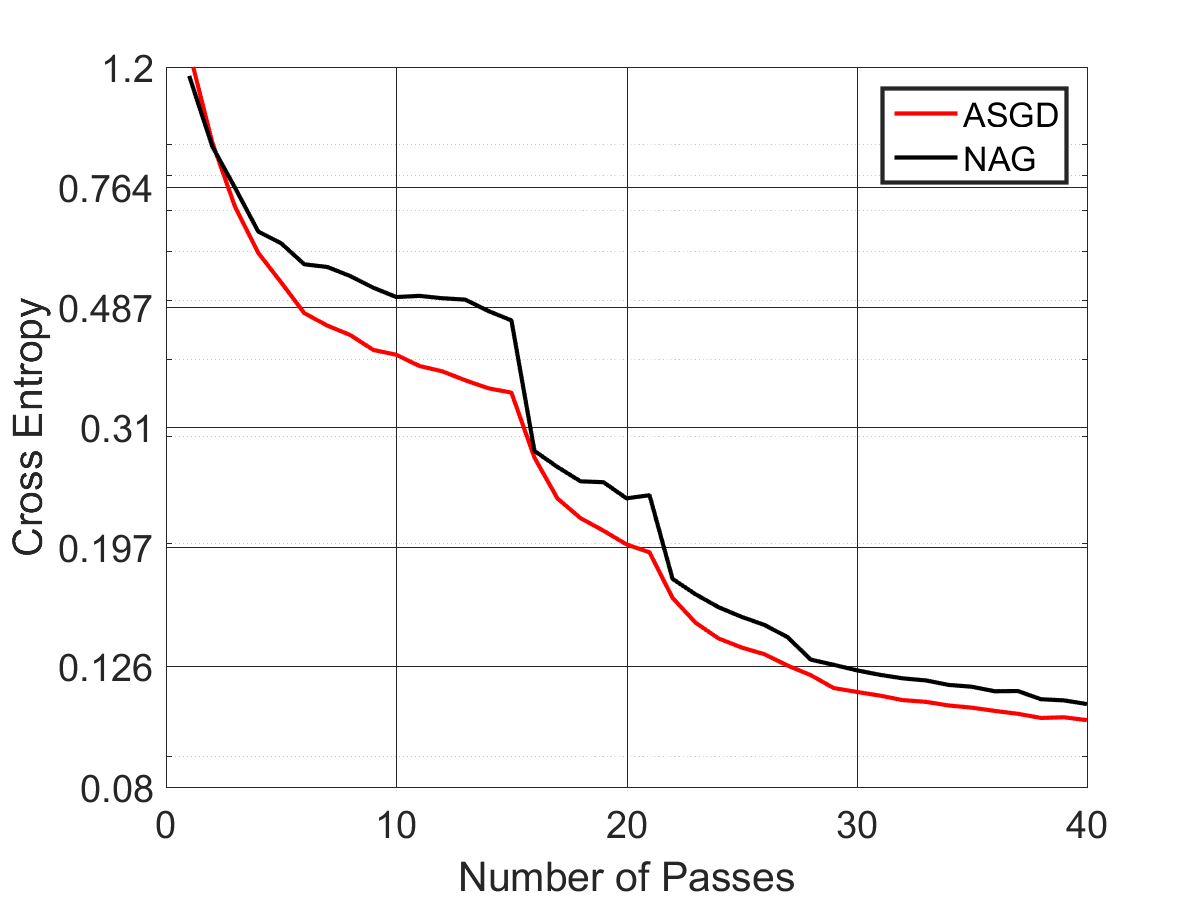}}%\vspace*{-10pt}
	\caption{Test zero one loss for batch size $128$ (left), batch size $8$ (center) and training function value for batch size $8$ (right) for~\asgd~compared to~\nag. In the above plots,~\asgd~was run with the learning rate and decay schedule of~\nag. Other parameters were selected by grid search.}
	\label{fig:cifar-test-nag-asgd}%\vspace*{-4mm}
\end{figure}

\section{Related Work}\label{sec:related}
{\bf First order oracle methods}: The primary method in this family is Gradient Descent (GD)~\citep{cauchy1847}. As mentioned previously, GD is suboptimal for smooth convex optimization~\citep{Nesterov04}, and this is addressed using momentum methods such as the Heavy Ball method~\citep{polyak1964some} (for quadratics), and Nesterov's Accelerated gradient descent~\citep{nesterov1983method}.
\iffalse First order oracle methods assume a blackbox access to the function's gradient and hence can access the function via its gradient only \cite{nestyerovnemirovski}. Classical gradient descent (\gd) is a representative first order method and is heavily used in practice owing to its simplicity. But, for smooth convex functions it is well known that its rate of convergence can be significantly worse than that of momentum based methods \cite{polyak1964some, nesterov1983method}. For example, for smooth convex functions the rate of convergence of \gd scales as $O(1/\sqrt{\epsilon})$ while that of \nag or \hb scales as $O(1/\sqrt{\epsilon})$. Similarly, for smooth strongly-convex functions, the rate of convergence of \gd scales as $O(\kappa \log(1/\epsilon))$ as opposed to $O(\sqrt{\kappa}\log(1/\epsilon))$ rate achieved by \nag, where $\kappa$ is the condition number of the function, i.e., ratio of smoothness and strong convexity constant of function. \fi

{\bf Stochastic first order methods and noise stability}: The simplest method employing the \sfoa is \sgd~\citep{RobbinsM51}; the effectiveness of \sgd has been immense, and its applicability goes well beyond optimizing convex objectives. Accelerating \sgd is a tricky proposition given the instability of fast gradient methods in dealing with noise, as evidenced by several negative results which consider statistical~\citep{Proakis74,Polyak87,RoyS90}, numerical~\citep{Paige71,Greenbaum89} and adversarial errors~\citep{dAspremont08,DevolderGN14}. A result of~\citet{jain2017accelerating} developed the first provably accelerated \sgd method for linear regression which achieved minimax rates, inspired by a method of~\citet{Nesterov12}. Schemes of~\citet{ghadimi2012optimal,ghadimi2013optimal,DieuleveutFB16}, which indicate acceleration is possible with noisy gradients do not hold in the \sfoa model satisfied by algorithms that are run in practice (see~\citet{jain2017accelerating} for more details). 

While \hb~\citep{polyak1964some} and \nag~\citep{nesterov1983method} are known to be effective in case of exact first order oracle, for the~\sfoa, the theoretical performance of \hb and \nag is not well understood.%~\citet{Polyak87} describes~\hb to be rather brittle when provided with noisy gradient estimates.

{\bf Understanding Stochastic Heavy Ball:} Understanding \hb's performance with inexact gradients has been considered in efforts spanning several decades, in many communities like controls, optimization and signal processing.~\citet{Polyak87} considered \hb with noisy gradients and concluded that the improvements offered by \hb with inexact gradients vanish unless strong assumptions on the inexactness was considered; an instance of this is when the variance of inexactness decreased as the iterates approach the minimizer. \citet{Proakis74,RoyS90,SharmaSB98} suggest that the improved non-asymptotic rates offered by stochastic \hb arose at the cost of worse asymptotic behavior. We resolve these unquantified improvements on rates as being just constant factors over \sgd, in stark contrast to the gains offered by \asgd.~\citet{LoizouR17} state their method as Stochastic \hb but require stochastic gradients that nearly behave as exact gradients; indeed, their rates match that of the standard \hb method~\citep{polyak1964some}. Such rates are not information theoretically possible (see \citet{jain2017accelerating}), especially with a batch size of $1$ or even with constant sized minibatches.

{\bf Accelerated and Fast Methods for finite-sums:} There have been developments pertaining to faster methods for finite-sums (also known as offline stochastic optimization): amongst these are methods such as SDCA~\citep{ShwartzZ12}, SAG~\citep{RouxSB12}, SVRG~\citep{JohnsonZ13}, SAGA~\citep{DefazioBJ14}, which offer linear convergence rates for strongly convex finite-sums, improving over SGD's sub-linear rates~\citep{RakhlinSS12}. These methods have been improved using accelerated variants~\citep{ShwartzZ14,FrostigGKS15b,LinMH15,Defazio16,Zhu16}. Note that these methods require storing the entire training set in memory and taking multiple passes over the same for guaranteed progress. Furthermore, these methods require computing a batch gradient or require  memory requirements (typically $\Omega(|\text{ training data points}|)$). For deep learning problems, data augmentation is often deemed necessary for achieving good performance; this implies computing quantities such as batch gradient (or storage necessities) over this augmented dataset is often infeasible. Such requirements are mitigated by the use of simple streaming methods such as \sgd, \asgd, \hb, \nag. For other technical distinctions between the offline and online stochastic methods refer to \citet{FrostigGKS15}.

{\bf Practical methods for training deep networks}: Momentum based methods employed with stochastic gradients~\citep{sutskever2013importance} have become standard and popular in practice. These schemes tend to outperform standard \sgd on several important practical problems. As previously mentioned, we attribute this improvement to effect of mini-batching rather than improvement offered by \hb or \nag in the \sfoa model. Schemes such as Adagrad~\citep{Duchi2011Adagrad}, RMSProp~\citep{Hinton2012RMS}, Adam~\citep{Kingma2014Adam} represent an important and useful class of algorithms. The advantages offered by these methods are orthogonal to the advantages offered by fast gradient methods; it is an important direction to explore augmenting these methods with \asgd as opposed to standard \hb or \nag based acceleration schemes.

\citet{Chaudhari17entropic} proposed Entropy-SGD, which is an altered objective that adds a local strong convexity term to the actual empirical risk objective, with an aim to improve generalization. However, we do not understand convergence rates for convex problems or the generalization ability of this technique in a rigorous manner.~\citet{Chaudhari17entropic} propose to use SGD in their procedure but mention that they employ the \hb/\nag method in their implementation for achieving better performance. Naturally, we can use \asgd in this context. Path normalized SGD~\citep{NeyshaburSS15} is a variant of \sgd that alters the metric on which the weights are optimized. As noted in their paper, path normalized \sgd could be improved using \hb/\nag (or even the \asgd method).

\section{Conclusions and Future Directions}\label{sec:disc}
In this paper, we show that the performance gain of \hb over \sgd in stochastic setting is attributed to mini-batching rather than the algorithm's ability to {\em accelerate} with stochastic gradients. Concretely, we provide a formal proof that for several {\em easy} problem instances, \hb does not outperform \sgd despite large condition number of the problem; we observe this trend for \nag in our experiments. In contrast, \asgd~\citep{jain2017accelerating} provides significant improvement over \sgd for these problem instances. We observe similar trends when training a resnet on \cifar and an autoencoder on \mnist. This work motivates several directions such as understanding the behavior of \asgd on domains such as NLP, and developing automatic momentum tuning schemes~\citep{Zhang2017yellowfin}.% Proving a distribution independent lower bound on the performance of \hb with Stochastic Gradients is a problem of good merit.
%~\citep{Kidambi2018Insufficiency1,Jain2017Accelerating1}

\subsubsection*{Acknowledgments}
Sham Kakade acknowledges funding from Washington Research Foundation Fund for Innovation in Data-Intensive Discovery and the NSF through awards CCF-$1637360$, CCF-$1703574$ and CCF-$1740551$.

%\subsubsection*{Acknowledgments}

%Use unnumbered third level headings for the acknowledgments. All
%acknowledgments, including those to funding agencies, go at the end of the paper.
\pagebreak
%\bibliography{refs}
%\bibliographystyle{abbrvnat}

\pagebreak
\appendix
\section{Suboptimality of~\hb: Proof of Proposition~\ref{prop:hb}}\label{app:hb-proof}
Before proceeding to the proof, we introduce some additional notation. Let $\thetav_{t+1}^{(j)}$ denote the concatenated and centered estimates in the $j^{\textrm{th}}$ direction for $j=1,2$.
\begin{align*}
\thetav_{t+1}^{(j)} \eqdef \begin{bmatrix}\w_{t+1}^{(j)}-(\w^*)^{(j)}\\\w_{t}^{(j)}-(\w^*)^{(j)}\end{bmatrix}, \quad j = 1,2.
\end{align*}
Since the distribution over $x$ is such that the coordinates are decoupled, we see that $\thetav_{t+1}^{(j)}$ can be written in terms of $\thetav_{t}^{(j)}$ as:
%\begin{align*}
%\thetav_{t+1}^{(j)} = \begin{bmatrix} (1+\alpha)\eye-\delta (x_{t+1}^{(j)})^2 & -\alpha\eye \\ \eye&\zo \end{bmatrix}\thetav_t^{(j)}\eqdef \Ah_{t+1}\thetav_t,
%\end{align*}
%where $\Hhat_{t} \eqdef x_{t}x_{t}^\top$.
%Note that $\E{\Ah_{t+1}}=\A$ corresponds to the deterministic update in the heavy ball method. As with previous work~\citep{DefossezB15,JainKKNS16,JainKKNS17}, the analysis goes via tracking $\phiv_{t+1}\eqdef\E{\thetav_{t+1}\otimes\thetav_{t+1}}$. Owing to the fact that every dimension of the problem decouples from others, it suffices to analyze each dimension separately from the rest. This is viewed as rearranging the co-ordinates using permutation matrices. Thus, for each of the dimensions $j=1,2$:
\begin{align*}
\thetav_{t+1}^{(j)} = \Ah_{t+1}^{(j)}\thetav_t^{(j)}, \mbox{ with } \Ah_{t+1}^{(j)}=\begin{bmatrix}1+\alpha-\delta (a_{t+1}^{(j)})^2&-\alpha\\1&0\end{bmatrix}.
\end{align*}
Let $\phiv_{t+1}^{(j)} \eqdef \E{\thetav_{t+1}^{(j)}\otimes\thetav_{t+1}^{(j)}}$ denote the covariance matrix of $\thetav_{t+1}^{(j)}$.
We have $\phiv_{t+1}^{(j)} = \BT^{(j)} \phiv_{t}^{(j)}$ with, $\BT^{(j)}$ defined as
\begin{align*}
\BT^{(j)} &\eqdef \begin{bmatrix} \E{(1+\alpha-\delta(a^{(j)})^2)^2} & \E{-\alpha(1+\alpha-\delta(a^{(j)})^2)}& \E{-\alpha(1+\alpha-\delta(a^{(j)})^2} & \alpha^2\\ \E{(1+\alpha-\delta(a^{(j)})^2)} &0&-\alpha&0\\\E{(1+\alpha-\delta(a^{(j)})^2)} & -\alpha & 0 & 0\\1 & 0 & 0 &0 \end{bmatrix}\\
&=\begin{bmatrix}(1+\alpha-\delta\sigma_j^2)^2+(c-1)(\delta\sigma_j^2)^2&-\alpha(1+\alpha-\delta\sigma_j^2)&-\alpha(1+\alpha-\delta\sigma_j^2)&\alpha^2\\
(1+\alpha-\delta\sigma_j^2)&0&-\alpha&0\\
(1+\alpha-\delta\sigma_j^2)&-\alpha&0&0\\
1&0&0&0\end{bmatrix}.
\end{align*}
We prove Proposition~\ref{prop:hb} by showing that for any choice of stepsize and momentum, either of the two holds: \begin{itemize}
	\item $\BT^{(1)}$ has an eigenvalue larger than $1$, or, 
	\item the largest eigenvalue of $\BT^{(2)}$ is greater than $1 - \frac{500}{\kappa}$.
\end{itemize}
This is formalized in the following two lemmas.
\begin{lemma}\label{lem:top-eig}
	If the stepsize $\delta$ is such that $\delta \sigma_1^2 \geq \frac{2\left(1-\alpha^2\right)}{c+(c-2)\alpha}$, then $\BT^{(1)}$ has an eigenvalue $\geq 1$.
\end{lemma}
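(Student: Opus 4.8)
The entire lemma reduces to a single sign computation, so the plan is to work with the characteristic polynomial $\chi(\lambda) \eqdef \det(\lambda\Id - \BT^{(1)})$. This is a monic degree-four polynomial, hence $\chi(\lambda)\to+\infty$ as $\lambda\to+\infty$; therefore, if I can show $\chi(1)=\det(\Id-\BT^{(1)})\le 0$, the intermediate value theorem immediately yields a real root in $[1,\infty)$, i.e. a (real) eigenvalue of $\BT^{(1)}$ that is at least $1$. So the whole task is to evaluate the scalar $\det(\Id-\BT^{(1)})$ and pin down its sign in terms of the hypothesis on $\delta\sigma_1^2$.

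To evaluate the determinant I would introduce the shorthand $s\eqdef\delta\sigma_1^2$, $p\eqdef 1+\alpha-s$, and $v\eqdef(\c-1)s^2$ for the variance term occupying the top-left entry, so that $\Id-\BT^{(1)}$ has a sparse last column carrying only two nonzero entries (namely $-\alpha^2$ and $1$). Expanding along that column reduces the $4\times4$ determinant to two $3\times3$ minors, one of which collapses at once because of its bottom row $(-1,0,0)$. Carrying out the arithmetic, repeatedly extracting the common factor $(1-\alpha)$, and using the difference-of-squares identity $(1+\alpha)^2-p^2=s\,(2+2\alpha-s)$, I expect the determinant to factor cleanly as
\[
\det(\Id-\BT^{(1)}) = (1-\alpha)\,s\,\Big(2(1-\alpha^2) - s\big[\c+(\c-2)\alpha\big]\Big).
\]

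With the determinant in this form the conclusion is transparent: the prefactor $(1-\alpha)s$ is nonnegative for $\alpha\in[0,1]$ and $s>0$, and the quantity $\c+(\c-2)\alpha$ is strictly positive since $\c\ge 2$. Hence the bracketed term is $\le 0$ exactly when $s\ge\frac{2(1-\alpha^2)}{\c+(\c-2)\alpha}$, which is precisely the hypothesized stepsize condition. This gives $\chi(1)=\det(\Id-\BT^{(1)})\le 0$, and the existence of an eigenvalue $\ge 1$ follows. The boundary case $\alpha=1$ is even simpler, since the determinant then vanishes identically, so $1$ is itself an eigenvalue.

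The only genuine work is the determinant evaluation, and the main obstacle is purely organizational: arranging the cofactor expansion and the ensuing algebra so that everything collapses to the stated factorization matching the threshold exactly. Keeping the signs straight in the expansion and spotting both the factor $(1-\alpha)$ and the identity for $(1+\alpha)^2-p^2$ are where care is needed; past that the argument is routine.
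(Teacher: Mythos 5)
Your proposal is correct and follows essentially the same route as the paper: the paper likewise computes the characteristic polynomial of $\BT^{(1)}$, evaluates it at $z=1$ to obtain exactly the factorization $(1-\alpha)\,x\,\bigl(2(1-\alpha^2)-x\,[c+(c-2)\alpha]\bigr)$ with $x=\delta\sigma_1^2$, and concludes from $D(1)\le 0$ together with $D(z)\to+\infty$ that a real eigenvalue $\ge 1$ exists. The only cosmetic difference is that you propose expanding $\det(\Id-\BT^{(1)})$ directly, whereas the paper first records the full characteristic polynomial and then substitutes $z=1$.
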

\begin{lemma}\label{lem:bottom-eig}
	If the stepsize $\delta$ is such that $\delta \sigma_1^2 < \frac{2\left(1-\alpha^2\right)}{c+(c-2)\alpha}$, then $\BT^{(2)}$ has an eigenvalue of magnitude $\geq 1-\frac{500}{\kappa}$.
\end{lemma}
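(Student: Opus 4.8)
\textbf{Proof proposal for Lemma~\ref{lem:bottom-eig}.}

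The plan is to reduce the spectral question for the $4\times4$ matrix $\BT^{(2)}$ to a single cubic by exploiting the symmetry between its two middle coordinates (the entries tracking $\thetav_t^{(2)}$'s cross terms $\theta_1\theta_2$ and $\theta_2\theta_1$). Writing $p\eqdef 1+\alpha-\delta\sigma_2^2$ and $r\eqdef(c-1)(\delta\sigma_2^2)^2$, I would first verify that $(0,1,-1,0)^\top$ is an eigenvector of $\BT^{(2)}$ with eigenvalue $\alpha$, and that the complementary symmetric subspace spanned by $(1,0,0,0)^\top$, $(0,1,1,0)^\top$, $(0,0,0,1)^\top$ is invariant, with $\BT^{(2)}$ acting there as
\begin{align*}
M \eqdef \begin{bmatrix} p^2+r & -\alpha p & \alpha^2 \\ 2p & -\alpha & 0 \\ 1 & 0 & 0 \end{bmatrix}.
\end{align*}
Hence $\rho(\BT^{(2)})=\max\{\alpha,\rho(M)\}$, and it suffices to exhibit a real eigenvalue of $M$ in $[1-\tfrac{500}{\kappa},1]$. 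We may assume $\kappa>500$ (otherwise $1-\tfrac{500}{\kappa}<0$ and the claim is vacuous), that $\delta>0$ (the case $\delta\le0$ being degenerate), and that $\alpha<1-\tfrac{500}{\kappa}$ (otherwise the eigenvalue $\alpha$ already has the required magnitude).

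Next I would compute the characteristic polynomial of $M$,
\begin{align*}
g(\lambda)=\lambda^3-\big(p^2+r-\alpha\big)\lambda^2-\alpha\big(\alpha-p^2+r\big)\lambda-\alpha^3,
\end{align*}
and record the key structural fact that at $x\eqdef\delta\sigma_2^2=0$ it factors as $g(\lambda)=(\lambda-1)(\lambda-\alpha)(\lambda-\alpha^2)$, so $M$ has an eigenvalue exactly $1$ in the noiseless limit; turning on $x>0$ perturbs $g$ by $D\lambda^2-\alpha E\lambda$ with $D\eqdef 2(1+\alpha)x-cx^2$ and $E\eqdef 2(1+\alpha)x+(c-2)x^2$, and I would track the root that sits near $1$. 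Using the problem relation $x=\tfrac{c}{\kappa}\,\delta\sigma_1^2$ and the hypothesis $\delta\sigma_1^2<\tfrac{2(1-\alpha^2)}{c+(c-2)\alpha}$ together with $\kappa>c$, a direct simplification yields the closed form
\begin{align*}
g(1)=x\Big(2(1-\alpha^2)-\big(c+(c-2)\alpha\big)x\Big)>0.
\end{align*}

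The plan is then to locate the perturbed root by the intermediate value theorem: since $g(1)>0$, it is enough to prove $g\!\left(1-\tfrac{500}{\kappa}\right)\le 0$, which forces a real root of $g$ in $\left(1-\tfrac{500}{\kappa},1\right)$, the desired eigenvalue of $M$. Writing $\eta\eqdef\tfrac{500}{\kappa}$ and using $g(1)=D-\alpha E$, one obtains the exact identity
\begin{align*}
g(1-\eta)=(1-\alpha-\eta)\Big[(1-\eta)\,2(1+\alpha)x-\eta(1-\alpha^2-\eta)\Big]-(1-\eta)\,x^2\big(c+(c-2)\alpha-\eta c\big).
\end{align*}
The last term is negative, and $(1-\alpha-\eta)>0$ in the remaining case, so it remains to show the bracket is nonpositive. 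Here I would feed in the clean bound $x\le\tfrac{4(1-\alpha)}{\kappa}$ (which follows from the step-size hypothesis and the elementary inequality $c+(c-2)\alpha\ge\tfrac{c(1+\alpha)}{2}$); substituting it reduces the bracket's sign to the comparison $8(1-\alpha^2)(1-\eta)\le 500(1-\alpha^2-\eta)$, which holds for all admissible $\alpha$ precisely when $\eta\le1$, i.e. $\kappa\ge500$.

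The main obstacle, and the only place the estimate is tight, is the regime $\alpha\uparrow 1-\tfrac{500}{\kappa}$: there all three roots of $g$ cluster near $1$, the factor $(1-\alpha-\eta)$ multiplying the dominant term becomes tiny, and the naive displacement estimate $1-\lambda\approx\tfrac{2x}{1-\alpha}$ has a vanishing denominator. What rescues the argument is that the $(1-\alpha^2)$ appearing in the step-size hypothesis forces $x=O\!\big(\tfrac{1-\alpha}{\kappa}\big)$, so the displacement stays $O(1/\kappa)$ uniformly; concretely, when checking $8(1-\alpha^2)(1-\eta)\le 500(1-\alpha^2-\eta)$ one must not weaken $1-\alpha^2=(1-\alpha)(1+\alpha)$ to $1-\alpha$, since retaining the factor $(1+\alpha)\to2$ as $\alpha\to1-\eta$ is exactly what makes the inequality hold. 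The generous constant $500$ (rather than the conjectured value $\le5$) is chosen precisely so that these cross terms need no finer bookkeeping.
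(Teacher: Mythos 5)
Your proof is correct, and while it shares the paper's top-level strategy---split on the size of the momentum, then locate a sign change of the characteristic polynomial between $1-O(1/\kappa)$ and $1$ via the intermediate value theorem---the execution is genuinely different and in several respects cleaner. The paper handles large momentum ($\alpha\geq 1-450/\kappa$) through $\det(\BT)=\alpha^4\leq(\lambda_{\max}(\BT))^4$, works directly with the quartic $D(1-\tau)$, substitutes $\tau=\l/\kappa$ with $\l=1+\tfrac{2c\delta\sigma_1^2}{1-\alpha}\in[1,9]$, and kills the resulting polynomial $G(\l)$ by pairing terms of adjacent orders in $1/\kappa$; that bookkeeping is what forces the auxiliary hypothesis $2<c<3000$ in Lemma~\ref{lem:small-mom}. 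You instead peel off the antisymmetric eigenvector $(0,1,-1,0)^\top$ with eigenvalue $\alpha$ (which both disposes of the large-momentum case immediately and reduces the problem to a cubic $g$ on the symmetric invariant subspace), derive an exact closed form for $g(1-\eta)$ at the target point $\eta=500/\kappa$, and close with the single bound $x\leq 4(1-\alpha)/\kappa$; I verified the identity for $g(1-\eta)$ and the extremal-case check at $\alpha=1-\eta$ (where the comparison $8(1-\alpha^2)(1-\eta)\leq 500(1-\alpha^2-\eta)$ is tight exactly at $\eta=1$), and both are right. What your route buys is the elimination of any upper bound on $c$ and of the term-by-term estimates; what the paper's route buys is that the same quartic $D(1-\tau)$ and its expansion are reused verbatim from Lemma~\ref{lem:top-eig} and Corollary~\ref{cor:ceq}, so no new matrix decomposition is needed. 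The only points to make explicit in a final write-up are the degenerate cases you already flag in passing ($\kappa\leq 500$ vacuous, $\delta\leq 0$ giving a root $\geq 1$ directly, and $x=\delta\sigma_2^2<\delta\sigma_1^2$ so the step-size hypothesis transfers to the second coordinate).
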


Given this notation, we can now consider the $j^{th}$ dimension without the superscripts; when needed, they will be made clear in the exposition. Denoting $x\eqdef\delta\sigma^2$ and $t\eqdef1+\alpha-x$, we have:
\begin{align*}
\BT = \begin{bmatrix} t^2 + (c-1) x^2 & -\alpha t & -\alpha t & \alpha^2 \\
t & 0 & -\alpha & 0\\
t & -\alpha & 0 & 0\\
1 & 0 & 0 & 0\end{bmatrix}
\end{align*}
%\subsection{Main Theorem}
%\begin{theorem}
%	For any permissible range of the step size parameter $\delta$ and the momentum parameter $\alpha$ that allows for the update operator $\BT$ to be a contraction operator, the smallest eigen direction of $\H$ contracts at a rate that is no faster than $\mathcal{O}(1/\cnH)$.
%\end{theorem}

\subsection{Proof}
The analysis goes via computation of the characteristic polynomial of $\BT$ and evaluating it at different values to obtain bounds on its roots.
\begin{lemma}\label{lem:char}
	The characteristic polynomial of $\BT$ is:
	\begin{align*}
	D(z)=z^4-(t^2+(c-1)x^2)z^3+(2\alpha t^2-2\alpha^2)z^2+(-t^2+(c-1)x^2)\alpha^2 z+\alpha^4.
	\end{align*}
\end{lemma}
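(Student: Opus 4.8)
The statement is a determinant computation, so the plan is a direct cofactor expansion that exploits the sparsity of $\BT$. First I would form $z\Id - \BT$ and note that both its last row and its last column contain only two nonzero entries: the last row is $(-1,\,0,\,0,\,z)$ and the last column is $(-\alpha^2,\,0,\,0,\,z)\T$. This makes expansion along the last column (or equivalently the last row) the natural opening move, reducing $D(z)=\det(z\Id-\BT)$ to a combination of just two $3\times 3$ minors.

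Concretely, I would write $D(z) = (-\alpha^2)\,C_{14} + z\,C_{44}$, where $C_{14}=(-1)^{1+4}M_{14}$ and $C_{44}=M_{44}$ are the cofactors along the last column. The minor $M_{14}$ is itself sparse — its bottom row is $(-1,0,0)$ — so one more expansion collapses it to the $2\times 2$ determinant $z^2-\alpha^2$, giving $C_{14}=z^2-\alpha^2$ and hence a contribution $-\alpha^2 z^2 + \alpha^4$ to $D(z)$. For $M_{44}$ I would expand along its first row; the two off-diagonal $2\times 2$ determinants come out to $\pm\,t(z-\alpha)$ and combine constructively into $2\alpha t^2(z-\alpha)$, yielding $M_{44} = \bigl(z - t^2 - (c-1)x^2\bigr)(z^2-\alpha^2) + 2\alpha t^2(z-\alpha)$.

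The last step is to multiply $M_{44}$ by $z$, add the $-\alpha^2 z^2 + \alpha^4$ piece, and collect powers of $z$. The $z^4$ and $z^3$ coefficients read off immediately, while the $z^2$ coefficient merges three contributions into $2\alpha t^2 - 2\alpha^2$ (from $-\alpha^2$, $2\alpha t^2$, and the extra $-\alpha^2$) and the $z^1$ coefficient becomes $\alpha^2\bigl(-t^2+(c-1)x^2\bigr)$ after the $\alpha^2 t^2$ terms partially cancel. Since this is a pure verification, there is no genuine conceptual obstacle; the only thing to watch is the sign bookkeeping in the cofactor factors $(-1)^{i+j}$ and the cancellation $\alpha^2 t^2 - 2\alpha^2 t^2 = -\alpha^2 t^2$ in the linear coefficient, which is the one place a dropped sign would slip through unnoticed. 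Note that the identity uses nothing about the specific substitutions $x=\delta\sigma^2$ or $t=1+\alpha-x$; it holds for $\BT$ treated as a matrix in the formal symbols $t,x,\alpha,c$.
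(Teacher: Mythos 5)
Your computation is correct and is essentially the same argument as the paper's: a direct cofactor expansion of the characteristic determinant, with the only cosmetic difference that you expand $z\Id-\BT$ along its sparse last column while the paper expands $\BT-z\eyeT$ along its first column (the two conventions agree here since the matrix is $4\times4$). All the intermediate quantities you report, including $C_{14}=z^2-\alpha^2$, $M_{44}=\bigl(z-t^2-(c-1)x^2\bigr)(z^2-\alpha^2)+2\alpha t^2(z-\alpha)$, and the cancellation $\alpha^2t^2-2\alpha^2t^2=-\alpha^2t^2$ in the linear coefficient, check out against the stated polynomial.
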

\begin{proof}
	We first begin by writing out the expression for the determinant:
	\begin{align*}
	Det(\BT-z\eyeT)=\begin{vmatrix} t^2+(c-1)x^2-z & -\alpha t & -\alpha t & \alpha^2\\ 
	t & -z & -\alpha & 0\\
	t & -\alpha & -z & 0\\
	1 & 0 & 0 & -z \end{vmatrix}.
	\end{align*}
	expanding along the first column, we have:{\small
	\begin{align*}
	Det(\BT-z\eyeT)&=(t^2+(c-1)x^2-z)(\alpha^2z-z^3)-t(-\alpha t z^2 + \alpha^2 t z)+t(-\alpha t (\alpha z)+z\cdot \alpha t z)-(z\cdot \alpha^2 z - \alpha^4)\\
	&=(t^2+(c-1)x^2-z)(\alpha^2z-z^3)-2t(\alpha^2 tz-\alpha t z^2)-(\alpha^2 z^2-\alpha^4).
	\end{align*}}
	Expanding the terms yields the expression in the lemma.
\end{proof}

The next corollary follows by some simple arithmetic manipulations.
\begin{corollary}
	\label{cor:ceq}
	Substituting $z=1-\tau$ in the characteristic equation of Lemma~\ref{lem:char}, we have:
	\begin{align}
	\label{eq:charEqnMod}
	D(1-\tau) &= \tau^4 + \tau^3 (-4 + t^2 + (c-1) x^2) + \tau^2 (6-3t^2-3(c-1)x^2-2\alpha^2+2\alpha t^2)\nonumber\\ 
	&+\tau(-4+3t^2+3(c-1)x^2+4\alpha^2-4\alpha t^2-(c-1)x^2\alpha^2+t^2\alpha^2)\nonumber\\
	&+ (1-t^2-(c-1)x^2-2\alpha^2+2\alpha t^2 +(c-1)x^2\alpha^2-t^2\alpha^2+\alpha^4)\nonumber\\
	&=\tau^4 + \tau^3[-(3+\alpha)(1-\alpha)-2x(1+\alpha)+cx^2]\nonumber\\
	&+\tau^2[(3-4\alpha-\alpha^2+2\alpha^3)-2x(1+\alpha)(2\alpha-3)+x^2(2\alpha-3c)]\nonumber\\
	&+\tau[-(1-\alpha)^2(1-\alpha^2)-2x(3-\alpha)(1-\alpha^2)+x^2(3c-4\alpha+(2-c)\alpha^2)]\nonumber\\
	&+x(1-\alpha)[2(1-\alpha^2)-x(c+(c-2)\alpha)].
	\end{align}
\end{corollary}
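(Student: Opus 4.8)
The statement is a pure computation: I must substitute $z=1-\tau$ into the quartic $D(z)$ from Lemma~\ref{lem:char} and reorganize the result as a polynomial in $\tau$ whose coefficients are made explicit in $x=\delta\sigma^2$ and $\alpha$. The cleanest route is to Taylor-expand $D$ about $z=1$, writing $D(1-\tau)=\sum_{k=0}^{4}\frac{D^{(k)}(1)}{k!}(-\tau)^k$, so that the coefficient of $\tau^k$ is $(-1)^k D^{(k)}(1)/k!$. Since $D$ is monic of degree $4$, this gives coefficient $1$ for $\tau^4$ immediately, and the remaining four coefficients reduce to evaluating $D$ and its first three derivatives at $z=1$. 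Each such evaluation is short and is linear in the quantities $t^2$, $(c-1)x^2$, $\alpha$, $\alpha^2$, $\alpha^4$, and reproduces exactly the first (three-line) form displayed in~\eqref{eq:charEqnMod}.

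The second step is to eliminate $t$ in favor of $x$ and $\alpha$. I would substitute $t=1+\alpha-x$, expand $t^2=(1+\alpha)^2-2(1+\alpha)x+x^2$, and regroup each coefficient by powers of $x$. A couple of elementary factorizations keep the bookkeeping honest, most usefully $(1+\alpha)^2-4=-(3+\alpha)(1-\alpha)$, which converts the $\tau^3$ coefficient $-4+t^2+(c-1)x^2$ into $-(3+\alpha)(1-\alpha)-2x(1+\alpha)+cx^2$, exactly as claimed. The same mechanical regrouping applied to the $\tau^2$ and $\tau^1$ coefficients yields the remaining lines.

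A valuable internal check --- and in fact the line that matters most downstream --- is the constant term (coefficient of $\tau^0$), which equals $D(1)$ because $z=1$ corresponds to $\tau=0$. Collecting $D(1)=(1-\alpha^2)^2-t^2(1-\alpha)^2-(c-1)x^2(1-\alpha^2)$ and substituting $t=1+\alpha-x$, the first two terms combine into $x(1-\alpha)[2(1-\alpha^2)-x(1-\alpha)]$, and absorbing the last term gives $D(1)=x(1-\alpha)[2(1-\alpha^2)-x(c+(c-2)\alpha)]$, matching the final line of the corollary. This factored form is not cosmetic: its bracket vanishes precisely at $x=\frac{2(1-\alpha^2)}{c+(c-2)\alpha}$, which is exactly the threshold separating the hypotheses of Lemma~\ref{lem:top-eig} and Lemma~\ref{lem:bottom-eig}, so confirming it here is what lets the later sign analysis of the top eigenvalue go through.

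The only genuine obstacle is bookkeeping discipline: the $\tau^1$ coefficient is the messiest, mixing $t^2$, $t^2\alpha^2$, $(c-1)x^2$, and $(c-1)x^2\alpha^2$ contributions, so an arithmetic slip there can propagate silently. I would guard against this with two independent cross-checks before declaring the regrouping complete: first, that the constant term agrees with the $D(1)$ computation above, and second, that summing all five coefficients reproduces $D(1-\tau)\big|_{\tau=1}=D(0)=\alpha^4$.
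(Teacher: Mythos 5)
Your proposal is correct and matches the paper's (unwritten) argument: the paper dismisses this corollary as "simple arithmetic manipulations," and your direct substitution of $z=1-\tau$ (organized as a Taylor expansion about $z=1$), followed by the elimination $t=1+\alpha-x$ with the factorization $(1+\alpha)^2-4=-(3+\alpha)(1-\alpha)$ and the factoring of the constant term $D(1)$, is exactly that computation carried out explicitly. Your two cross-checks (constant term $=D(1)$ and coefficient sum $=D(0)=\alpha^4$) are sound, and I verified that all five regrouped coefficients match the stated ones.
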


%\begin{lemma}
%	The following upper bound on the step size 
%	\begin{align*}
%	\delta < \frac{2(1-\alpha^2)}{c+(c-2)\alpha}
%	\end{align*}
%	is a necessary condition for convergence of the Stochastic Heavy Ball Method update.
%\end{lemma}
%
\begin{proof}[Proof of Lemma~\ref{lem:top-eig}]
	The first observation necessary to prove the lemma is that the characteristic polynomial $D(z)$ approaches $\infty$ as $z\to\infty$, i.e., $\lim_{z\to\infty}D(z)=+\infty$. 
	
	Next, we evaluate the characteristic polynomial at $1$, i.e. compute $D(1)$. This follows in a straightforward manner from corollary~\eqref{cor:ceq} by substituting $\tau=0$ in equation~\eqref{eq:charEqnMod}, and this yields,
	\begin{align*}
	D(1)=(1-\alpha) x \cdot \bigg(2(1-\alpha^2)-x(1-\alpha)-(c-1)x(1+\alpha)\bigg).
	\end{align*}
	\iffalse
	D(1)&=1-t^2-(c-1)x^2-2\alpha^2+2\alpha t^2 +(c-1)x^2\alpha^2-t^2\alpha^2+\alpha^4\\
	&=(1-\alpha^2)^2-t^2(1-\alpha)^2-(c-1)x^2(1-\alpha^2)\\
	&=(1-\alpha)^2((1+\alpha)^2-t^2) - (c-1)x^2(1-\alpha^2)\\
	&=(1-\alpha)^2(1+\alpha-t)(1+\alpha+t)-(c-1)x^2(1-\alpha^2)\\
	&=(1-\alpha)^2 x (2(1+\alpha)-x)-(c-1)x^2(1-\alpha^2)\\
	&=(1-\alpha) x \cdot \bigg(2(1-\alpha^2)-x(1-\alpha)-(c-1)x(1+\alpha)\bigg)
	\fi
	As $\alpha<1$, $x=\delta\sigma^2>0$, we have the following by setting $D(1)\leq0$ and solving for $x$:
	\begin{align*}
	x\geq\frac{2(1-\alpha^2)}{c+(c-2)\alpha}.
	\end{align*}
	Since $D(1)\leq 0$ and $D(z)\geq 0$ as $z\rightarrow \infty$, there exists a root of $D(\cdot)$ which is $\geq 1$.
%	In particular, what this implies is the condition that there exists an eigenvalue $\geq 1$. %$x\leq\frac{2(1-\alpha^2)}{c+(c-2)\alpha}$ indicates the regime of convergence of the algorithm. Considering each eigen direction separately, we require $\delta<\frac{2(1-\alpha^2)}{c+(c-2)\alpha}\min\{1,\cnH\}=\frac{2(1-\alpha^2)}{c+(c-2)\alpha}$, which implies the claimed result.
\end{proof}
\begin{remark}
	The above characterization is striking in the sense that for any $c>1$, increasing the momentum parameter $\alpha$ naturally requires the reduction in the step size $\delta$ to permit the convergence of the algorithm, which is not observed when fast gradient methods are employed in deterministic optimization. For instance, in the case of deterministic optimization, setting $c=1$ yields $\delta\sigma_1^2<2(1+\alpha)$. On the other hand, when employing the stochastic heavy ball method with $x^{(j)} = 2 \sigma_j^2$, we have the condition that $c = 2$, and this implies, $\delta \sigma_1^2<\frac{2(1-\alpha^2)}{2}=1-\alpha^2$.
\end{remark}

We now prove Lemma~\ref{lem:bottom-eig}. We first consider the large momentum setting.
\begin{lemma}\label{lem:large-mom}
	When the momentum parameter $\alpha$ is set such that $1-450/\cnH\leq\alpha\leq 1$, $\BT$ has an eigenvalue of magnitude $\geq 1-\frac{450}{\cnH}$.
\end{lemma}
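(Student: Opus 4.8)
The plan is to exploit a striking invariance of the matrix $\BT$: its determinant equals $\alpha^4$ regardless of the stepsize (encoded in $x=\delta\sigma^2$), of the kurtosis parameter $c$, and of the direction $j$. Since the determinant is the product of the four eigenvalues, and since the geometric mean of the eigenvalue magnitudes lower-bounds the spectral radius, the largest-magnitude eigenvalue of $\BT$ is forced to be at least $\alpha$, which in the stated regime is at least $1-450/\cnH$. This converts the whole lemma into a one-line determinant computation followed by an averaging bound.

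First I would record that $\det(\BT)=\alpha^4$. This is immediate from Lemma~\ref{lem:char}: the constant term of the characteristic polynomial $D(z)=\det(\BT-z\eyeT)$ is $D(0)=\det(\BT)=\alpha^4$. If one prefers a self-contained derivation, expanding $\det(\BT)$ along the last row $(1,0,0,0)$ collapses it to a single $3\times 3$ minor, which evaluates to $\alpha^4$. The essential point is that this value is completely independent of $x$, of $t=1+\alpha-x$, and of $c$, so it applies verbatim to the matrix $\BT$ appearing in this lemma irrespective of which direction it represents.

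Next I would run the spectral-radius argument. Let $\lambda_1,\dots,\lambda_4$ denote the eigenvalues of $\BT$ (possibly complex, occurring in conjugate pairs), and set $\rho\eqdef\max_i|\lambda_i|$. Since $\prod_i\lambda_i=\det(\BT)=\alpha^4$, taking magnitudes yields $\prod_i|\lambda_i|=\alpha^4$, whence $\rho^4\geq\prod_i|\lambda_i|=\alpha^4$, i.e. $\rho\geq\alpha$. In the regime of the lemma $\alpha\geq 1-450/\cnH$, so $\rho\geq 1-450/\cnH$, which exhibits an eigenvalue of the required magnitude and proves the claim.

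There is essentially no genuine obstacle here beyond the determinant evaluation; everything else is a geometric-mean bound on the spectral radius. The only wrinkle worth flagging is the range $\cnH<450$, where $1-450/\cnH<0$ and the statement holds trivially because eigenvalue magnitudes are nonnegative; hence one may assume $\cnH\geq 450$, so that $\alpha\geq 1-450/\cnH\geq 0$ and the monotone fourth-root step $\rho\geq\alpha$ is legitimate. It is also worth noting that this argument does not use the large-momentum hypothesis $1-450/\cnH\leq\alpha$ beyond the final comparison, so the same computation cleanly feeds into the complementary small-momentum case needed to finish Lemma~\ref{lem:bottom-eig}.
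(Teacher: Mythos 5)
Your argument is correct and is essentially identical to the paper's proof, which likewise observes that $\det(\BT)=\alpha^4=\prod_{j}\lambda_j(\BT)\leq(\lambda_{\max}(\BT))^4$ and concludes $|\lambda_{\max}(\BT)|\geq\alpha\geq 1-450/\cnH$. Your added remarks (the self-contained determinant check and the trivial case $\cnH<450$) are harmless elaborations of the same one-line argument.
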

\begin{proof}
	This follows easily from the fact that $\text{det}(\BT)=\alpha^4=\prod_{j=1}^4\lambda_j(\BT)\leq(\lambda_{\max}(\BT))^4$, thus implying $1-450/\cnH \leq \alpha \leq \abs{\lambda_{\max}(\BT)}$.
\end{proof}
\begin{remark}
	Note that the above lemma holds for any value of the learning rate $\delta$, and holds for every eigen direction of $\H$. Thus, for ``large'' values of momentum, the behavior of stochastic heavy ball does degenerate to the behavior of stochastic gradient descent.
\end{remark}

We now consider the setting where momentum is bounded away from $1$.
%\pn{Clean this corollary and the following lemma.~\eqref{eq:charEqnMod} has $x$ and $t$ where as the following one does not. Specify that you are doing this for the second direction. Note that the definition of $\kappa$ is slightly different from what you already have. Note also that step size is $\delta$, momentum is $\alpha$.}
\begin{corollary}
Consider $\BT^{(2)}$, by substituting $\tau=\l/\cnH$, $x=\delta\lambda_{\min}=c(\delta\sigma_1^2)/\cnH$ in equation~\eqref{eq:charEqnMod} and accumulating terms in varying powers of $1/\cnH$, we obtain:
\begin{align}
\label{eq:charEqnMod2_1}
G(\l)&\eqdef\frac{c^3(\delta\sigma_1^2)^2\l^3}{\cnH^5}+\frac{\l^4-2c(\delta\sigma_1^2)\l^3(1+\alpha)+(2\alpha-3c)c^2(\delta\sigma_1^2)^2\l^2}{\cnH^4}\nonumber\\&+\frac{-(3+\alpha)(1-\alpha)\l^3-2(1+\alpha)(2\alpha-3)c(\delta\sigma_1^2)\l^2+(3c-4\alpha+(2-c)\alpha^2)c^2(\delta\sigma_1^2)^2\l}{\cnH^3}\nonumber\\&+\frac{(3-4\alpha-\alpha^2+2\alpha^3)\l^2-2c(\delta\sigma_1^2)\l(3-\alpha)(1-\alpha^2)-c^2(\delta\sigma_1^2)^2(1-\alpha)(c+(c-2)\alpha)}{\cnH^2}\nonumber\\&+\frac{-(1-\alpha)^2(1-\alpha^2)\l+2c(\delta\sigma_1^2)(1-\alpha)(1-\alpha^2)}{\cnH}
\end{align}
\end{corollary}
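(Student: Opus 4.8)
The plan is to treat this corollary as a pure change of variables in the characteristic polynomial from equation~\eqref{eq:charEqnMod}, so that the only real content is careful bookkeeping. First I would record why the stated substitution for $x$ is the right one on the second coordinate: there $x = \delta\sigma_2^2$, and since $\lambda_{\min}(\H)=\sigma_2^2$ while $\cnH = c\sigma_1^2/\sigma_2^2$ by definition, we have $\sigma_2^2 = c\sigma_1^2/\cnH$ and hence $x=\delta\sigma_2^2 = c(\delta\sigma_1^2)/\cnH$. The companion substitution $\tau = \l/\cnH$ is simply a rescaling that zooms into roots $z=1-\tau$ lying within $O(1/\cnH)$ of $1$, which is exactly the regime where we expect to find the eigenvalue needed for Lemma~\ref{lem:bottom-eig}.

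Next I would exploit the structure of equation~\eqref{eq:charEqnMod}: after expanding the bracketed coefficients, every term is of the form (a polynomial in $\alpha$)$\,\cdot\,\tau^a x^b$ with $a\in\{0,1,2,3,4\}$ and $b\in\{0,1,2\}$. Under the substitution, $\tau^a\mapsto \l^a/\cnH^a$ and $x^b\mapsto c^b(\delta\sigma_1^2)^b/\cnH^b$, so each monomial acquires precisely the factor $\cnH^{-(a+b)}$, carrying along its $\alpha$-coefficient together with $\l^a$ and $c^b(\delta\sigma_1^2)^b$. The key observation is therefore that the total power of $1/\cnH$ attached to a monomial is governed \emph{solely} by the sum $a+b$, so grouping all monomials with a common value of $a+b$ partitions $D(1-\l/\cnH)$ into exactly the blocks indexed by $\cnH^{-1},\cnH^{-2},\cnH^{-3},\cnH^{-4},\cnH^{-5}$ that appear in the definition of $G(\l)$.

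The remaining step is mechanical: for each value of $a+b$ from $1$ to $5$ I would list the contributing pairs and copy their $\alpha$-coefficients from equation~\eqref{eq:charEqnMod}. Concretely, $\cnH^{-1}$ collects $\tau^1x^0$ and $\tau^0x^1$; $\cnH^{-2}$ collects $\tau^2x^0,\tau^1x^1,\tau^0x^2$; $\cnH^{-3}$ collects $\tau^3x^0,\tau^2x^1,\tau^1x^2$; $\cnH^{-4}$ collects $\tau^4x^0,\tau^3x^1,\tau^2x^2$; and $\cnH^{-5}$ collects only $\tau^3x^2$ (with its coefficient $c$, producing $c^3(\delta\sigma_1^2)^2\l^3$). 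Matching these five groups against the bracketed expressions of $G(\l)$ completes the verification. There is no genuine obstacle beyond avoiding sign or arithmetic slips in the collection; the point of the rearrangement is downstream, as it exposes the $\cnH^{-1}$ block as the dominant contribution for large $\cnH$, which is what will let us locate a root $\l=O(1)$ (equivalently an eigenvalue $\geq 1-O(1)/\cnH$) in the proof of Lemma~\ref{lem:bottom-eig}.
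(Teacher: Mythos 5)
Your proposal is correct and matches the paper's (implicit) derivation: the corollary is obtained exactly by the direct substitution $\tau=\l/\cnH$, $x=\delta\sigma_2^2=c(\delta\sigma_1^2)/\cnH$ into equation~\eqref{eq:charEqnMod} and regrouping monomials $\tau^a x^b$ by total degree $a+b$, which is precisely the bookkeeping you describe, and your enumeration of the contributing pairs for each power of $1/\cnH$ checks out against the stated expression for $G(\l)$.
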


\begin{lemma}\label{lem:small-mom}
Let $2<c<3000$, $0\leq\alpha\leq1-\frac{450}{\cnH}$, $\l=1+\frac{2c(\delta\sigma_1^2)}{1-\alpha}$. Then, $G(\l)\leq0$.
\end{lemma}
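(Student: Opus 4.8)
The plan is to read $G(\l)$ as the characteristic polynomial of $\BT^{(2)}$ evaluated at $z=1-\l/\cnH$: comparing eq.~\eqref{eq:charEqnMod2_1} with Corollary~\ref{cor:ceq}, $G(\l)=D(1-\l/\cnH)$ for the second coordinate, after the substitutions $\tau=\l/\cnH$ and $x=\delta\lambda_{\min}=c(\delta\sigma_1^2)/\cnH$. Hence once I show $G(\l)\le 0$, the fact that $D(z)\to+\infty$ as $z\to\infty$ (already used in Lemma~\ref{lem:top-eig}) forces a real root of $D$ in $[1-\l/\cnH,\infty)$, i.e. a real eigenvalue of $\BT^{(2)}$ of magnitude $\ge 1-\l/\cnH$. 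Since the chosen $\l=1+2c(\delta\sigma_1^2)/(1-\alpha)$ will turn out to be bounded by an absolute constant (below $9$) in the regime $\delta\sigma_1^2<\tfrac{2(1-\alpha^2)}{c+(c-2)\alpha}$ carried over from Lemma~\ref{lem:bottom-eig}, this eigenvalue is $\ge 1-500/\cnH$, which together with the large-momentum case (Lemma~\ref{lem:large-mom}) completes Lemma~\ref{lem:bottom-eig}.

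\textbf{Normalisation and the engine of the proof.} First I would substitute $m\eqdef 1-\alpha\in[450/\cnH,1]$, $y\eqdef\delta\sigma_1^2$ and $s\eqdef cy/m$, so that $\l=1+2s$; the regime constraint bounds $s$, and hence $\l$, by an absolute constant. The decisive step is to evaluate the coefficient of $1/\cnH$ in eq.~\eqref{eq:charEqnMod2_1} at this $\l$. That coefficient equals $(1-\alpha)(1-\alpha^2)\big(-(1-\alpha)\l+2cy\big)$, which is affine in $\l$ with negative slope and vanishes exactly at $\l=2cy/(1-\alpha)$; the choice $\l=1+2cy/(1-\alpha)$ steps one unit past this root and yields the clean value $-m^3(2-m)<0$. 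This negative leading term is the engine: the whole point of the specific $\l$ is to make the lowest power of $1/\cnH$ strictly negative while keeping $\l=O(1)$.

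\textbf{Controlling the remainder.} I would then write $\cnH\,G(\l)=-m^3(2-m)+\sum_{k=2}^{5}C_k/\cnH^{\,k-1}$, where $C_k$ is the substituted coefficient of $1/\cnH^{\,k}$, and bound the tail. Three facts do the work: (i) $m\ge 450/\cnH$, i.e. $1/\cnH\le m/450$; (ii) $\cnH\ge\tilde{\kappa}=c$, i.e. $c/\cnH\le 1$, which is what tames the $c^3(\delta\sigma_1^2)^2$--type terms appearing in $C_3,C_4,C_5$; and (iii) the constant bound on $s$ and $\l$. After substituting $cy=sm$ and $c^2y^2=s^2m^2$ and factoring the appropriate power of $m$ out of each $C_k$, every positive contribution reduces to a small multiple of $m^3$ once (i)--(ii) are applied, while all negative contributions only help. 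The constant $450$ is engineered precisely so that these small multiples sum to strictly less than $m^3(2-m)$, giving $\cnH\,G(\l)\le 0$, hence $G(\l)\le 0$.

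\textbf{Main obstacle.} The difficulty is purely in the bookkeeping of signs and magnitudes across the five powers of $1/\cnH$, complicated by the fact that $m=1-\alpha$ can itself be as small as $450/\cnH$, so $1/\cnH$ is \emph{not} uniformly the dominant scale and one cannot simply keep the $1/\cnH$ term and discard the rest. The subtle point is a trade-off hidden in $s=cy/m$: the regime constraint forces $s$ close to $2$ (so $\l$ close to $5$) exactly when $c$ is large, whereas $s$ can approach its larger values only when $c$ is small and the $c$-factors are harmless. Making this trade-off explicit, so that each $C_k/\cnH^{\,k-1}$ is genuinely $O(m^3)$ with a small constant after using both $1/\cnH\le m/450$ and $c/\cnH\le 1$, is the crux, and is what forces the upper restriction $c<3000$ and the specific constant in the statement.
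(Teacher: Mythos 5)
Your skeleton matches the paper's up to the key first step: you correctly bound $\l$ by an absolute constant via $\delta\sigma_1^2\le\frac{2(1-\alpha^2)}{c+(c-2)\alpha}$, and you correctly identify that the whole point of the choice $\l=1+\frac{2c(\delta\sigma_1^2)}{1-\alpha}$ is to make the coefficient of $1/\cnH$ equal to $-(1-\alpha)^3(1+\alpha)=-m^3(2-m)$ with $m=1-\alpha$. The divergence, and the gap, is in how you control the remainder. You propose to \emph{discard all negative contributions} and show that the positive ones sum to less than $m^3(2-m)$ using only (i) $1/\cnH\le m/450$, (ii) $c/\cnH\le 1$, and (iii) constant bounds on $s$ and $\l$. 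With those three facts alone this does not close. The two dominant positive contributions to $\cnH\,G(\l)$ are $(1-\alpha)^2(3+2\alpha)\l^2/\cnH\le\frac{(3+2\alpha)\l^2}{450}m^3$ and $(3c-4\alpha+(2-c)\alpha^2)c^2(\delta\sigma_1^2)^2\l/\cnH^2\le\frac{3s^2\l}{450}m^3$; plugging in the decoupled bounds $\l\le 9$, $s\le 4$ gives $\frac{405+432}{450}m^3\approx 1.86\,m^3$, which exceeds $m^3(2-m)=(1+\alpha)m^3$ for all $\alpha<0.86$. The trade-off you invoke to repair this is stated as being between $s$ and $c$ (``$s$ large only when $c$ is small, and then the $c$-factors are harmless''), but the first of these two terms carries no $c$-factor at all, so that trade-off cannot save it; the constraint that actually matters is $s\le 2(1+\alpha)$, hence $\l\le 5+4\alpha$, which ties the size of $\l$ to the size of $2-m=1+\alpha$. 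You neither state this inequality nor verify the resulting (quite tight, roughly $7\%$ margin at $\alpha\to1$, $c\to2$) numerical comparison.

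The paper avoids this delicacy entirely by \emph{not} discarding the negative terms: it pairs the positive $\mathcal{O}(1/\cnH^3)$ term $(3c-4\alpha+(2-c)\alpha^2)c^2(\delta\sigma_1^2)^2\l$ against the negative $\mathcal{O}(1/\cnH^2)$ term $-c^2(\delta\sigma_1^2)^2(1-\alpha)(c+(c-2)\alpha)$ (the factor $(1-\alpha)\ge 450/\cnH$ beats $5\l\le 45$ over an extra power of $\cnH$), and similarly pairs $-2(1+\alpha)(2\alpha-3)c(\delta\sigma_1^2)\l^2/\cnH^3$ against $-2c(\delta\sigma_1^2)\l(3-\alpha)(1-\alpha^2)/\cnH^2$; only the $(1-\alpha)^2(3+2\alpha)\l^2/\cnH^2$ term is charged against the leading $-m^3(2-m)/\cnH$, leaving a surplus of $-45\cdot450^2/\cnH^4$ that absorbs the $\mathcal{O}(1/\cnH^4)$ and $\mathcal{O}(1/\cnH^5)$ terms (this is where $c<3000$ enters). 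Either route can be made rigorous, but as written your remainder estimate would fail, and the fix requires an ingredient ($s\le2(1+\alpha)$, or the paper's term-pairing) that your sketch does not supply.
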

\begin{proof}
Since $(\delta\sigma_1^2) \leq \frac{2(1-\alpha^2)}{c+(c-2)\alpha}$, this implies $\frac{(\delta\sigma_1^2)}{1-\alpha}\leq\frac{2(1+\alpha)}{c+(c-2)\alpha}\leq\frac{4}{c}$, thus implying, $1\leq \l\leq 9$.

Substituting the value of $\l$ in equation~\eqref{eq:charEqnMod2_1}, the coefficient of $\mathcal{O}(1/\cnH)$ is $-(1-\alpha)^3(1+\alpha)$. 

We will bound this term along with $(3-4\alpha-\alpha^2+2\alpha^3)\l^2/\cnH^2=(1-\alpha)^2(3+2\alpha)\l^2/\cnH^2$ to obtain:
\begin{align*}
\frac{-(1-\alpha)^3(1+\alpha)}{\cnH}+\frac{(1-\alpha)^2(3+2\alpha)\l^2}{\cnH^2}&\leq\frac{-(1-\alpha)^3(1+\alpha)}{\cnH}+\frac{405(1-\alpha)^2}{\cnH^2}\\
&\leq\frac{(1-\alpha)^2}{\cnH}\bigg(\frac{405}{\cnH}-(1-\alpha^2)\bigg)\\
&\leq\frac{(1-\alpha)^2}{\cnH}\bigg(\frac{405}{\cnH}-(1-\alpha)\bigg)\leq-\frac{45\cdot 450^2}{\cnH^4},
\end{align*}
where, we use the fact that $\alpha<1$, $\l\leq9$.
The natural implication of this bound is that the terms that are lower order, such as $\mathcal{O}(1/\cnH^4)$ and $\mathcal{O}(1/\cnH^5)$ will be negative owing to the large constant above. Let us verify that this is indeed the case by considering the terms having powers of $\mathcal{O}(1/\cnH^4)$ and $\mathcal{O}(1/\cnH^5)$ from equation~\eqref{eq:charEqnMod2_1}:
\begin{align*}
&\frac{c^3(\delta\sigma_1^2)^2\l^3}{\cnH^5}+\frac{\l^4-2c(\delta\sigma_1^2)\l^3(1+\alpha)+(2\alpha-3c)c^2(\delta\sigma_1^2)^2\l^2}{\cnH^4}-\frac{45\cdot 450^2}{\cnH^4}\nonumber\\
&\leq \frac{c^3(\delta\sigma_1^2)^2\l^3}{\cnH^5} + \frac{l^4}{\cnH^4}-\frac{45\cdot 450^2}{\cnH^4}\nonumber\\
&\leq \frac{c \l^3}{\cnH^5}+\frac{(9^4-(45\cdot450^2))}{\cnH^4}\leq \frac{9^3 c + 9^4-(45\cdot450^2)}{\cnH^4}
\end{align*}
The expression above evaluates to $\leq0$ given an upperbound on the value of $c$. The expression above follows from the fact that $\l\leq9,\cnH\geq 1$.

Next, consider the terms involving $\mathcal{O}(1/\cnH^3)$ and $\mathcal{O}(1/\cnH^2)$, in particular,
\begin{align*}
&\frac{(3c-4\alpha+(2-c)\alpha^2)c^2(\delta\sigma_1^2)^2\l}{\cnH^3}-\frac{c^2(\delta\sigma_1^2)^2(1-\alpha)(c+(c-2)\alpha)}{\cnH^2}\\
&\leq \frac{c^2(\delta\sigma_1^2)^2}{\cnH^2}\big( \frac{\l(3c+2)}{\cnH} - (1-\alpha)(c+(c-2)\alpha)\big)\\
&\leq \frac{c^2(\delta\sigma_1^2)^2}{\cnH^2}\big( \frac{5c\l}{\cnH} - (1-\alpha)(c+(c-2)\alpha)\big)\\
&\leq \frac{c^2(\delta\sigma_1^2)^2}{\cnH^2}\big( \frac{5c\l}{\cnH} - (1-\alpha)c\big)\\
&\leq \frac{c^3(\delta\sigma_1^2)^2}{\cnH^2}\big(\frac{5\l}{\cnH}-\frac{450}{\cnH}\big)\\
&\leq \frac{c^3(\delta\sigma_1^2)^2}{\cnH^2}\cdot \frac{-405}{\cnH}\leq 0.
\end{align*}

Next,
\begin{align*}
&\frac{-2(1+\alpha)(2\alpha-3)c(\delta\sigma_1^2)\l^2}{\cnH^3}-\frac{2c(\delta\sigma_1^2)\l(3-\alpha)(1-\alpha^2)}{\cnH^2}\\
&\leq \frac{2(1+\alpha)c(\delta\sigma_1^2) \l}{\cnH^2}\bigg( \frac{-(2\alpha-3)\l}{\cnH} - (3-\alpha)(1-\alpha)\bigg)\\
&\leq \frac{2(1+\alpha)c(\delta\sigma_1^2) \l}{\cnH^2}\bigg( \frac{3\l}{\cnH} - 2(1-\alpha)\bigg)\\
&\leq \frac{2(1+\alpha)c(\delta\sigma_1^2) \l}{\cnH^2}\bigg( \frac{3\l}{\cnH} - \frac{2\cdot450}{\cnH}\bigg)\\
&\leq \frac{2(1+\alpha)c(\delta\sigma_1^2) \l}{\cnH^2}\bigg( \frac{3\cdot 27}{\cnH} - \frac{2\cdot450}{\cnH}\bigg)\leq 0.
\end{align*}
In both these cases, we used the fact that $\alpha\leq1-\frac{450}{\cnH}$ implying $-(1-\alpha)\leq\frac{-450}{\cnH}$.
Finally, other remaining terms are negative.
\end{proof}

Before rounding up the proof of the proposition, we need the following lemma to ensure that our lower bounds on the largest eigenvalue of $\BT$ indeed affect the algorithm's rates and are true \emph{irrespective} of where the algorithm is begun. Note that this allows our result to be \emph{much stronger} than typical optimization lowerbounds that rely on specific initializations to ensure a component along the largest eigendirection of the update operator, for which bounds are proven.
\begin{lemma}
\label{lem:divergentLemma}
For any starting iterate $\w_0\ne\w^*$, the \hb method produces a non-zero component along the largest eigen direction of $\BT$.
\end{lemma}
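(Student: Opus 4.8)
The plan is to work one coordinate at a time. Since $\w_0\ne\w^*$ there is a coordinate $j$ with $d\eqdef\w_0^{(j)}-(\w^*)^{(j)}\ne0$, and it suffices to produce a nonzero component for that block. Because Algorithm~\ref{alg:hb} initializes $\w_{-1}=\w_0$, the starting state is deterministic with $\thetav_0^{(j)}=d\,\on$, so the initial second moment is the rank-one PSD matrix $\phiv_0^{(j)}=d^2\,\on\on^\top$ (its vectorization is $d^2\,\on\otimes\on$). I would recast the vector recursion $\phiv_{t+1}^{(j)}=\BT^{(j)}\phiv_t^{(j)}$ as a matrix recursion $\Phi_{t+1}=\mathcal B(\Phi_t)$ on symmetric $2\times2$ matrices, where $\mathcal B(\Phi)\eqdef\E{\Ah\,\Phi\,\Ah^\top}$ and $\Ah=\Ah_{t+1}^{(j)}$ (using that $\Ah$ is independent of $\thetav_t$); the matrix of $\mathcal B$ in the standard basis is exactly $\BT^{(j)}$. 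The lemma then amounts to showing that $\phiv_0^{(j)}$ is not killed by the spectral projection onto the top eigendirection.

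The map $\mathcal B$ preserves the cone of PSD matrices, so by the Perron--Frobenius/Krein--Rutman theorem its spectral radius $\lambda_{\max}=\rho(\BT^{(j)})$ --- the eigenvalue lower bounded in Lemmas~\ref{lem:top-eig}--\ref{lem:bottom-eig}, and the ``largest eigendirection'' of the statement --- is a genuine real eigenvalue, and the adjoint $\mathcal B^*(\Psi)\eqdef\E{\Ah^\top\Psi\,\Ah}$ admits a nonzero PSD left eigenmatrix $V\succeq0$ with $\mathcal B^*(V)=\lambda_{\max}V$. Using that $V$ is a true (not generalized) left eigenvector, a one-line computation shows $\trace{V\Psi}=0$ for every generalized eigenvector $\Psi$ belonging to any other eigenvalue; decomposing $\phiv_0^{(j)}$ over the generalized eigenspaces of $\mathcal B$, its $\lambda_{\max}$-component is therefore nonzero provided $\trace{V\,\phiv_0^{(j)}}=d^2\,\on^\top V\on\ne0$. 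It thus remains only to rule out $\on\in\ker V$.

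Suppose for contradiction $V\on=0$. A symmetric PSD $2\times2$ matrix with vanishing row sums must have the form $V=s\,\u\u^\top$ with $\u=(1,-1)^\top$ and $s>0$ (it is nonzero). Substituting into $\mathcal B^*(V)=\lambda_{\max}V$ gives $\E{(\Ah^\top\u)(\Ah^\top\u)^\top}=\lambda_{\max}\,\u\u^\top$; sandwiching both sides with $\on$ and using $\iprod{\on}{\u}=0$ yields $\E{\iprod{\on}{\Ah^\top\u}^2}=0$. A direct computation gives $\Ah^\top\u=(\alpha-\delta(a^{(j)})^2,\,-\alpha)^\top$, hence $\iprod{\on}{\Ah^\top\u}=-\delta(a^{(j)})^2$, and the identity forces $\delta^2\,\E{(a^{(j)})^4}=0$. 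This is impossible for a positive stepsize, since the problem has $\E{(a^{(j)})^4}=c\,\sigma_j^4>0$. Thus $\on\notin\ker V$, so $\on^\top V\on>0$, and by the previous paragraph $\phiv_0^{(j)}$ has a nonzero component along the top eigendirection of $\BT^{(j)}$.

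The one genuinely delicate ingredient is the Perron step: I must ensure the spectral radius is attained by an honest PSD left \emph{eigenvector} (so that the trace-pairing projection is valid and does not merely land in a generalized eigenspace), together with the book-keeping that identifies this Perron eigenvalue with the quantity bounded in the earlier lemmas; everything after that is the short explicit computation above. I would also emphasize that this yields an initialization-independent bound: the argument uses only $\phiv_0^{(j)}=d^2\,\on\on^\top\succeq0$ with $d\ne0$, so \emph{every} starting point $\w_0\ne\w^*$ excites the dominant mode, which is precisely what lets the resulting rate bound in Proposition~\ref{prop:hb} hold for all $\w_0$ rather than for a carefully chosen one.
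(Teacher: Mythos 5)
Your proof is correct, but it takes a genuinely different route from the paper's. The paper works with the Krylov-type matrix $\DT$ built from the first four vectorized covariances $\phiv_0,\dots,\phiv_3$, observes that the antisymmetric direction is a $\BT$-eigenvector with eigenvalue $\alpha$ orthogonal to the whole orbit, and then establishes that the orbit spans the full three-dimensional symmetric subspace by explicitly computing $\text{det}(\RT)=\alpha x^3\left((c-2)(cx-1)+c\alpha\right)$ and ruling out its vanishing through a separate case analysis over the convergent and divergent regimes of the stepsize. You instead exploit that $\Phi\mapsto\E{\Ah\Phi\Ah^\top}$ preserves the (self-dual) PSD cone, invoke Krein--Rutman to obtain a PSD left Perron eigenmatrix $V$, and reduce the whole lemma to the one-line computation $\E{\iprod{\on}{\Ah^\top\u}^2}=\delta^2\E{(a^{(j)})^4}=c\,\delta^2\sigma_j^4>0$. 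Your route is shorter, avoids the determinant computation and the regime case split entirely, and even covers the $\alpha=0$ case where the paper's determinant degenerates; the price is the cone-spectral-theory input plus the book-keeping you flag, namely that $\rho(\mathcal B|_{\mathrm{sym}})$ coincides with $\lambda_{\max}(\BT)$ --- this does need to be said, but it follows in one line from $\text{det}(\BT)=\alpha^4$ (so the three symmetric eigenvalues multiply to $\alpha^3$ and one has magnitude at least $\alpha$) together with the fact that the antisymmetric subspace carries eigenvalue exactly $\alpha$. One further small bonus of your argument worth recording: since $\trace{V\,\phiv_t}=\lambda_{\max}^t\,\trace{V\,\phiv_0}$ exactly, you get the rate lower bound directly from the pairing with $V$, sidestepping any Jordan-block subtleties about what ``component along the largest eigendirection'' means.
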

\begin{proof}
We note that in a similar manner as other proofs, it suffices to argue for each dimension of the problem separately. But before we start looking at each dimension separately, let us consider the $j^{\text{th}}$ dimension, and detail the approach we use to prove the claim: the idea is to examine the subspace spanned by covariance $\E{\thetav^{(j)}_\cdot\otimes\thetav^{(j)}_\cdot}$ of the iterates $\thetav^{(j)}_0,\thetav^{(j)}_1,\thetav^{(j)}_2,...$, for every starting iterate $\thetav^{(j)}_0\ne \begin{bmatrix}0,0\end{bmatrix}^{\top}$ and prove that the largest eigenvector of the expected operator $\BT^{(j)}$ is not orthogonal to this subspace. This implies that there exists a non-zero component of $\E{\thetav^{(j)}_\cdot\otimes\thetav^{(j)}_\cdot}$ in the largest eigen direction of $\BT^{(j)}$, and this decays at a rate that is at best $\lambda_{\max}(\BT^{(j)})$.

Since $\BT^{(j)}\in\R^{4\times 4}$, we begin by examining the expected covariance spanned by the iterates $\thetav^{(j)}_0,\thetav^{(j)}_1,\thetav^{(j)}_2,\thetav^{(j)}_3$. Let $\w^{(j)}_0-(\w^*)^{(j)}=\w^{(j)}_{-1}-(\w^*)^{(j)}=\init^{(j)}$. Now, this implies $\thetav^{(j)}_0=\init^{(j)}\cdot\begin{bmatrix} 1,1\end{bmatrix}^{\top}$. Then,
\begin{align*}
\thetav^{(j)}_1=k^{(j)}\Ah^{(j)}_{1}\begin{bmatrix}1\\1\end{bmatrix}, \text{with  } \Ah^{(j)}_{1} = \begin{bmatrix}1+\alpha-\delta \Hhat^{(j)}_1 & -\alpha \\ 1 & 0\end{bmatrix}, \text{ where  } \Hhat^{(j)}_1=(a^{(j)}_1)^2.
\end{align*}
This implies that $\init$ just appears as a scale factor. This in turn implies that in order to analyze the subspace spanned by the covariance of iterates $\thetav^{(j)}_0,\thetav^{(j)}_1,...$, we can assume $\init^{(j)}=1$ without any loss in generality. This implies, $\thetav^{(j)}_0 = \begin{bmatrix} 1, 1 \end{bmatrix}^{\top}$. Note that with this in place, we see that we can now drop the superscript $j$ that represents the dimension, since the analysis decouples across the dimensions $j\in\{1,2\}$. Furthermore, let the entries of the vector $\thetav_k$ be represented as $\thetav_k\eqdef\begin{bmatrix}\theta_{k1}&\theta_{k2}\end{bmatrix}^{\top}$ Next, denote $1+\alpha-\delta\Hhat_k = \th_k$. This implies,
\begin{align*}
\Ah_k  = \begin{bmatrix}  \th_k  & -\alpha \\1 & 0\end{bmatrix}.
\end{align*}
Furthermore, 
\begin{align}
\label{eq:thetas}
\thetav_1 = \Ah_1\thetav_0 = \begin{bmatrix}\th_1-\alpha\\1\end{bmatrix},\ \thetav_2 = \Ah_2\thetav_1 = \begin{bmatrix} \th_2(\th_1-\alpha)-\alpha \\ \th_1-\alpha\end{bmatrix},\nonumber\\ \ \thetav_3 = \Ah_3\thetav_2 = \begin{bmatrix} \th_3(\th_2(\th_1-\alpha)-\alpha)-\alpha(\th_1-\alpha)\\\th_2(\th_1-\alpha)-\alpha\end{bmatrix}.
\end{align}
Let us consider the vectorized form of $\phiv_j=\E{\thetav_j\otimes\thetav_j}$, and we denote this as $\text{vec}(\phiv_j)$. Note that $\text{vec}(\phiv_j)$ makes $\phiv_j$ become a column vector of size $4\times 1$. Now, consider $\text{vec}(\phiv_j)$ for $j=0,1,2,3$ and concatenate these to form a matrix that we denote as  $\DT$, i.e.
\begin{align*}
\DT &= \begin{bmatrix} \text{vec}(\phiv_0) & \text{vec}(\phiv_1) & \text{vec}(\phiv_2) & \text{vec}(\phiv_3) \end{bmatrix}.
\end{align*}
Now, since we note that $\phiv_j$ is a symmetric $2\times 2$ matrix, $\DT$ should contain two identical rows implying that it has an eigenvalue that is zero and a corresponding eigenvector that is $\begin{bmatrix} 0 &-1/\sqrt{2} & 1/\sqrt(2) & 0\end{bmatrix}^{\top}$. It turns out that this is also an eigenvector of $\BT$ with an eigenvalue $\alpha$. Note that $\text{det}(\BT)=\alpha^4$. This implies there are two cases that we need to consider: (i) when all eigenvalues of $\BT$ have the same magnitude ($=\alpha$). In this case, we are already done, because there exists at least one non zero eigenvalue of $\DT$ and this should have some component along one of the eigenvectors of $\BT$ and we know that all eigenvectors have eigenvalues with a magnitude equal to $\lambda_{\max}(\BT)$. Thus, there exists an iterate which has a non-zero component along the largest eigendirection of $\BT$. (ii) the second case is the situation when we have eigenvalues with different magnitudes. In this case, note that $\text{det}(\BT)=\alpha^4<(\lambda_{\max}(\BT))^4$ implying $\lambda_{\max}(\BT)>\alpha$. In this case, we need to prove that $\DT$ spans a three-dimensional subspace; if it does, it contains a component along the largest eigendirection of $\BT$ which will round up the proof. Since we need to understand whether $\DT$ spans a three dimensional subspace, we can consider a different (yet related) matrix, which we call $\RT$ and this is defined as:
\begin{align*}
\RT\eqdef \mathbb{E}\biggl(\begin{bmatrix} \theta_{01}^2 & \theta_{11}^2 & \theta_{21}^2 \\ \theta_{01}\theta_{02} & \theta_{11}\theta_{12} & \theta_{21}\theta_{22} \\\theta_{02}^2 & \theta_{12}^2 & \theta_{22}^2\end{bmatrix}\biggr)
\end{align*}
Given the expressions for $\{\thetav_j\}_{j=0}^3$ (by definition of $\thetav_0$ and using equation~\ref{eq:thetas}), we can substitute to see that $\RT$ has the following expression:
\begin{align*}
\RT = \begin{bmatrix} 1 & \E{(\th_1-\alpha)^2} & \E{(\th_2(\th_1-\alpha)-\alpha)^2} \\
1 & \E{\th_1-\alpha} & \E{((\th_2(\th_1-\alpha)-\alpha))(\th_1-\alpha)}\\
1 & 1 & \E{(\th_1-\alpha)^2}\end{bmatrix}.
\end{align*}
If we compute and prove that $\text{det}(\RT)\ne 0$, we are done since that implies that $\RT$ has three non-zero eigenvalues.

This implies, we first define the following: let $q_\gamma = (t-\gamma)^2 + (c-1)x^2$. Then, $\RT$ can be expressed as:
\begin{align*}
\text{det}(\RT) &= \text{det}\bigg(\begin{bmatrix} 1 & q_{\alpha} & q_0q_\alpha-2\alpha t (t-\alpha) + \alpha^2 \\ 1 & t-\alpha & tq_{\alpha}-\alpha(t-\alpha) \\ 1 & 1 & q_{\alpha}  \end{bmatrix}\bigg)\\
&=\text{det}\bigg(\begin{bmatrix} 1 & q_\alpha & q_\alpha(q_0-q_\alpha)-2\alpha t(t-\alpha) + \alpha^2 \\
1 & t-\alpha & tq_{\alpha}-\alpha(t-\alpha)-(t-\alpha)q_\alpha\\
1 & 1 & 0\end{bmatrix}\bigg)\\
&=\text{det}\bigg(\begin{bmatrix} 1 & q_\alpha-1 & q_\alpha(q_0-q_\alpha)-2\alpha t(t-\alpha) + \alpha^2 \\
1 & t-\alpha-1 & tq_{\alpha}-\alpha(t-\alpha)-(t-\alpha)q_\alpha\\
1 & 0 & 0\end{bmatrix}\bigg)\\
&=\text{det}\bigg(\begin{bmatrix} 0 & q_\alpha-1 & q_\alpha(q_0-q_\alpha)-2\alpha t(t-\alpha) + \alpha^2 \\
0 & t-\alpha-1 & tq_{\alpha}-\alpha(t-\alpha)-(t-\alpha)q_\alpha\\
1 & 0 & 0\end{bmatrix}\bigg)
\end{align*}
Note:
(i) $q_\alpha-1 = (t-\alpha)^2-1+(c-1)x^2 = (1-x)^2-1+(c-1)x^2=-2x+x^2+(c-1)x^2=-2x+cx^2$.\\
(ii) $t-\alpha-1=-x$\\
(iii) $\alpha(q_\alpha-(t-\alpha))=\alpha((t-\alpha)^2-(t-\alpha)+(c-1)x^2)=\alpha((1-x)(-x)+(c-1)x^2)=\alpha x(-1+cx)$\\
(iv) $q_0-q_\alpha=t^2-(t-\alpha)^2=\alpha(2t-\alpha)=2t\alpha-\alpha^2$.\\%2\alpha(t-\alpha)+\alpha^2$.\\
Then,
\begin{align*}
(2\alpha t-\alpha^2)q_\alpha-2\alpha t(t-\alpha)+\alpha^2&=2t\alpha(q_{\alpha}-(t-\alpha)) + \alpha^2 (1-q_{\alpha})\\
&=2t\alpha(-x+cx^2) - \alpha^2 (-2x + cx^2)\\
&=-2t\alpha x + 2x\alpha^2 + 2t\alpha c x^2 - c\alpha^2 x^2\\
&=2\alpha x(-t+\alpha) + c\alpha x^2 (2t-\alpha)\\
&=-2\alpha x(1-x) + 2c\alpha x^2(1-x) + c\alpha^2x^2\\
&=2\alpha x(1-x)(-1+cx) + c\alpha^2 x^2.
\end{align*}
Then,
\begin{align*}
\text{det}(\RT) &= \text{det}\bigg(\begin{bmatrix} 0 & x(cx-2) & 2\alpha x(1-x)(-1+cx) + c\alpha^2 x^2\\
0 & -x & \alpha x(cx-1)\\
1 & 0 & 0\end{bmatrix}\bigg)\\
&=x^2\alpha\text{det}\bigg(\begin{bmatrix} 0 & (cx-2) & c\alpha x+2(1-x)(cx-1)\\
0 & -1 & cx-1\\
1 & 0 & 0\end{bmatrix}\bigg)\\
&=x^3\alpha\text{det}\bigg(\begin{bmatrix} 0 & c & c\alpha -2(cx-1)\\
0 & -1 & cx-1\\
1 & 0 & 0\end{bmatrix}\bigg)
\end{align*}
Then,
\begin{align*}
\text{det}(\RT)&=x^3\alpha\bigg( c(-1+cx)-2(-1+cx)+c\alpha  \bigg)\\
&= \alpha x^3 \bigg( (c-2)(-1+cx) + c\alpha  \bigg)
\end{align*}
Note that this determinant can be zero when 
\begin{align}
\label{eq:detZeroCondition}
\alpha = \frac{(c-2)(1-cx)}{c}.
\end{align}
We show this is not possible by splitting our argument into two parts, one about the convergent regime of the algorithm (where, $\delta\sigma_1^2<\frac{2(1-\alpha^2)}{c+(c-2)\alpha}$) and the other about the divergent regime. 

Let us first provide a proof for the convergent regime of the algorithm. For this regime, let the chosen $\delta$ be represented as $\delta^+$. Now, for the smaller eigen direction, $x = \delta^+ \lambda_{\min} = c\delta^+\sigma_1^2/\cnH$. Suppose $\alpha$ was chosen as per equation~\ref{eq:detZeroCondition}, 
\begin{align*}
\frac{c\alpha}{c-2} &= 1 - \frac{c^2\delta^+\sigma_1^2}{\cnH}\\
\implies \delta^+\sigma_1^2 &= \frac{\cnH}{c^2} - \frac{\cnH\alpha}{c(c-2)}.
\end{align*}
We will now prove that $\delta^+\sigma_1^2=\frac{\cnH}{c}(\frac{1}{c}-\frac{\alpha}{c-2})$ is much larger than one allowed by the convergence of the HB updates, i.e., $\delta\sigma_1^2<\frac{2(1-\alpha^2)}{c+(c-2)\alpha}\leq\frac{2(1-\alpha^2)}{c}$. In particular, if we prove that $\frac{\cnH}{c}(\frac{1}{c}-\frac{\alpha}{c-2})>\frac{2(1-\alpha^2)}{c}$ for any admissible value of $\alpha$, we are done. \begin{align*}
\frac{\cnH}{c}(\frac{1}{c}-\frac{\alpha}{c-2})&>\frac{2(1-\alpha^2)}{c}\\
\Leftrightarrow  \frac{\cnH}{c} - \frac{\cnH\alpha}{c-2}&> 2-2\alpha^2\\
\Leftrightarrow \frac{\cnH}{c} - \frac{\cnH\alpha}{c-2} >\frac{\cnH}{c}-\frac{\cnH\alpha}{c}&>2-2\alpha^2\\
\Leftrightarrow \cnH-\cnH\alpha>2c-2c\alpha^2\\
\Leftrightarrow 2c\alpha^2 - \cnH\alpha + (\cnH-2c)> 0.
\end{align*}
The two roots of this quadratic equation are $\alpha^+ = \frac{\cnH}{2c}-1$ and $\alpha^- = 1$. Note that $\cnH\geq\cnS=c$; note that there is not much any method gains over SGD if $\cnH=\mathcal{O}(c)$. And, for any $\cnH\geq4c$, note, $\alpha^+>\alpha^-$, indicating that the above equation holds true if $\alpha>\alpha^+ = \frac{\cnH}{2c}-1$ or if $\alpha < \alpha^- = 1$. The latter condition is true and hence the proposition that $\delta^+\sigma_1^2>\frac{2(1-\alpha^2)}{c+(c-2)\alpha}$ is true. 

We need to prove that the determinant does not vanish in the divergent regime for rounding up the proof to the lemma.

Now, let us consider the divergent regime of the algorithm, i.e., when, $\delta\sigma_1^2 > \frac{2(1-\alpha^2)}{c+(c-2)\alpha}$. Furthermore, for the larger eigendirection, the determinant is zero when $\delta\sigma_1^2 = \frac{1 - \frac{c\alpha}{c-2}}{c} = \frac{1}{c} - \frac{\alpha}{c-2}$ (obtained by substituting $x=\delta\sigma_1^2$ in equation~\ref{eq:detZeroCondition}). If we show that $\frac{2(1-\alpha^2)}{c+(c-2)\alpha}>\frac{1}{c} - \frac{\alpha}{c-2}$ for all admissible values of $c$, we are done. We will explore this in greater detail:
\begin{align*}
\frac{2(1-\alpha^2)}{c+(c-2)\alpha}&>\frac{1}{c} - \frac{\alpha}{c-2}\\
\Leftrightarrow 2(1-\alpha^2)&\geq 1 + \frac{c-2}{c}\alpha - \frac{c}{c-2}\alpha - \alpha^2\\
\Leftrightarrow 1-\alpha^2 &\geq \frac{-4(c-1)}{c(c-2)}\alpha\\
\Leftrightarrow c^2 - 2c -\alpha^2 c^2 +2c\alpha^2 &\geq -4c\alpha + 4\alpha\\
\Leftrightarrow c^2 (1-\alpha^2) -2c (1-\alpha^2-2\alpha) -4\alpha&\geq0.
\end{align*}
considering the quadratic in the left hand size and solving it for $c$, we have:
\begin{align*}
c^{\pm} &= \frac{2(1-\alpha^2-2\alpha) \pm \sqrt{4(1-\alpha^2-2\alpha)^2+16\alpha(1-\alpha^2)}}{2(1-\alpha^2)}\\
&=\frac{(1-\alpha^2-2\alpha) \pm \sqrt{(1-\alpha^2-2\alpha)^2+4\alpha(1-\alpha^2)}}{(1-\alpha^2)}\\
&=\frac{(1-\alpha^2-2\alpha) \pm \sqrt{1+\alpha^4+4\alpha^2-2\alpha^2-4\alpha+4\alpha^3+4\alpha(1-\alpha^2)}}{(1-\alpha^2)}\\
&=\frac{(1-\alpha^2-2\alpha) \pm (1+\alpha^2)}{(1-\alpha^2)}
\end{align*}
This holds true iff
\begin{align*}
c\leq c^- =\frac{-2\alpha(1+\alpha)}{1-\alpha^2}=\frac{-2\alpha}{1-\alpha},
\end{align*}
or iff,
\begin{align*}
c \geq c^+ = \frac{2(1-\alpha)}{1-\alpha^2} = \frac{2}{1+\alpha}.
\end{align*}
Which is true automatically since $c>2$. This completes the proof of the lemma.
\end{proof}

We are now ready to prove Lemma~\ref{lem:bottom-eig}.
\begin{proof}[Proof of Lemma~\ref{lem:bottom-eig}]
	Combining Lemmas~\ref{lem:large-mom} and~\ref{lem:small-mom}, we see that no matter what stepsize and momentum we choose, $\BT^{(j)}$ has an eigenvalue of magnitude at least $1-\frac{500}{\kappa}$ for some $j\in\{1,2\}$. This proves the lemma.
\end{proof}

\section{Equivalence of Algorithm~\ref{alg:asgd} and~\asgd}\label{app:equiv}
We begin by writing out the updates of \asgd as written out in~\citet{jain2017accelerating}, which starts with two iterates $\xa_0$ and $\va_0$, and from time $t=0,1,...T-1$ implements the following updates:
\begin{align}
\ya_t &= \alpp \xa_{t} + (1-\alpp) \va_{t}\label{eq:u1}\\
\xa_{t+1} &= \ya_t -\delp \hat{\nabla}f_{t+1}(\ya_t)\label{eq:u2}\\
\za_{t} &= \betp \ya_{t} + (1-\betp) \va_{t}\label{eq:u3}\\
\va_{t+1} &= \za_t - \gamp \hat{\nabla}f_{t+1}(\ya_t)\label{eq:u4}.
\end{align}
Next, we specify the step sizes $\betp=\cthree^2/\sqrt{\cnH\cnS}$, $\alpp=\cthree/(\cthree+\beta)$, $\gamp=\beta/(\cthree\lambda_{\min})$ and $\delp=1/\infbound$, where $\cnH=\infbound/\lambda_{\min}$. Note that the step sizes in the paper of~\citet{jain2017accelerating} with $\cone$ in their paper set to $1$ yields the step sizes above. Now, substituting equation~\ref{eq:u3} in equation~\ref{eq:u4} and substituting the value of $\gamp$, we have:
\begin{align}
\va_{t+1} &= \betp \left(\ya_{t} - \frac{1}{\cthree\lambda_{\min}}\hat{\nabla}f_{t+1}(\ya_t)\right) + (1-\betp) \va_t\nonumber\\
&= \betp \left(\ya_{t} - \frac{\delta\cnH}{\cthree}\hat{\nabla}f_{t+1}(\ya_t)\right) + (1-\betp) \va_t.\label{eq:u5}
\end{align}
We see that $\va_{t+1}$ is precisely the update of the running average $\bar{w}_{t+1}$ in the \asgd method employed in this paper. 

We now update $\ya_t$ to become $\ya_{t+1}$ and this can be done by writing out equation~\ref{eq:u1} at $t+1$, i.e:
\begin{align}
\ya_{t+1} &= \alpp \xa_{t+1} + (1-\alpp) \va_{t+1} \nonumber\\
&= \alpp \left(\ya_t -\delp \hat{\nabla}f_{t+1}(\ya_t)\right) + (1-\alpp) \va_{t+1}.\label{eq:u6}
\end{align}
By substituting the value of $\alpp$ we note that this is indeed the update of the iterate as a convex combination of the current running average and a short gradient step as written in this paper. In this paper, we set $\cthree$ to be equal to $0.7$, and any constant less than $1$ works. In terms of variables, we note that $\alpha$ in this paper's algorithm description maps to $1-\betp$.
\section{More details on experiments}\label{app:exp-details}
In this section, we will present more details on our experimental setup. %\pn{Mention all details about grid search, number of initial runs, number of runs averaged on etc.}
\subsection{Linear Regression}\label{app:synthetic-det}
In this section, we will present some more results on our experiments on the linear regression problem. Just as in Appendix~\ref{app:hb-proof}, it is indeed possible to compute the expected error of all the algorithms among \sgd, \hb, \nag and \asgd, by tracking certain covariance matrices which evolve as linear systems. For~\sgd, for instance, denoting $\phiv_{t}^{\sgd} \eqdef \E{ \left(\w_{t}^{\sgd} -w^* \right)\otimes \left(\w_{t}^{\sgd} -w^* \right)}$, we see that $\phiv_{t+1}^{\sgd} = \BT \circ \phiv_{t}^{\sgd}$, where $\BT$ is a linear operator acting on $d\times d$ matrices such that $\BT \circ M \eqdef M - \delta HM - \delta MH + \delta^2 \E{\iprod{x}{Mx}xx^\top}$. Similarly, ~\hb,~\nag~and~\asgd~also have corresponding operators (see Appendix~\ref{app:hb-proof} for more details on the operator corresponding to~\hb). The largest magnitude of the eigenvalues of these matrices indicate the rate of decay achieved by the particular algorithm -- smaller it is compared to $1$, faster the decay. 

We now detail the range of parameters explored for these results: the condition number $\kappa$ was varied from $\{2^4,2^5,..,2^{28}\}$ for all the optimization methods and for both the discrete and gaussian problem. For each of these experiments, we draw $1000$ samples and compute the empirical estimate of the fourth moment tensor. For \nag and \hb, we did a very fine grid search by sampling $50$ values in the interval $(0,1]$ for both the learning rate and the momentum parameter and chose the parameter setting that yielded the smallest $\lambda_{\max}(\BT)$ that is less than $1$ (so that it falls in the range of convergence of the algorithm). As for \sgd and \asgd, we employed a learning rate of $1/3$ for the Gaussian case and a step size of $0.9$ for the discrete case. The statistical advantage parameter of \asgd was chosen to be $\sqrt{3\kappa/2}$ for the Gaussian case and $\sqrt{2\kappa/3}$ for the Discrete case, and the a long step parameters of $3\kappa$ and $2\kappa$ were chosen for the Gaussian and Discrete case respectively. The reason it appears as if we choose a parameter above the theoretically maximal allowed value of the advantage parameter is because the definition of $\kappa$ is different in this case. The $\kappa$ we speak about for this experiment is $\lambda_{\max}/\lambda_{\min}$ unlike the condition number for the stochastic optimization problem. In a manner similar to actually running the algorithms (the results of whose are presented in the main paper), we also note that we can compute the rate as in equation~\ref{eq:rate} and join all these rates using a curve and estimate its slope (in the $\log$ scale). This result is indicated in table~\ref{table:synthetic-exp}.

Figure~\ref{fig:synthetic-exp} presents these results, where for each method, we did grid search over all parameters and chose parameters that give smallest $\lambda_{\textrm{max}}$. We see the same pattern as in Figure~\ref{fig:synthetic} from actual runs --~\sgd,\hb~and~\nag~all have linear dependence on condition number $\kappa$, while~\asgd~has a dependence of $\sqrt{\kappa}$.

\begin{figure}[h!]
	\centering
	\subfigure{\includegraphics[width=0.48\columnwidth]{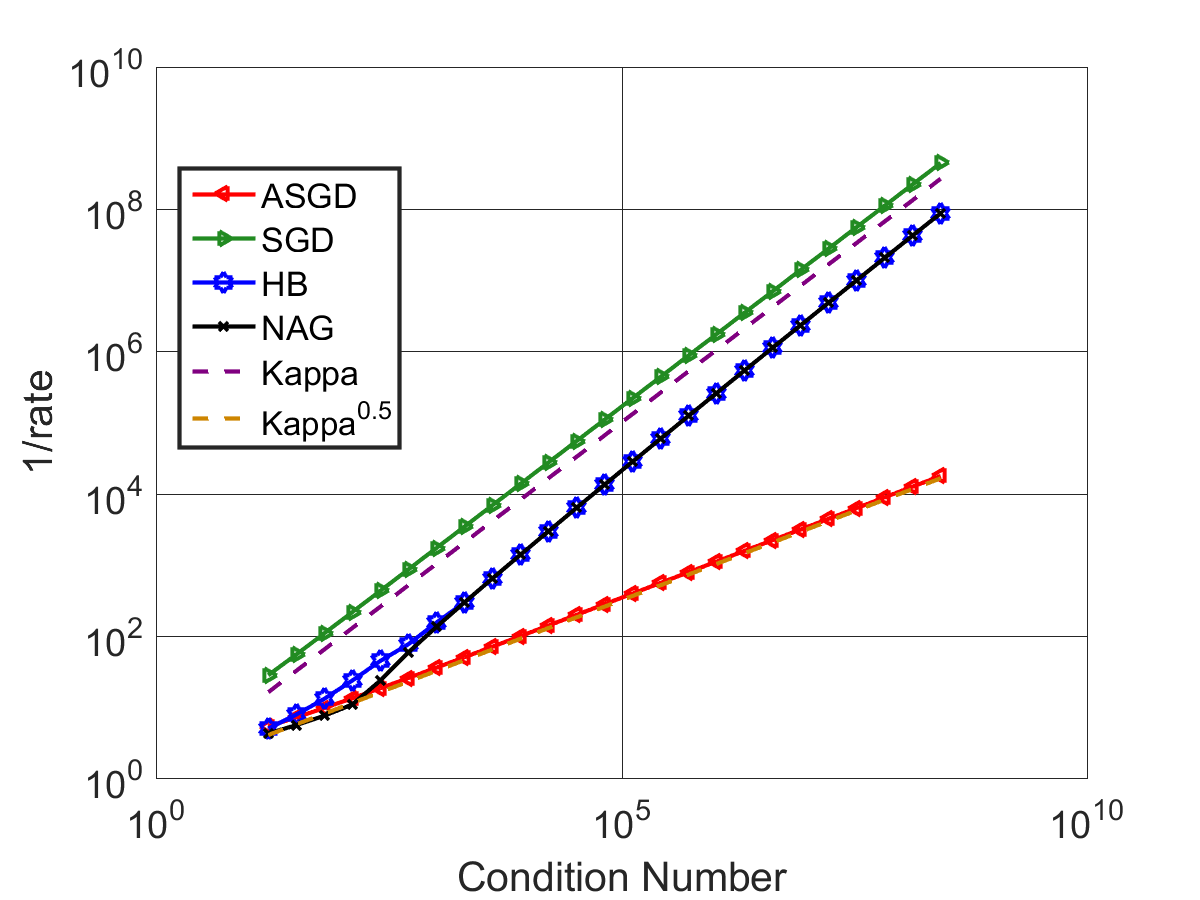}}
	\subfigure{\includegraphics[width=0.48\columnwidth]{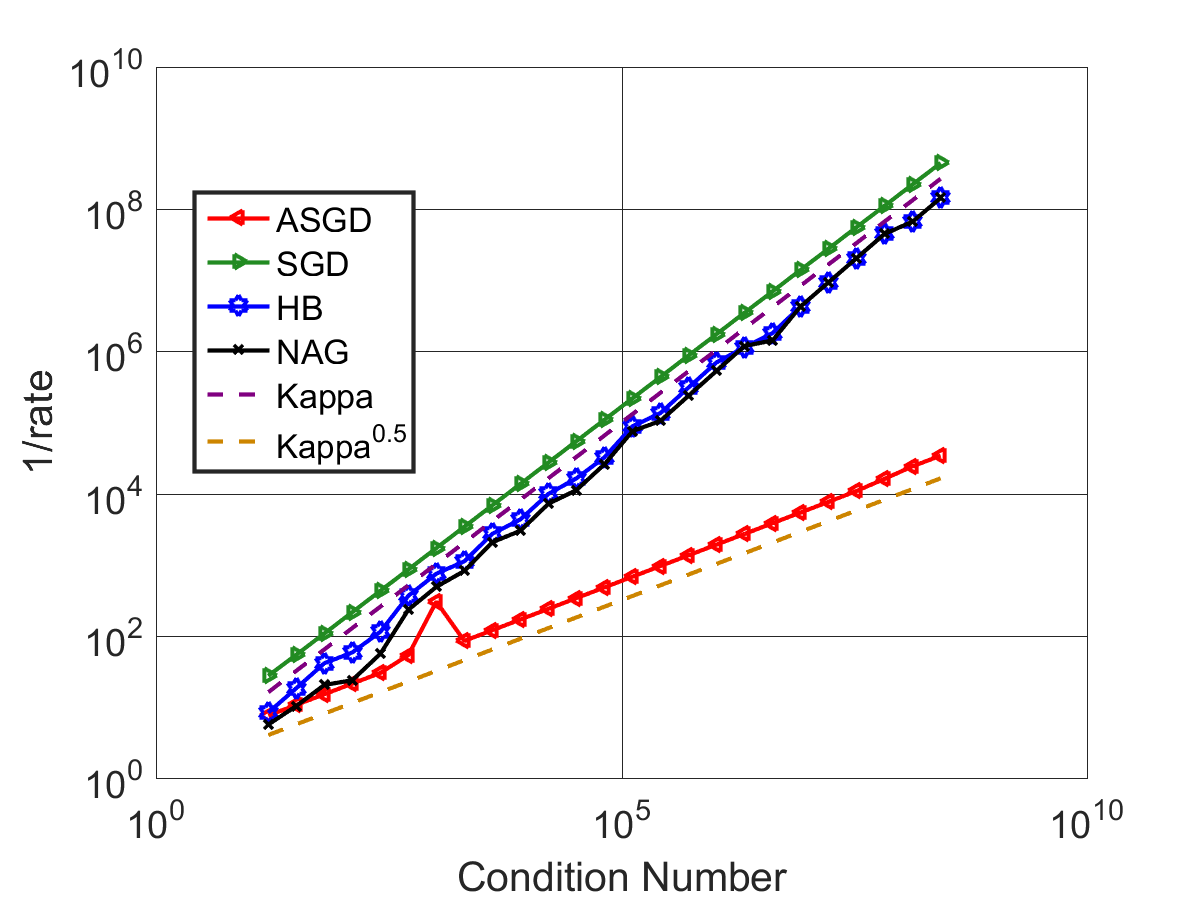}}
	\caption{Expected rate of error decay (equation~\ref{eq:rate}) vs condition number for various methods for the linear regression problem. Left is for discrete distribution and right is for Gaussian distribution.}
	\label{fig:synthetic-exp}
\end{figure}

\begin{table}[h!]
	\centering
	\begin{tabular}{||c | c | c||}
		\hline
		Algorithm & Slope -- discrete & Slope -- Gaussian \\ [0.5ex] 
		\hline\hline
		\sgd & 0.9990 & 0.9995 \\ 
%		\hline
		\hb & 1.0340 & 0.9989 \\
%		\hline 
		\nag & 1.0627 & 1.0416 \\
%		\hline 
		\asgd & 0.4923 & 0.4906 \\ [1ex] 
		\hline
	\end{tabular}
	\caption{Slopes (i.e. $\gamma$) obtained by fitting a line to the curves in Figure~\ref{fig:synthetic-exp}. A value of $\gamma$ indicates that the error decays at a rate of $\exp\left(\frac{-t}{\kappa^\gamma}\right)$. A smaller value of $\gamma$ indicates a faster rate of error decay.}
	\label{table:synthetic-exp}
\end{table}

\subsection{Autoencoders for MNIST}\label{app:mnist-det}

We begin by noting that the learning rates tend to vary as we vary batch sizes, which is something that is known in theory~\citep{jain2016parallelizing}. Furthermore, we extend the grid especially whenever our best parameters of a baseline method tends to land at the edge of a grid. The parameter ranges explored by our grid search are:

{\bf Batch Size $1$}: (parameters chosen by running for $20$ epochs)
\begin{itemize}
\item \sgd: learning rate: $\{0.01,0.01\sqrt{10},0.1,0.1\sqrt{10},1,\sqrt{10},5,10,20,10\sqrt{10},40,60,80,100$.
\item \nag/\hb: learning rate: $\{0.01\sqrt{10},0.1,0.1\sqrt{10},1,\sqrt{10},10\}$, momentum $\{0,0.5,0.75,0.9,0.95,0.97\}$.
\item \asgd: learning rate: $\{2.5,5\}$, long step $\{100.0,1000.0\}$, advantage parameter $\{2.5,5.0,10.0,20.0\}$.
\end{itemize}

{\bf Batch Size $8$}: (parameters chosen by running for $50$ epochs)
\begin{itemize}
\item \sgd: learning rate: $\{0.001,0.001\sqrt{10.0},0.01,0.01\sqrt{10},0.1,0.1\sqrt{10},1,\sqrt{10},5,10\\,10\sqrt{10},40,60,80,100,120,140\}$.
\item \nag/\hb: learning rate: $\{5.0,10.0,20.0,10\sqrt{10},40,60\}$, momentum $\{0,0.25,0.5,0.75,0.9,0.95\}$.
\item \asgd: learning rate $\{40,60\}$. For a long step of $100$, advantage parameters of $\{1.5,2,2.5,5,10,20\}$. For a long step of $1000$, we swept over advantage parameters of $\{2.5,5,10\}$.
\end{itemize}

\subsection{Deep Residual Networks for CIFAR-10}\label{app:cifar-det}
In this section, we will provide more details on our experiments on~\cifar, as well as present some additional results. We used a weight decay of $0.0005$ in all our experiments. The grid search parameters we used for various algorithms are as follows. Note that the ranges in which parameters such as learning rate need to be searched differ based on batch size~\citep{jain2016parallelizing}. Furthermore, we tend to extrapolate the grid search whenever a parameter (except for the learning rate decay factor) at the edge of the grid has been chosen; this is done so that we always tend to lie in the interior of the grid that we have searched on. Note that for the purposes of the grid search, we choose a hold out set from the training data and add it in to the training data after the parameters are chosen, for the final run. 

{\bf Batch Size $8$}: 
Note: (i) parameters chosen by running for $40$ epochs and picking the grid search parameter that yields the smallest validation $0/1$ error. (ii) The validation set decay scheme that we use is that if the validation error does not decay by at least $1\%$ every three passes over the data, we cut the learning rate by a constant factor (which is grid searched as described below). The minimal learning rate to use is fixed to be $6.25\times 10^{-5}$, so that we do not decay far too many times and curtail progress prematurely.
\begin{itemize}
\item \sgd: learning rate: $\{0.0033,0.01,0.033,0.1,0.33\}$, learning rate decay factor $\{5,10\}$.
\item \nag/\hb: learning rate: $\{0.001,0.0033,0.01,0.033\}$, momentum $\{0.8,0.9,0.95,0.97\}$, learning rate decay factor $\{5,10\}$.
\item \asgd: learning rate $\{0.01,0.0330,0.1\}$, long step $\{1000,10000,50000\}$, advantage parameter $\{5,10\}$, learning rate decay factor $\{5,10\}$.
\end{itemize}

{\bf Batch Size $128$}: 
Note: (i) parameters chosen by running for $120$ epochs and picking the grid search parameter that yields the smallest validation $0/1$ error. (ii) The validation set decay scheme that we use is that if the validation error does not decay by at least $0.2\%$ every four passes over the data, we cut the learning rate by a constant factor (which is grid searched as described below). The minimal learning rate to use is fixed to be $1\times 10^{-3}$, so that we do not decay far too many times and curtail progress prematurely.
\begin{itemize}
\item \sgd: learning rate: $\{0.01,0.03,0.09,0.27,0.81\}$, learning rate decay factor $\{2,\sqrt{10},5\}$.
\item \nag/\hb: learning rate: $\{0.01,0.03,0.09,0.27\}$, momentum $\{0.5,0.8,0.9,0.95,0.97\}$, learning rate decay factor $\{2,\sqrt{10},5\}$.
\item \asgd: learning rate $\{0.01,0.03,0.09,0.27\}$, long step $\{100,1000,10000\}$, advantage parameter $\{5,10,20\}$, learning rate decay factor $\{2,\sqrt{10},5\}$.
\end{itemize}

As a final remark, for any comparison across algorithms, such as, (i) \asgd vs. \nag, (ii) \asgd vs \hb, we fix the starting learning rate, learning rate decay factor and decay schedule chosen by the best grid search run of \nag/\hb respectively and perform a grid search over the long step and advantage parameter of \asgd. In a similar manner, when we compare (iii) \sgd vs \nag or, (iv) \sgd vs. \hb, we choose the learning rate, learning rate decay factor and decay schedule of \sgd and simply sweep over the momentum parameter of \nag or \hb and choose the momentum that offers the best validation error.

We now present plots of training function value for different algorithms and batch sizes.

\textbf{Effect of minibatch sizes}:
Figure~\ref{fig:cifar-train-bs} plots training function value for batch sizes of $128$ and $8$ for \sgd, \hb and \nag. We notice that in the initial stages of training,~\nag~obtains substantial improvements compared to~\sgd~and~\hb for batch size $128$ but not for batch size $8$. Towards the end of training however,~\nag starts decreasing the training function value rapidly for both the batch sizes. The reason for this phenomenon is not clear. Note however, that at this point, the test error has already stabilized and the algorithms are just overfitting to the data.
\begin{figure}[h!]
	\centering
	\subfigure{\includegraphics[width=0.48\columnwidth]{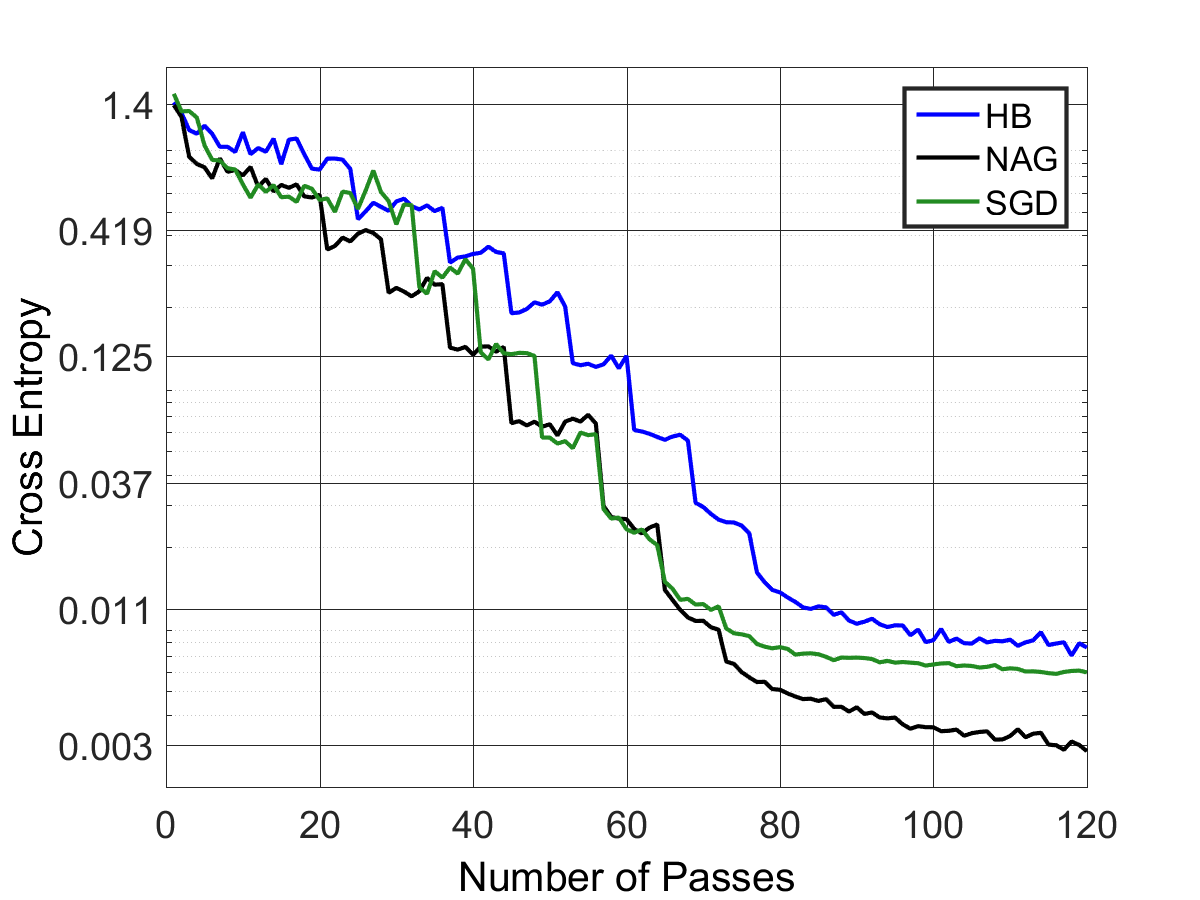}}
	\subfigure{\includegraphics[width=0.48\columnwidth]{files/figures/cifar-bs8/sgd-nag-hb-trainFunc-all-optimized-bs-8.png}}
	\caption{Training loss for batch sizes 128 and 8 respectively for~\sgd,~\hb~and~\nag.}
	\label{fig:cifar-train-bs}
\end{figure}

\textbf{Comparison of~\asgd~with momentum methods}: We now present the training error plots for~\asgd~compared to~\hb~and~\nag~in Figures~\ref{fig:cifar-train-hb-asgd} and~\ref{fig:cifar-train-nag-asgd} respectively. As mentioned earlier, in order to see a clear trend, we constrain the learning rate and decay schedule of~\asgd~to be the same as that of~\hb~and~\nag~respectively, which themselves were learned using grid search. We see similar trends as in the validation error plots from Figures~\ref{fig:cifar-test-hb-asgd} and~\ref{fig:cifar-test-nag-asgd}. Please see the figures and their captions for more details.
\begin{figure}[h!]
	\centering
	\subfigure{\includegraphics[width=0.48\columnwidth]{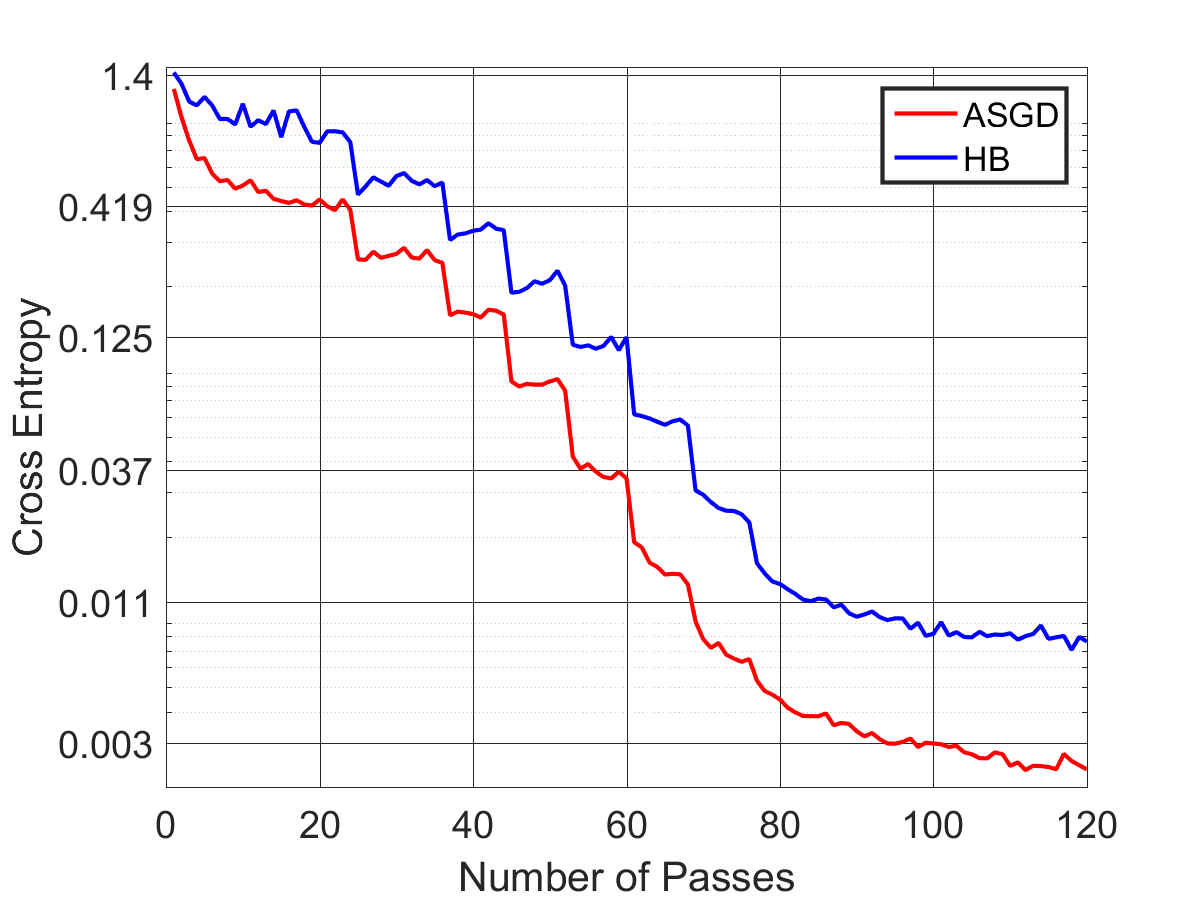}}
	\subfigure{\includegraphics[width=0.48\columnwidth]{files/figures/cifar-bs8/asgd-asgdOpt-HB-trainFunc-bs-8.png}}
%	\subfigure{\includegraphics[width=0.48\columnwidth]{files/figures/cifar-bs8/asgd-hb-trainFunc-bs-8.png}}
	\caption{Training function value for~\asgd~compared to~\hb~for batch sizes 128 and 8 respectively.}
	\label{fig:cifar-train-hb-asgd}
\end{figure}
%\pn{Add batch size 8, ASGD(HB params) vs HB vs ASGD(all optimized).}
\begin{figure}[H]
	\centering
	\subfigure{\includegraphics[width=0.48\columnwidth]{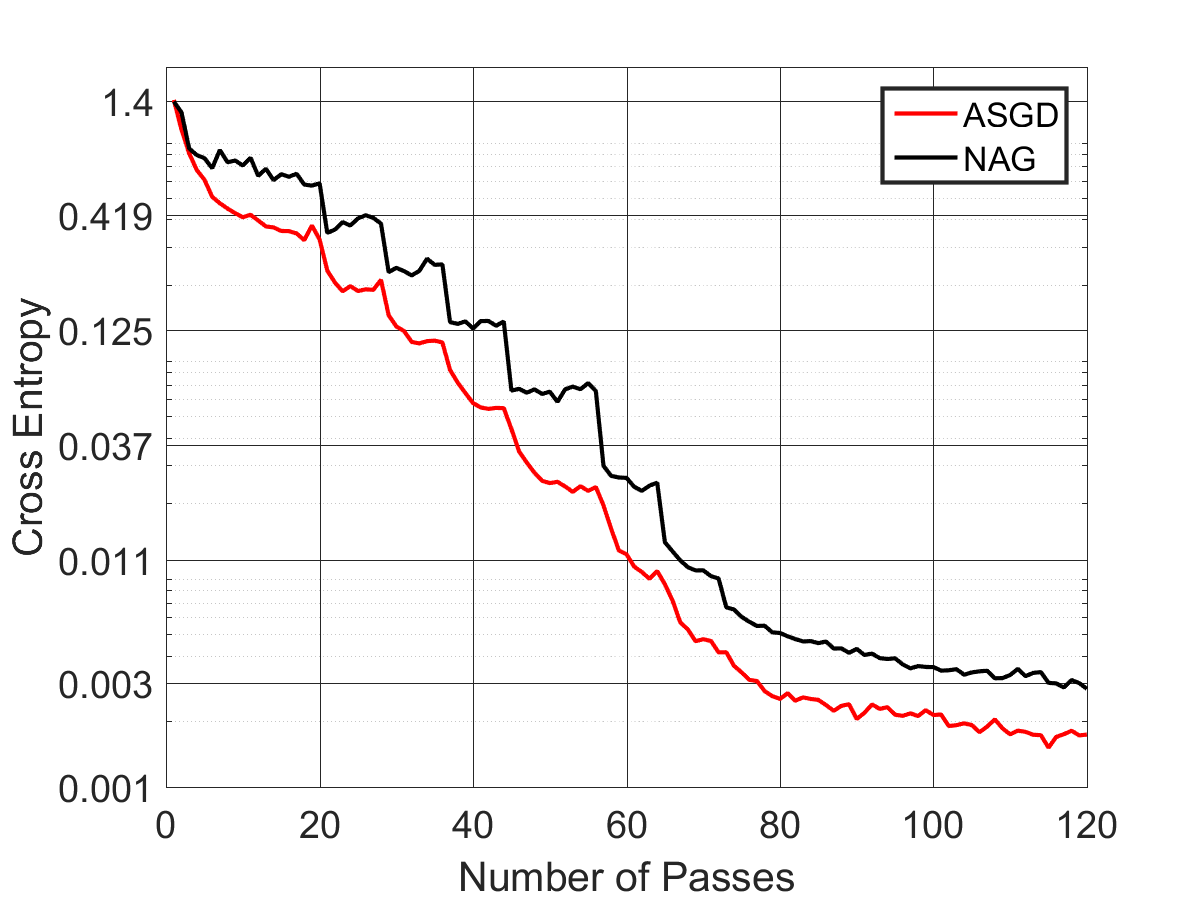}}
	\subfigure{\includegraphics[width=0.48\columnwidth]{files/figures/cifar-bs8/asgd-nag-trainFunc-bs-8.png}}
	\caption{Training function value for \asgd compared to \nag for batch size 128 and 8 respectively.}
	\label{fig:cifar-train-nag-asgd}
\end{figure}

%\subsection{NLP}\label{app:nlp-det}

\end{document}